\def\paragraph{\@startsection{paragraph}{4}%
  \z@\z@{-\fontdimen2\font}%
  {\normalfont\bfseries}}
\def\1{\bm{1}}
\DeclareMathAlphabet{\mathsfit}{\encodingdefault}{\sfdefault}{m}{sl}
\SetMathAlphabet{\mathsfit}{bold}{\encodingdefault}{\sfdefault}{bx}{n}
\newtheorem{theorem}{Theorem}
\newtheorem{lemma}{Lemma}
\newtheorem{proposition}{Proposition}
\newtheorem{remark}{Remark}
\newtheorem{corollary}{Corollary}
\pgfplotsset{compat=1.10}
       \tikzset{
        hatch distance/.store in=\hatchdistance,
        hatch distance=10pt,
        hatch thickness/.store in=\hatchthickness,
        hatch thickness=2pt,
        hatch color/.store in=\hatchcolor,      
        hatch color=black,                      
    }
\pgfqpoint{\hatchdistance}{\hatchdistance}}
\providecommand{\nor}[1]{\ensuremath{\left\lVert {#1} \right\rVert}}
\newtcolorbox{assbox}{colback=black!5!white,colframe=black!75!black}
  \newtcolorbox{thmbox}{colback=blue!5!white,colframe=black!75!black}
\newtheorem{assumption}{Assumption}
\title[Information Theoretic Guarantees For Policy Alignment In LLMs]{Information Theoretic Guarantees For Policy Alignment\\ In Large Language Models}
\author[Y. Mroueh]{Youssef Mroueh}
\address[Y.Mroueh]{IBM Research}
\begin{document}

\begin{abstract}

Policy alignment of large language models  refers to  constrained policy optimization, where the  policy is optimized to maximize a reward while staying close to a reference policy with respect to an $f$-divergence such as the $\mathsf{KL}$  divergence. The best of $n$ alignment policy selects a sample from the reference   policy that has the maximum reward among $n$ independent samples.  For both cases (policy alignment and best of $n$),  recent works showed empirically that the reward improvement of the aligned policy on the reference one scales like $\sqrt{\mathsf{KL}}$, with an explicit bound in $n$ on the $\mathsf{KL}$ for the best of $n$ policy. We show in this paper that the $\sqrt{\mathsf{KL}}$ information theoretic upper bound  holds if the reward under the reference policy has sub-gaussian tails. Moreover, we prove for the best of $n$ policy, that the $\mathsf{KL}$ upper bound can be obtained for any $f$-divergence via a reduction to exponential order statistics owing to the Rényi representation of order statistics, and a data processing inequality.  If additional information is known on the  tails of the aligned policy we show that tighter control on the reward improvement can be obtained via the Rényi divergence. Finally we demonstrate how these upper bounds transfer from proxy rewards to golden rewards which results in a decrease in the golden reward improvement due to overestimation and approximation errors of the proxy reward. 

\end{abstract}

\maketitle

\section{Introduction}
Aligning Large Language Models (LLMs) with human preferences  allows a tradeoff between maintaining the utility of the  pre-trained reference model and the alignment of the model with human values such as safety or other  socio-technical  considerations. Alignment is  becoming a crucial step in LLMs training pipeline,  especially as these models are leveraged in decision making  as well as becoming more and more accessible to the general public. Policy alignment starts by learning a reward model that predicts human preferences, these reward models are typically fine-tuned LLMs that are  trained  on pairwise human preference data \citep{NIPS2017_d5e2c0ad,stiennon2020learning,ouyang2022training,bai2022training}.  The reward is then optimized using \emph{training time alignment} i.e via  policy gradient  based reinforcement learning leading to the so called \emph{Reinforcemnent Learning from Human Feedback} (RLHF)  \citep{NIPS2017_d5e2c0ad}.  RLHF ensures that the reward is maximized while the policy $\pi$ stays close to the initial reference policy $\pi_{\mathrm{ref}}$ in the sense of the Kullback-Leibler divergence $\mathsf{KL} (\pi|| \pi_{\mathrm{ref}})$.  Other variants of these training time  alignment have been proposed via direct preference optimization  \citep{rafailov2024direct} \citep{zhao2023calibrating} \citep{ethayarajh2024kto}.
 Another  important  paradigm for optimizing the reward is \emph{ test time alignment}  via best of $n$ sampling from the reference policy and retaining the sample that maximizes the reward.  The resulting policy is known as the \emph{best of $n$ policy}. The best of $n$ policy is also used  in controlled decoding settings   \citep{yang-klein-2021-fudge,mudgal2023controlled}  and in fine-tuning LLMs to match the best of $n$ policy responses   \citep{touvron2023llama}.

\begin{figure}[ht!]
\vskip -0.1in
\centering
    \includegraphics[width=0.5\textwidth]{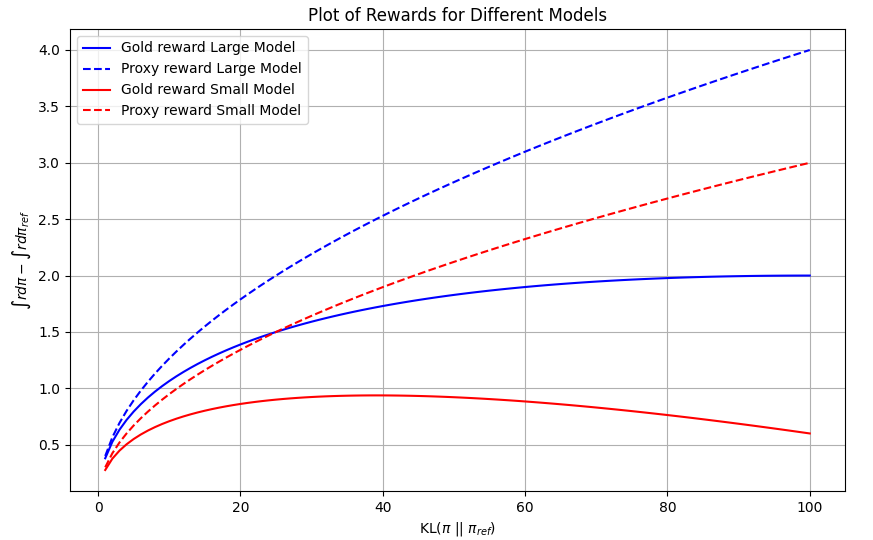}
    \caption{Qualitiative plot of centered rewards vs. KL of Proxy and Gold Rewards for both Best of $n$ and RL policies. (See Fig. 1 a) and b) in \citep{gao2023scaling} for scaling laws in policy alignment).}
    \label{fig:reward_vs_kl}
    \vskip -0.2in
\end{figure}
 \citep{gao2023scaling} and \citep{Goodhart} studied the scaling laws of reward models optimization in both the RL and the best of $n$ setups.  \citep{gao2023scaling} distinguished 
between ``golden reward'' that can be thought of as the golden human preference  and ``proxy reward'' which is trained to predict the golden reward.  For proxy rewards  \citep{gao2023scaling} found experimentally for both RL and  best of $n$ policies that the reward improvement on the reference policy scales as $\sqrt{\mathsf{KL} (\pi|| \pi_{\mathrm{ref}})}$. Similar observations for reward improvement scaling in RL were made in \citep{bai2022training}.  For golden rewards, \citep{gao2023scaling} showed for both RL and best of $n$ policies  that  LLMs that optimize the proxy reward suffer from  over-optimization in the sense that as the policy drifts from the reference policy, optimizing the proxy reward  results in  deterioration of the golden reward. This phenomena is referred to in  \citep{gao2023scaling} \citep{Goodhart} as Goodhart's law. A qualitative plot of scaling laws discovered in  \citep{gao2023scaling} is given in Figure \ref{fig:reward_vs_kl}. For the best of $n$ policy, most  works in this space assumed that  $\mathsf{KL} (\pi|| \pi_{\mathrm{ref}})= \log(n) - \frac{n-1}{n}$ \citep{stiennon2020learning,coste2024reward,nakano2021webgpt,go2024compositional,gao2023scaling}. Recently  \cite{beirami2024theoretical}
 showed that this is in fact an inequality under the assumption that the reward is one to one map (a bijection) and for finite alphabets. The main contribution of this papers are : 
\begin{enumerate}
\item We provide in Theorem \ref{Theo:/boundKL} in Section  \ref{sec:align} a new proof for the best of  $n$ policy inequality $\mathsf{KL} (\pi|| \pi_{\mathrm{ref}})\leq \log(n) - \frac{n-1}{n}$ and show  via a reduction to exponential random variables that it is a  consequence of the data processing inequality of the $\mathsf{KL}$ divergence. We extend this inequality beyond the setup of   \citep{beirami2024theoretical} of one to one rewards and finite alphabets to a more realistic setup of surjective rewards and beyond finite alphabets. We also give conditions under which the equality is met, and extend those inequalities to $f$-divergences and Rényi divergences. 
\item We show in Section \ref{sec:Trans} that the scaling laws  on policy improvement  versus $\mathsf{KL}$ of  \citep{gao2023scaling}  are  information theoretic upper bounds and  are consequences of transportation inequalities with the $\mathsf{KL}$ divergence under sub-gaussian tails of the reward under the reference policy. We discuss how the dependency on $\mathsf{KL}$ is driven only by the tails of the reward under the reference model, and cannot be improved by a better alignment algorithm and can only be improved if the tails of the reference rewards are fatter than sub-gaussian such as sub-gamma or sub-exponential tails. 
\item We study in Theorem \ref{theo:tailadapt} the tightness of these information theoretical upper bounds when the tails of the optimized policy are also known  via new transportation inequalities for the Rényi divergence $D_{\alpha}$ for in $\alpha \in (0,1)$, and show  that the upper bound $\sqrt{\mathsf{KL}}$ can not be met, echoing Goodhart's law of  \citep{gao2023scaling}.
\item We finally  study in Section \ref{sec:gold} the transfer of transportation inequalities from proxy rewards to golden rewards and prove that indeed the golden reward improvement is hindered by ``overestimation'' of the proxy reward as reported empirically in \citep{gao2023scaling}.
\end{enumerate}

\section{The Alignment Problem}\label{sec:align}

\subsection{RLHF: A Constrained Policy Optimization Problem }

Let $\mathcal{X}$ be the space of prompts and  $\mathcal{Y}$ be the space of responses $y \in \mathcal{Y}$ from a LLM conditioned on a prompt $x\in \mathcal{X}$. The reference LLM is  represented as policy $\pi_{\mathrm{ref}}(y|x)$, i.e as a conditional probability on $\mathcal{Y}$ given a prompt $x\in \mathcal{X}$. Let $\rho_{\mathcal{X}}$ be a distribution on prompts, and a $r$ a reward,  $r: \mathcal{X}\times \mathcal{Y} \to \mathbb{R}$, $r$ represents a safety or alignment objective that is desirable  to maximize. \\
Given a reference policy $\pi_{\mathrm{ref}}$, the goal of alignment is to find a policy $\pi^*$ that maximizes the reward $r$ and that it is still close to the original reference policy  for some positive $\Delta >0$: 
\begin{align}
\pi^*_{y|x} = \arg\max_{\pi_{y|x} }\mathbb{E}_{x\sim \rho_{\mathcal{X}}}\mathbb{E}_{y \sim \pi(.|x)}r(x,y) \text{ s.t } \int_{\mathcal{X}} \mathsf{KL}( \pi(y|x) || \pi_{\text{ref}}( y|x) )d\rho_{\mathcal{X}}(x) \leq \Delta,
\label{eq:prob}
\end{align}
where 
$\mathsf{KL}( \pi(y|x) || \pi_{\text{ref}}( y|x) ) = \mathbb{E}_{y\sim \pi{.|x}} \log\left(\frac{ \pi (y|x)}{ \pi_{\text{ref}}(y|x)}\right). $
With some abuse of notation, we write $ \pi(x,y)= \pi(y|x) \rho_{\mathcal{X}}(x)$ and $\pi_{\mathrm{ref}}(x,y)= \pi_{\mathrm{ref}}(y|x) \rho_{\mathcal{X}}(x)$.  Let $\mathcal{P}(\mathcal{X} \times \mathcal{Y})$ be joint probability  defined on $\mathcal{X}\times \mathcal{Y}$ that has $\rho_{\mathcal{X}}$ as marginal on $\mathcal{X}$. Hence we can write the alignment problem \eqref{eq:prob} in a more compact way as follows:
\vskip -0.2in
\begin{equation}
\sup_{\pi \in \mathcal{P}(\mathcal{X}\times \mathcal{Y} )} \int r d\pi  \text{ s.t }  \mathsf{KL} (\pi|| \pi_{\mathrm{ref}}) \leq \Delta. 
\label{eq:RL}
\end{equation}
For $\beta>0$, we can also write a penalized form of this constrained policy optimization problem as follows: $\sup_{\pi \in \mathcal{P}(\mathcal{X}\times \mathcal{Y} )} \int r d\pi -\frac{1}{\beta} \mathsf{KL} (\pi || \pi_{\mathrm{ref}} ).$
It is easy to see that the optimal policy of the penalized problem is given by:
\begin{equation}
  \pi_{\beta,r}(y|x)  = \frac{\exp(\beta r(x,y) ) \pi_{\mathrm{ref}}(y|x) }{\int \exp(\beta r(x,y) ) d\pi_{\mathrm{ref}}(y|x) }, \rho_{\mathcal{X}}  \text{almost surely}.  
  \label{eq:PenRL}
  \end{equation}
The constrained problem \eqref{eq:RL} has a similar solution (See for e.g \citep{yang2024asymptotics}):
  \begin{align}
  \pi_{\lambda_{\Delta},r}(y|x)  = \frac{\exp(\frac{ r(x,y)}{\lambda_{\Delta}} ) \pi_{\mathrm{ref}}(y|x) }{\int \exp(\frac{r(x,y)}{\lambda_{\Delta}} ) d\pi_{\mathrm{ref}}(y|x) }, \rho_{\mathcal{X}}  \text{almost surely},  
  \label{eq:rlpolicyoptimal}
  \end{align}
 where $\lambda_{\Delta} >0$ is a lagrangian that  satisfies $\int_{\mathcal{X}} \mathsf{KL}( \pi_{\lambda_{\Delta},r}(y|x) || \pi_{\text{ref}}( y|x) )d\rho_{\mathcal{X}}(x) = \Delta. $

\subsection{Best of $n$ Policy Alignment}

\noindent Let $X$ be the random variable associated with prompts such that Law($X)= \rho_{\mathcal{X}}$.  Let $Y$ be the random variable associated with the conditional response of $\pi_{\mathrm{ref}}$ given  $X$. Define the conditional reward of the reference policy  :
\[R(Y)| X: =r(X,Y) \text{ where Y } \sim \pi_{\text{ref}}(.|X) ,\]
we assume that $R(Y)| X$ admits a CDF denoted as $F_{R (Y) |X}$ and let $F^{-1}_{R (Y) |X}$ be its quantile:
$$F^{(-1)}_{R (Y) |X}(p)= \inf \{\eta : F_{R (Y) |X}(\eta) \geq p \} \text{ for } p \in [0,1].$$
 Let $Y_1\dots Y_n$ be independent samples from $\pi_{\text{ref}}(.|X)$. We define the best of $n$ reward as follows:
\begin{equation}
R^{(n)}(Y)|X  = \max_{i=1\dots n} R(Y_i )|X, 
\end{equation}
this the maximum of $n$ iid random variables with a common CDF $ F_{R (Y) |X}$.
The best of $n$ policy corresponds to
$Y^{(n)} |X := \arg\max_{i=1\dots n} r(X, Y_i) .$
We note $\pi^{(n)}_{r,\mathrm{ref}}(.|X)$  the law of $Y^{(n)}|X$.  $   \pi^{(n)}_{r,\mathrm{ref}} $ is referred to  as the best of $n$ alignment policy. We consider  two setups for the reward:
 \begin{assumption}
 We assume that the reward $r$ is a one to one map for a  fixed $x$, and admits an inverse $h_x: \mathbb{R}\to \mathcal{Y}$ such that $h_x(r(x,y))=y$.
 \label{assum:rewardBij}
 \end{assumption}
This assumption was considered in \citep{beirami2024theoretical}. Nevertheless this assumption is strong and not usually meet in practice, we weaken this assumption to the following:
 \begin{assumption}
We assume that there is a stochastic map $H_{X}$ such that $H_{X} (R_{Y|X})) \overset{d}{=} Y|X $ and  $H_{X} (R_{Y^{(n)}|X})) \overset{d}{=} Y^{(n)}|X $. 
 \label{assum:rewardStochasticmap}
 \end{assumption}
 Under  Assumption \ref{assum:rewardStochasticmap}, the reward can be surjective which is more realistic but we assume that there is a stochastic map that ensures invertibility not point-wise but on a distribution level.  Our assumption means that we have conditionnaly on $X$: $R|X  \to  Y|X$ form a markov chain i.e exists $A(Y|R,X)$ so that $P_{Y|X} =A(Y|R,X) P_{R|X},$ and 
 $P_{Y^{(n)}|X} =A(Y|R,X) P_{R^{(n)}|X}.$\\
 
\noindent \textbf{Best of $n$ Policy $\mathsf{KL}$ Guarantees: A reduction to Exponentials.} In what follows for random variables $Z,Z'$ with laws $p_{Z},p_{Z'}$ we write interchangeably: 
 $\mathsf{KL}(p_{Z}|| p_{Z'})= \mathsf{KL} (Z||Z').$
 \noindent Let us start by looking at
$ \mathsf{KL} \left[  R^{(n)}(Y) || R(Y) \Big|X  \right]$ the $\mathsf{KL}$ divergence between the conditional reward of the best of $n$ policy and that of the reference policy. 
 Let $E\sim Exp(1)$, the optimal transport map $F^{-1}_{R(Y)|X}\circ F_{E}$ from the exponential distribution  $E$ to $R(Y)|X$ (See for example Theorem 2.5 in \citep{santambrogio2015otam}: $E$ is atomless, but $R(Y)|X$ can be discrete valued) allows us to write:
 \begin{equation}
 R(Y)|X \stackrel{d}{=} F^{-1}_{R(Y)|X}\circ F_{E} (E),
 \label{eq:OTrep}
 \end{equation}
 where $ \stackrel{d}{=}$ means equality in distribution. 
 On the other hand, let $R^{(1)}(Y)|X \leq \dots \leq R^{(n)} (Y)|X$ be the order statistics of the rewards of  $n$ independent samples $Y_i,i=1\dots n$, $Y_i \sim \pi_{\mathrm{ref}}(.|X)$. The order statistics refer to sorting the random variable from the minimum (index $(1)$) to the maximum (index $(n)$).  Consider $n$ independent exponential $E_1,\dots E_n$, where $E_i\sim \exp(1)$, and their order statistics $E^{(1)}\leq E^{(2)}\leq \dots E^{(n)}$.  The Rényi representation of order statistics \citep{Rnyi1953OnTT},
 similar to the Optimal Transport  (OT) representation allows  us to express the distribution of the order statistics of the rewards in terms of the  order statistics of exponentials as follows:
 \begin{equation}
 \left( R^{(1)}(Y) | X ,\dots, R^{(n)}(Y)|X  \right) \overset{d}{=} \left(F^{-1}_{R(Y)|X}\circ F_{E} (E^{(1)}),\dots, F^{-1}_{R(Y)|X}\circ F_{E} (E^{(n)})   \right).
 \label{eq:renyiRep}
 \end{equation}
 The central idea in the Rényi representation is that the mapping $F^{-1}_{R(Y)|X}\circ F_{E} $ is monotonic and hence ordering preserving and by the OT representation each component is distributed as $R(Y)|X$.  See \citep{boucheron2012concentration} for more account on the Rényi representation of order statistics. 
 
 Hence using the OT representation in \eqref{eq:OTrep} and the Rényi representation of the maximum \eqref{eq:renyiRep}, we can reduce the $\mathsf{KL}$ between the rewards to a $\mathsf{KL}$ on functions of exponentials and their order statistics: 
 \begin{align}
  \mathsf{KL} \left[  R^{(n)}(Y) || R(Y) \Big|X  \right] & =   \mathsf{KL} \left( F^{-1}_{R(Y)|X}\circ F_{E} (E^{(n)}) \Big| \Big| F^{-1}_{R(Y)|X}\circ F_{E} (E)  \right) \nonumber \\
  &=  \mathsf{KL} (T_{X} (E^{(n) }) ||T_{X}( E)),
  \label{eq:OTandRenyi}
 \end{align}
where $T_{X}= F^{-1}_{R(Y)|X}\circ F_{E} = F^{-1}_{(r(X,.))_{\sharp}\pi_{\mathrm{ref}}(.|X)}  \circ F_{E}.$

Under Assumption \ref{assum:rewardBij} we can write samples from  the best of $n$ policy  as 
$ Y^{(n)}|X = h_{X}( R^{n}(Y)) | X $ 
 and from  the reference policy as
$ Y|X = h_{X}( R(Y)) | X .$ Hence we have by the data processing inequality (DPI) for the $\mathsf{KL}$ divergence  (See for e.g  \citep{infotheory}) under Assumption \ref{assum:rewardBij}:
 \begin{align}
 \mathsf{KL}(   \pi^{(n)}_{r,\mathrm{ref}} || \pi_{\text{ref}} |X ) &= \mathsf{KL}(  Y^{(n)} || Y | X) \nonumber\\
 &= \mathsf{KL} ( h_{X}( R^{n}(Y)) || h_{X}( R(Y))  | X ) \nonumber \\
 & =  \mathsf{KL}(R^{n}(Y) || R(Y) |X ) \text{ By Assumption \ref{assum:rewardBij} $h_X$ is one to one and DPI is an equality~}  \nonumber\\ 
 &= \mathsf{KL}(T_{X}(E^{(n)})||T_{X}( E)) \text{ Rényi and Optimal Transport Representations (Eq \eqref{eq:OTandRenyi}})
 \label{eq:chain}
 \end{align}
Recall that $T_{X}= F^{-1}_{R(Y)|X}\circ F_{E} $, $F_{E}$ is one to one. If the space $\mathcal{Y}$ is finite, $R(Y|X)$ has a discontinuous CDF hence  not strictly monotonic. It follows that its quantile $F^{-1}_{R(Y)|X}$ is not a one to one map and $T_{X}$ as a result is not a one to one map and hence we have by DPI (that is an inequality in this case since $T_{X}$ is not one to one):
\begin{equation}
\mathsf{KL}(T_{X}(E^{(n)})||T_{X}( E)) \leq \mathsf{KL}(E^{(n)}|| E)
\label{eq:Yfinite}
\end{equation}
 If the space $\mathcal{Y}$ is infinite and we assume  that $R(Y|X)$ is continuous and strictly monotonic then $F^{-1}_{R(Y)|X}$ is a one to one map, and as a result $T_{X}$ is a one to one map and the DPI is an equality in this case: 
 \begin{equation}
 \mathsf{KL}(T_{X}(E^{(n)})||T_{X}( E)) = \mathsf{KL}(E^{(n)}|| E)
 \label{eq:yinfinite}
\end{equation}

Hence under Assumption \ref{assum:rewardBij} and for $\mathcal{Y}$ finite combining \eqref{eq:chain}  and \eqref{eq:Yfinite} we have:
\begin{equation}
\mathsf{KL}(   \pi^{(n)}_{r,\mathrm{ref}} || \pi_{\text{ref}} |X ) \leq \mathsf{KL}(E^{(n)}|| E),
\end{equation}
and under Assumption \ref{assum:rewardBij} and for $\mathcal{Y}$ infinite and assuming $F_{R(Y)|X}$ is continuous  and strictly monotonic, combining \eqref{eq:chain} and \eqref{eq:yinfinite} we have:
\begin{equation}
\mathsf{KL}(   \pi^{(n)}_{r,\mathrm{ref}} || \pi_{\text{ref}} |X ) = \mathsf{KL}(E^{(n)}|| E).
\end{equation}
Under the more realistic Assumption \ref{assum:rewardStochasticmap} we can also apply the DPI on the stochastic map $H_{X}$, since DPI also holds for stochastic maps ( under our assumption $R| X \to Y|X$ see for example \citep{renyiPaper} Example 2)
\begin{align}
 \mathsf{KL}(   \pi^{(n)}_{r,\mathrm{ref}} || \pi_{\text{ref}} |X )& =  \mathsf{KL}(   H_{X}( R^{n}(Y)) || H_{X}( R(Y))  | X ))\nonumber\\
 &\leq \mathsf{KL}(R^{n}(Y) || R(Y) |X )= \mathsf{KL}(T_{X}(E^{(n)})||T_{X}( E)),
 \end{align}
 and hence under Assumption \ref{assum:rewardStochasticmap}  regardless whether $T
 _{X}$ is a one to one map or not, thus we have: $\mathsf{KL}(   \pi^{(n)}_{r,\mathrm{ref}} || \pi_{\text{ref}} |X ) \leq \mathsf{KL}(E^{(n)}|| E)$. The following Lemma gives a closed form expression for $\mathsf{KL}(E^{(n)} || E) $:
 \begin{lemma} [$\mathsf{KL}$ Between Exponential and Maximum of Exponentials] Let $E\sim \exp(1)$, and $E_1,\dots E_n$ be iid exponentials and $E^{(n)}$ their maximum, we have:
\begin{equation}
\mathsf{KL}(E^{(n)} || E) = \log(n)-\frac{n-1}{n}.
\end{equation} 
\label{lemma:KLexp}
\vskip -2in
\end{lemma}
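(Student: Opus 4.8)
The cleanest route is a direct computation of the likelihood ratio followed by a single change of variables. Since $E\sim\exp(1)$ has CDF $F_E(t)=1-e^{-t}$ and density $f_E(t)=e^{-t}$ on $[0,\infty)$, the maximum $E^{(n)}$ of $n$ i.i.d.\ copies has CDF $F_{E^{(n)}}(t)=(1-e^{-t})^n$ and density $f_{E^{(n)}}(t)=n(1-e^{-t})^{n-1}e^{-t}$. Hence the Radon--Nikodym derivative is
\begin{equation}
\frac{f_{E^{(n)}}(t)}{f_E(t)}=n\,(1-e^{-t})^{n-1},
\end{equation}
and therefore
\begin{equation}
\mathsf{KL}(E^{(n)}\|E)=\int_0^\infty n\,(1-e^{-t})^{n-1}e^{-t}\,\log\!\big(n\,(1-e^{-t})^{n-1}\big)\,dt.
\end{equation}
Substituting $u=1-e^{-t}=F_E(t)$, so that $du=e^{-t}\,dt$ and $u$ ranges over $[0,1]$, turns this into $\int_0^1 n u^{n-1}\log(n u^{n-1})\,du$; this is exactly the assertion that $U:=1-e^{-E^{(n)}}$ is $\mathrm{Beta}(n,1)$-distributed.

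\textbf{Key step.} Split the logarithm as $\log(n u^{n-1})=\log n+(n-1)\log u$, giving
\begin{equation}
\mathsf{KL}(E^{(n)}\|E)=\log n\int_0^1 n u^{n-1}\,du+(n-1)\int_0^1 n u^{n-1}\log u\,du.
\end{equation}
The first integral is $\log n\cdot\big[u^n\big]_0^1=\log n$. For the second, use the elementary identity $\int_0^1 u^{n-1}\log u\,du=-\tfrac{1}{n^2}$ (obtained by one integration by parts, or by differentiating $\int_0^1 u^{s-1}\,du=1/s$ in $s$), so that $\int_0^1 n u^{n-1}\log u\,du=-\tfrac1n$, and the second term equals $-\tfrac{n-1}{n}$. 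Adding the two contributions yields $\mathsf{KL}(E^{(n)}\|E)=\log n-\tfrac{n-1}{n}$, as claimed.

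\textbf{Remarks on difficulty.} There is no real obstacle here: once the likelihood ratio $n(1-e^{-t})^{n-1}$ is identified, everything reduces to the moment $\mathbb{E}[\log U]$ for $U\sim\mathrm{Beta}(n,1)$, equivalently the value $\int_0^1 u^{n-1}\log u\,du=-1/n^2$, which is the only computation that is not completely immediate. (Equivalently one may cite $\mathbb{E}[\log U]=\psi(n)-\psi(n+1)=-1/n$ for the digamma function.) One could alternatively invoke the Rényi representation $E^{(n)}\overset{d}{=}\sum_{k=1}^n E_k/k$ with $E_k$ i.i.d.\ $\exp(1)$, but this makes the density of $E^{(n)}$ more awkward and the change-of-variables argument above is shorter, so I would present it as the proof.
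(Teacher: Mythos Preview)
Your proof is correct and follows essentially the same route as the paper: identify the likelihood ratio $n(1-e^{-t})^{n-1}$, substitute $u=1-e^{-t}$ to obtain $\int_0^1 n u^{n-1}\log(n u^{n-1})\,du$, split the logarithm, and evaluate $\int_0^1 n u^{n-1}\log u\,du=-1/n$. The only cosmetic difference is that the paper writes this last integral as $\int_0^1 d\!\left(u^n\log u-\tfrac{u^n}{n}\right)$ rather than citing $\int_0^1 u^{n-1}\log u\,du=-1/n^2$, but this is the same computation.
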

\vskip -0.2in
Hence we conclude with the following result:
\vskip -0.4in
\begin{theorem} The best of n policy satisfies under  \hyperref[i]{(i)}
Assumption \ref{assum:rewardBij} (reward one to one) and for finite $\mathcal{Y}$ or  under  \hyperref[ii]{(ii)} Assumption \ref{assum:rewardStochasticmap} (existence of stochastic ``inverse'') :
\begin{equation}
 \mathsf{KL}(    \pi^{(n)}_{r,\mathrm{ref}} || \pi_{\mathrm{ref}} ) \leq\mathsf{KL}(E^{(n)} || E)  = \log(n) -\frac{n-1}{n}.
 \end{equation}
Under Assumption \ref{assum:rewardBij}, for infinite $\mathcal{Y}$  and assuming $F_{R(Y|X)}$ is continuous and strictly increasing for all $X$ we have:
\begin{equation}
 \mathsf{KL}(    \pi^{(n)}_{r,\mathrm{ref}} || \pi_{\mathrm{ref}} ) = \mathsf{KL}(E^{(n)} || E)  = \log(n) -\frac{n-1}{n}.
 \end{equation}
 \label{Theo:/boundKL}
\end{theorem}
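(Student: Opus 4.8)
The plan is to collect the conditional estimates already derived in the text, convert the last one into the explicit constant via Lemma \ref{lemma:KLexp}, and then strip off the conditioning on $X$ with the chain rule for the $\mathsf{KL}$ divergence. Concretely, I would first record the conditional bound case by case. Under Assumption \ref{assum:rewardBij} with $\mathcal{Y}$ finite, combining the chain of identities \eqref{eq:chain} (an equality there because $h_X$ is one to one, so the DPI saturates) with the data processing inequality \eqref{eq:Yfinite} for the non-injective map $T_X$ gives $\mathsf{KL}(\pi^{(n)}_{r,\mathrm{ref}}||\pi_{\mathrm{ref}}\,|\,X)\le \mathsf{KL}(E^{(n)}||E)$ for $\rho_{\mathcal{X}}$-a.e.\ $X$. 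Under Assumption \ref{assum:rewardStochasticmap}, the same conditional bound follows by applying the DPI to the stochastic kernel $H_X$ together with the reduction \eqref{eq:OTandRenyi}. Finally, under Assumption \ref{assum:rewardBij} with $\mathcal{Y}$ infinite and $F_{R(Y)|X}$ continuous and strictly increasing, $T_X=F^{-1}_{R(Y)|X}\circ F_E$ is a bijection, so \eqref{eq:yinfinite} turns the DPI into an equality and $\mathsf{KL}(\pi^{(n)}_{r,\mathrm{ref}}||\pi_{\mathrm{ref}}\,|\,X)=\mathsf{KL}(E^{(n)}||E)$ a.e.

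Next I would invoke Lemma \ref{lemma:KLexp} to rewrite $\mathsf{KL}(E^{(n)}||E)=\log(n)-\frac{n-1}{n}$, the key point being that this quantity does not depend on $X$. To pass from conditional to unconditional, observe that $\pi^{(n)}_{r,\mathrm{ref}}(x,y)=\pi^{(n)}_{r,\mathrm{ref}}(y|x)\rho_{\mathcal{X}}(x)$ and $\pi_{\mathrm{ref}}(x,y)=\pi_{\mathrm{ref}}(y|x)\rho_{\mathcal{X}}(x)$ share the same $\mathcal{X}$-marginal $\rho_{\mathcal{X}}$, so the chain rule for $\mathsf{KL}$ yields
\[
\mathsf{KL}(\pi^{(n)}_{r,\mathrm{ref}}||\pi_{\mathrm{ref}}) = \mathbb{E}_{X\sim\rho_{\mathcal{X}}}\,\mathsf{KL}\big(\pi^{(n)}_{r,\mathrm{ref}}(\cdot|X)\,||\,\pi_{\mathrm{ref}}(\cdot|X)\big),
\]
with the marginal term vanishing. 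Taking expectations over $X$ in the conditional estimates above and using that the right-hand side is the constant $\log(n)-\frac{n-1}{n}$ produces the inequality in cases \hyperref[i]{(i)}/\hyperref[ii]{(ii)} and the equality in the continuous strictly-increasing case.

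The argument is essentially bookkeeping, so the only places that warrant care are the hypotheses of the DPI. In the stochastic-kernel case one must check that $R\,|\,X\to Y\,|\,X$ is a genuine Markov kernel and that \emph{both} $P_{Y|X}$ and $P_{Y^{(n)}|X}$ arise by pushing the respective reward laws through the \emph{same} kernel $A(Y|R,X)$ — which is exactly what Assumption \ref{assum:rewardStochasticmap} is designed to guarantee — and one should note that the DPI holds for such stochastic maps (as cited). In the infinite case, one must verify that continuity and strict monotonicity of $F_{R(Y)|X}$ genuinely make $T_X$ injective on the support of $E$, so that no information is lost and the DPI becomes an equality; this is the main subtlety, since a discontinuity or a flat stretch of $F_{R(Y)|X}$ would break injectivity and only leave the inequality. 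Beyond these points the proof is a direct concatenation of \eqref{eq:OTandRenyi}, the DPI, Lemma \ref{lemma:KLexp}, and the chain rule.
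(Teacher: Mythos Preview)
Your proposal is correct and follows essentially the same approach as the paper: the paper's proof simply states that the result follows by combining Lemma \ref{lemma:KLexp} with the conditional analysis preceding the theorem and then taking expectation over $X$. Your write-up spells out the chain-rule step and the case distinctions more carefully, but there is no methodological difference.
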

\begin{proof} Combining Lemma \ref{lemma:KLexp}, the analysis above  and taking expectation on $X$ we obtain the result.
\end{proof}

 \cite{beirami2024theoretical} showed this result under condition (i) which is not a realistic setting and used the finiteness of $\mathcal{Y}$ to provide a direct proof.  Our analysis via chaining DPI and using OT and Rényi representations to reduce the problem to exponentials allows us  to extend  the result to a more realistic setup under condition (ii) i.e the existence of a stochastic ``inverse", without any assumption on $\mathcal{Y}$. Furthermore we unveil under which conditions the equality holds that was assumed to hold  in previous works \citep{stiennon2020learning} \citep{coste2024reward,nakano2021webgpt,go2024compositional} \citep{Goodhart} \citep{gao2023scaling}.\\  


Our approach of reduction to exponentials using Rényi representation of order statistics and  data processing inequalities  extends to bounding the $f$- divergence $\mathrm{D}_f(    \pi^{(n)}_{r,\mathrm{ref}} || \pi_{\mathrm{ref}} )$
as well as the $\alpha$ Rényi divergence. 
The Rényi divergence for $\alpha \in (0,1) \cup (1,\infty)$ is defined as follows:
 \[D_{\alpha}(P||Q) = \frac{1}{(\alpha-1)} \log \left( \int p^{\alpha}(x) q^{1-\alpha}(x) dx \right)\]
 the limit as $\alpha \to 1$ coincides with $\mathsf{KL}$, i.e: $D_{1}(P||Q)) = \mathsf{KL} (P||Q)$. These bounds are summarized in Table \ref{tab:f_divergence}. Full proofs and theorems are in the Appendix. \\

\begin{table}[htbp]
    \centering
    \begin{tabular}{|c|c|c|}
        \hline
        Divergence & $f(x)$ & Bound on $ \mathrm{D}_f(    \pi^{(n)}_{r,\mathrm{ref}} || \pi_{\mathrm{ref}} )$ \\
        \hline
        $\mathsf{KL}$ &$x\log(x)$ &  $\log(n) -\frac{n-1}{n}$ \\
        \hline
        Chi-squared & $(x-1)^2$ &$\frac{(n-1)^2}{2n-1}$ \\
        \hline
        Total Variation &$f(x)= \frac{1}{2} |x-1| $ &  $(\frac{1}{n})^{\frac{1}{n-1}}-  (\frac{1}{n})^{\frac{n}{n-1}}$ \\
        \hline
        Hellinger distance &$(1-\sqrt{x})^2$ & $ 2\frac{(1-\sqrt{n})^2}{n+1}$\\
        \hline
        Forward $\mathsf{KL}$ &$-\log(x)$ &$n-1 -\log(n)$ \\
        \hline
       $\alpha$ Rényi Divergence &NA &$\frac{1}{(\alpha-1)} \log \left( \frac{n^{\alpha}}{\alpha(n-1) +1}\right)$\\
               \hline
    \end{tabular}
        \caption{Best of $n$ policy $f$-Divergence and $\alpha$ Rényi Divergence Bounds.}
            \label{tab:f_divergence}
        \vskip -0.15in
\end{table}

\noindent \textbf{Best of $n$-Policy Dominance on the Reference Policy.}
The following proposition shows that the best of $n$ policy leads to an improved reward on average:
\begin{proposition}
$R^{(n)}$ dominates $R$ in the first order dominance that is $R^{(n)}$ dominates $R$ on all quantiles:
$ Q_{R^{(n)}} (t) \geq Q_{R}(t), \forall t \in [0,1].$ It follows that we have $\mathbb{E} R^{(n)} \geq \mathbb{E} R$. \\ 
\label{pro:Dominance}
\end{proposition}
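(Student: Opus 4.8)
The plan is to prove the stronger statement first---that $R^{(n)}$ stochastically dominates $R$ in the first-order sense, i.e. $Q_{R^{(n)}}(t) \geq Q_R(t)$ for all $t \in [0,1]$---and then deduce the expectation inequality as an immediate corollary. The key observation is that first-order stochastic dominance can be checked directly on CDFs: $R^{(n)}$ dominates $R$ if and only if $F_{R^{(n)}}(\eta) \leq F_{R}(\eta)$ for every $\eta \in \mathbb{R}$, which is equivalent (by the standard duality between CDFs and quantile functions, using the definition of the quantile $F^{(-1)}(p) = \inf\{\eta : F(\eta) \geq p\}$ given in the excerpt) to the quantile inequality $Q_{R^{(n)}}(t) \geq Q_R(t)$ for all $t$.

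First I would compute the CDF of $R^{(n)}$. Conditioning on $X$, the variable $R^{(n)}(Y)|X$ is the maximum of $n$ i.i.d.\ copies of $R(Y)|X$, so $F_{R^{(n)}(Y)|X}(\eta) = \big(F_{R(Y)|X}(\eta)\big)^n$. Since $F_{R(Y)|X}(\eta) \in [0,1]$ and $n \geq 1$, we have $\big(F_{R(Y)|X}(\eta)\big)^n \leq F_{R(Y)|X}(\eta)$ pointwise in $\eta$, hence $F_{R^{(n)}(Y)|X}(\eta) \leq F_{R(Y)|X}(\eta)$ for all $\eta$ and all $X$. Integrating over $X \sim \rho_{\mathcal{X}}$ preserves the inequality, giving $F_{R^{(n)}}(\eta) \leq F_{R}(\eta)$ unconditionally. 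Alternatively, one can bypass conditioning and argue directly from the Rényi/OT representation in \eqref{eq:renyiRep}: since $E^{(n)} \geq E^{(1)} \overset{d}{=} E$ pointwise along the coupling and $T_X = F^{-1}_{R(Y)|X}\circ F_E$ is monotone nondecreasing, we get $T_X(E^{(n)}) \geq T_X(E)$ almost surely, which is an even stronger (coupling-level) statement that implies first-order dominance.

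Next I would translate the CDF inequality into the quantile inequality: for fixed $t \in [0,1]$, $Q_{R^{(n)}}(t) = \inf\{\eta : F_{R^{(n)}}(\eta) \geq t\}$ and $Q_R(t) = \inf\{\eta : F_R(\eta) \geq t\}$; since $F_{R^{(n)}}(\eta) \leq F_R(\eta)$, any $\eta$ with $F_{R^{(n)}}(\eta) \geq t$ also satisfies $F_R(\eta) \geq t$, so the infimum defining $Q_{R^{(n)}}(t)$ is taken over a subset of the set defining $Q_R(t)$, whence $Q_{R^{(n)}}(t) \geq Q_R(t)$. Finally, the expectation bound follows from the quantile representation of the mean: $\mathbb{E} R = \int_0^1 Q_R(t)\, dt$ and $\mathbb{E} R^{(n)} = \int_0^1 Q_{R^{(n)}}(t)\, dt$ (valid whenever the expectations exist), and integrating the pointwise quantile inequality over $t \in [0,1]$ yields $\mathbb{E} R^{(n)} \geq \mathbb{E} R$.

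I do not anticipate a serious obstacle here; the only point requiring mild care is the handling of atoms/discontinuities in $F_{R(Y)|X}$ (so that "max of $n$ i.i.d." and the quantile manipulations are stated with the correct one-sided conventions), but the infimum-based definition of the quantile already given in the excerpt is exactly the right convention and makes the CDF-to-quantile step go through without extra hypotheses. The coupling argument via the Rényi representation is the cleanest route and sidesteps even this minor subtlety, since it produces dominance as an almost-sure inequality between explicitly coupled random variables before any integration.
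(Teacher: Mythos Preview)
Your primary route---computing the conditional CDF $F_{R^{(n)}|X}(\eta) = \big(F_{R|X}(\eta)\big)^n \leq F_{R|X}(\eta)$, integrating over $X$, then passing to quantiles via the infimum definition---is correct and more direct than the paper's proof. The paper instead first establishes first-order dominance at the level of the exponentials (from $F_{E^{(n)}}(x) = (F_E(x))^n \leq F_E(x)$), invokes the coupling characterization of FSD to get $E^{(n)} \succcurlyeq E$, and then pushes the coupling forward through the monotone map $T = F^{-1}_{R} \circ F_E$ to conclude $R^{(n)} \succcurlyeq R$. Your approach bypasses the exponential reduction entirely, which is the simpler path for this particular statement; the paper's route has the virtue of staying within the reduction-to-exponentials framework used for the $\mathsf{KL}$ and $f$-divergence bounds.

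One small slip in your alternative argument: in the paper's notation $E^{(1)}$ is the \emph{minimum} order statistic of $n$ i.i.d.\ $\exp(1)$ variables, which is $\exp(n)$-distributed, so $E^{(1)} \overset{d}{=} E$ is false. What you presumably intend is to couple $E^{(n)}$ with any fixed $E_i$ (so $E^{(n)} \geq E_i$ a.s.\ and $E_i \overset{d}{=} E$), which does give the almost-sure coupling inequality you want; the paper obtains the same coupling conclusion but via the CDF-to-FSD-to-coupling implication rather than this pointwise observation.
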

 
\noindent \textbf{Best of $n$ Policy and  RL Policy} The following proposition  discusses the sub-optimality of the best of $n$ policy with respect to the alignment RL objective given in \eqref{eq:prob}: 
 \begin{proposition} Assume a bounded reward in $[-M,M]$.  For $\Delta>0$ and  $n =\exp(\Delta)$ the best of $n$ policy $   \pi^{(n)}_{r,\mathrm{ref}}$ and the $\Delta$ Constrained RL policy $ \pi_{\lambda_{\Delta},r}$ (given in \eqref{eq:rlpolicyoptimal})
 satisfy:
 \[ \mathsf{KL}(   \pi^{(n)}_{r,\mathrm{ref}}|| \pi_{\lambda_{\Delta},r} ) \leq \frac{\sqrt{2\pi} M(e^{\frac{2M}{\lambda_{\Delta}}}-1 )}{\lambda_{\Delta}} \exp(-\frac{\Delta}{2}).\]
 \label{pro:BNRL}
 \vskip-4in
 \end{proposition}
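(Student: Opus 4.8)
The plan is to turn the statement into an upper bound on the gap between the average rewards of the two policies via an exact decomposition of the relative entropy, and then to control that gap through the Optimal Transport / Rényi representations of \Secref{sec:align} together with the boundedness of $r$.

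\textbf{Step 1 (reduction to a reward gap).} Write $Z_{\lambda_\Delta}(x):=\int\exp(r(x,y)/\lambda_\Delta)\,d\pi_{\mathrm{ref}}(y|x)$. For any policy $\pi$ with $\mathcal X$-marginal $\rho_{\mathcal X}$, using $\log\frac{d\pi_{\lambda_\Delta,r}}{d\pi_{\mathrm{ref}}}(x,y)=\frac{r(x,y)}{\lambda_\Delta}-\log Z_{\lambda_\Delta}(x)$ and that the $x$-part integrates against the common marginal,
\[
\mathsf{KL}(\pi\|\pi_{\lambda_\Delta,r})=\mathsf{KL}(\pi\|\pi_{\mathrm{ref}})-\tfrac1{\lambda_\Delta}\textstyle\int r\,d\pi+\mathbb E_X[\log Z_{\lambda_\Delta}(X)].
\]
Taking $\pi=\pi_{\lambda_\Delta,r}$ (left side $0$) identifies $\mathbb E_X[\log Z_{\lambda_\Delta}(X)]=\tfrac1{\lambda_\Delta}\int r\,d\pi_{\lambda_\Delta,r}-\mathsf{KL}(\pi_{\lambda_\Delta,r}\|\pi_{\mathrm{ref}})$, and substituting this back with $\pi=\pi^{(n)}_{r,\mathrm{ref}}$ yields the exact identity
\[
\mathsf{KL}(\pi^{(n)}_{r,\mathrm{ref}}\|\pi_{\lambda_\Delta,r})=\mathsf{KL}(\pi^{(n)}_{r,\mathrm{ref}}\|\pi_{\mathrm{ref}})-\mathsf{KL}(\pi_{\lambda_\Delta,r}\|\pi_{\mathrm{ref}})+\tfrac1{\lambda_\Delta}\Big(\textstyle\int r\,d\pi_{\lambda_\Delta,r}-\int r\,d\pi^{(n)}_{r,\mathrm{ref}}\Big).
\]
Since $n=e^\Delta$ we have $\mathsf{KL}(\pi_{\lambda_\Delta,r}\|\pi_{\mathrm{ref}})=\Delta=\log n$ by definition of $\lambda_\Delta$, while Theorem~\ref{Theo:/boundKL} gives $\mathsf{KL}(\pi^{(n)}_{r,\mathrm{ref}}\|\pi_{\mathrm{ref}})\le\log n-\frac{n-1}{n}$; hence
\[
\mathsf{KL}(\pi^{(n)}_{r,\mathrm{ref}}\|\pi_{\lambda_\Delta,r})\le\tfrac1{\lambda_\Delta}\Big(\textstyle\int r\,d\pi_{\lambda_\Delta,r}-\int r\,d\pi^{(n)}_{r,\mathrm{ref}}\Big)-\tfrac{n-1}{n}.
\]

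\textbf{Step 2 (the reward gap in exponential variables).} Fix $x$ and set $T_x:=F^{-1}_{R(Y)|x}\circ F_E$, so that $r(X,Y)|x\stackrel d=T_x(E)$ under $\pi_{\mathrm{ref}}$ and $r(X,Y)|x\stackrel d=T_x(E^{(n)})$ under $\pi^{(n)}_{r,\mathrm{ref}}$, with $E\sim\exp(1)$ and $E^{(n)}$ the maximum of $n$ i.i.d.\ copies (\Secref{sec:align}). Since $\tfrac{d\pi_{\lambda_\Delta,r}(\cdot|x)}{d\pi_{\mathrm{ref}}(\cdot|x)}$ depends on $y$ only through $r(x,y)$, the law of $r(X,Y)|x$ under $\pi_{\lambda_\Delta,r}$ is that of $T_x(\widetilde E_x)$, where $\widetilde E_x$ is $E$ tilted by the weight $e^{T_x(\cdot)/\lambda_\Delta}$. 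Writing $\psi_x:=\sup T_x-T_x$ (so $0\le\psi_x\le 2M$ and the tilting weight equals $e^{-\psi_x(\cdot)/\lambda_\Delta}\in[e^{-2M/\lambda_\Delta},1]$ after cancelling the factor $e^{\sup T_x/\lambda_\Delta}$), and averaging over $X$,
\[
\textstyle\int r\,d\pi_{\lambda_\Delta,r}-\int r\,d\pi^{(n)}_{r,\mathrm{ref}}=\mathbb E_X\!\left[\mathbb E\big[\psi_X(E^{(n)})\big]-\frac{\mathbb E\big[\psi_X(E)\,e^{-\psi_X(E)/\lambda_\Delta}\big]}{\mathbb E\big[e^{-\psi_X(E)/\lambda_\Delta}\big]}\right].
\]

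\textbf{Step 3 (the one-dimensional estimate).} It now suffices to show that for every nonincreasing $\psi:[0,\infty)\to[0,2M]$ with $\psi(\infty)=0$ the bracketed quantity is at most $\sqrt{2\pi}\,M(e^{2M/\lambda_\Delta}-1)/\sqrt n$. Substituting $p=F_E(e)=1-e^{-e}\sim\mathrm{Unif}[0,1]$ turns $E^{(n)}$ into a $\mathrm{Beta}(n,1)$ variable and $\psi$ into a nonincreasing $\widetilde\psi:[0,1]\to[0,2M]$ with $\widetilde\psi(1)=0$; after a convexity/extreme-point reduction to the indicator profiles $\widetilde\psi=L\,\mathbf 1_{[0,p_0)}$ with $L\le 2M$ (for which the quantity equals $L\big(p_0^n-\frac{p_0}{p_0+(1-p_0)e^{L/\lambda_\Delta}}\big)$) and using $e^{L/\lambda_\Delta}\le\kappa:=e^{2M/\lambda_\Delta}$, everything comes down to the elementary inequality
\[
\sup_{p\in[0,1]}\Big(p^n-\frac{p}{p+(1-p)\kappa}\Big)\ \le\ \frac{\sqrt{2\pi}\,(\kappa-1)}{2\sqrt n},\qquad \kappa\ge 1,\ n\ge 1,
\]
which is trivial when $\kappa\le n$ (the supremum being then $\le 0$) and, for $\kappa>n$, follows by locating the maximiser at $p\approx 1-O(1/n)$ and estimating the value through the Gamma-function bound $\Gamma(n{+}1)/\Gamma(n{+}\tfrac32)\le n^{-1/2}$ — the source of both the $n^{-1/2}$ decay and the constant $\sqrt{2\pi}$. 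Plugging this into Step~2 bounds the reward gap by $\sqrt{2\pi}M(e^{2M/\lambda_\Delta}-1)/\sqrt n$; combining with Step~1, discarding the nonnegative $\frac{n-1}{n}$ term, and recalling $\exp(-\Delta/2)=n^{-1/2}$ gives the claim.

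The difficulty is concentrated in Step~3: a crude bound on the reward gap only gives an $O(M/\lambda_\Delta)$ estimate that does not decay in $n$, whereas the statement needs the genuine $n^{-1/2}$ rate. The delicate points are therefore (i) verifying that the indicator profiles are extremal for the non-linear functional in Step~2 (the normalising denominator $\mathbb E[e^{-\psi(E)/\lambda_\Delta}]$ prevents a purely linear argument), and (ii) the sharp one-dimensional inequality above, i.e.\ pinning down the maximiser and carrying out the $\mathrm{Beta}(n,1)$/Gamma-function asymptotics that produce $\sqrt{2\pi}$. Steps~1 and~2, by contrast, are a clean algebraic identity and a change of variables and should present no obstacle.
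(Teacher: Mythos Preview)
Your Step~1 is exactly the paper's starting point and is correct. From Step~2 on, however, you take a genuinely different route from the paper, and Step~3 has a real gap.

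\textbf{What the paper does.} After the same decomposition, the paper couples both terms on the \emph{same} $n$ i.i.d.\ samples $Y_1,\dots,Y_n\sim\pi_{\mathrm{ref}}(\cdot|x)$: writing $\int r\,d\pi_{\beta,r}=\mathbb E\big[\tfrac1n\sum_i r(Y_i)e^{\beta r(Y_i)}/Z_\beta\big]$ and $\int r\,d\pi^{(n)}_{r,\mathrm{ref}}=\mathbb E[\max_i r(Y_i)]$, it factors the first term as (softmax-average)$\times\big(\tfrac1n\sum_i e^{\beta r(Y_i)}/Z_\beta\big)$, bounds the softmax-average by $\max_i r(Y_i)$, and is left with $\tfrac{M}{Z_\beta}\,\mathbb E\big|\tfrac1n\sum_i e^{\beta r(Y_i)}-Z_\beta\big|$. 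A one-line Hoeffding bound on this bounded i.i.d.\ average delivers the $n^{-1/2}$ rate and the constant $\sqrt{2\pi}$ (via $\int_0^\infty 2e^{-nt^2/2c^2}dt=c\sqrt{2\pi/n}$). This is short and elementary; no OT/R\'enyi representation and no extreme-point analysis is needed.

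\textbf{Where your Step~3 is not yet a proof.} The reduction ``convexity/extreme-point $\Rightarrow$ indicator profiles $L\mathbf 1_{[0,p_0)}$'' is unjustified and, as you yourself note, the normalising denominator $\mathbb E[e^{-\psi/\lambda_\Delta}]$ makes the functional genuinely non-linear (neither convex nor concave in $\psi$), so Bauer's maximum principle does not apply, and the extreme points of $\{\psi\ \text{nonincreasing},\ 0\le\psi\le 2M\}$ are the $2M\mathbf 1_{[0,p_0)}$, not the two-parameter family you invoke. Without this reduction the scalar inequality you isolate is moot; and even that inequality is only sketched (the claim that the supremum is $\le 0$ when $\kappa\le n$ is false---it equals $0$ at $p\in\{0,1\}$---though this is harmless for the upper bound). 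In short, the $n^{-1/2}$ decay with the stated constant is not established by your argument. The paper's sampling/Hoeffding route avoids all of this.
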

  \vskip-0.2in
  A similar asymptotic result appeared in \citep{yang2024asymptotics} for $\Delta \to \infty$, showing as $n\to \infty$, $\mathsf{KL}(   \pi^{(n)}_{r,\mathrm{ref}}|| \pi_{\lambda_{\Delta},r} ) \to 0$, we provide here a non asymptotic result for finite $n$ and finite $\Delta$. 
 \section{ Reward Improvement Guarantees Through Transportation Inequalities }
 \label{sec:Trans}
 \paragraph{Notations} Let $X$ be a real random variable. The logarithmic  moment generating function of $X$ is defined as follows for $\lambda \in \mathbb{R}$:
 $\psi_{X}(\lambda) = \log \mathbb{E}_{X} e^{\lambda (X- \mathbb{E} X)}.$
$X$ is said to be sub-Gaussian with variance $\sigma^2$ if : 
$\psi_{X}(\lambda) \leq \frac{\lambda^2 \sigma^2}{2} \text{for all } \lambda \in \mathbb{R}.$
 We denote $\mathsf{SubGauss}(\sigma^2)$ the set of sub-Gaussian random variables with variance $\sigma^2_{\mathrm{ref}}$. 
$X$ is said to be sub-Gamma on the right tail with variance factor $\sigma^2$ and a scale parameter $c>0$ if :
$ \psi_{X}(\lambda) \leq \frac{\lambda^2 \sigma^2}{2(1-c\lambda)} \text{for every } \lambda  \text{ such that } 0< \lambda <\frac{1}{c}.$
 We denote $\mathsf{SubGamma}(\sigma^2,c)$ the set of left and right tailed sub-Gamma random variables. Sub-gamma tails can be thought as an interpolation between sub-Gaussian and sub-exponential tails. \\

\paragraph{Scaling Laws in Alignment} It has been observed empirically    \citep{coste2024reward,nakano2021webgpt,go2024compositional,Goodhart,gao2023scaling}  that optimal RL policy $\pi_{\lambda_{\Delta},r}$  satisfy the following inequality for a constant $\sigma^2_{\mathrm{ref}}$
:$$ \mathbb{E}_{ \pi_{\lambda_{\Delta},r} } r- \mathbb{E}_{ \pi_{\mathrm{ref}} } r \leq  \sqrt{2\sigma^2 \mathsf{KL}(\pi_{\lambda_{\Delta},r} || \pi_{\mathrm{ref}}  )}.$$
A similar scaling for best of $n$ policy :
$$ \mathbb{E}_{   \pi^{(n)}_{r,\mathrm{ref}} } r- \mathbb{E}_{ \pi_{\mathrm{ref}} } r  \leq \sqrt{2\sigma^2 \left(\log n - \frac{n-1}{n}\right)},$$
and those bounds are oftentimes tight even when empirically estimated from samples.  
This hints that those bounds are information theoretic and independent of the alignment problem. Indeed if the reward was bounded, a simple application of Pinsker inequality gives rise to $\sqrt{\mathsf{KL}}$ scaling. Let $\mathsf{TV}$ be the total variation distance, we have:
$\mathsf{TV} (\pi, \pi_{\mathrm{ref}} ) = \frac{1}{2} \sup_{||r||_{\infty} \leq 1 } \mathbb{E}_{\pi} r -  \mathbb{E}_{\pi_{\mathrm{ref}}} r \leq \sqrt{\frac{1}{2} \mathsf{KL} (\pi|| \pi_{\mathrm{ref}})}.  $
Hence we can deduce that for bounded rewards $r$ with norm infinity $||r||_{\infty}$ that:
\[ \mathbb{E}_{\pi} r -  \mathbb{E}_{\pi_{\mathrm{ref}}} r \leq \sqrt{2||r||^2_{\infty} \mathsf{KL}  (\pi|| \pi_{\mathrm{ref}})}. \]
Nevertheless this boundedness assumption on the reward is not realistic, since most reward models are unbounded: quoting   \cite{lambert2024rewardbench} `` implemented by appending a linear layer to predict one logit or removing the final decoding layers and replacing them with a linear layer'' and hence the reward is unbounded by construction. We will show in what follows that those scalings laws are tied to the tails of the reward under the reference policy and are instances of transportation inequalities.

 \subsection{Transportation Inequalities with $\mathsf{KL}$ Divergence}
 For a policy  $\pi \in \mathcal {P}(\mathcal{Y})$ and for a reward function $r: \mathcal{Y} \to \mathbb{R}$ , we note $r_{\sharp}\pi $, the push-forward map of $\pi$ through $r$. The reader is referred to Appendix \ref{app:TranspKL} for background on transportation inequalities and how they are derived from the so-called Donsker-Varadhan variational representation of the $\mathsf{KL}$ divergence.  The following Proposition is an application of Lemma 4.14 in \citep{concentrationbook}): 
 
 \begin{proposition} [Transportation Inequalities] The following inequalities hold depending on the tails of $r_{\sharp}\pi_{\mathrm{ref}} $:
 \begin{enumerate}
 \item  Assume that $r_{\sharp}\pi_{\mathrm{ref}} \in \mathsf{SubGauss}(\sigma^2_{\mathrm{ref}})$.  For any  $\pi \in \mathcal{P}(\mathcal{Y})$ that  is absolutely continuous with respect to $\pi_{\mathrm{ref}}$, and such that  $ \mathsf{KL}(\pi|| \pi_{\mathrm{ref}}) < \infty $ then we have:
  \[ \left| \mathbb{E}_{ \pi} r- \mathbb{E}_{ \pi_{\mathrm{ref}} } r \right| \leq  \sqrt{2\sigma^2_{\mathrm{ref}} \mathsf{KL}(\pi || \pi_{\mathrm{ref}}  )}. \]
\item Assume that $r_{\sharp}\pi_{\mathrm{ref}} \in \mathsf{SubGamma}(\sigma^2_{\mathrm{ref}},c)$.  For any  $\pi \in \mathcal{P}(\mathcal{Y})$ that  is absolutely continuous with respect to $\pi_{\mathrm{ref}}$, and such that $ \mathsf{KL}(\pi|| \pi_{\mathrm{ref}}) < \infty $ then we have:
\[  \left| \mathbb{E}_{ \pi } r- \mathbb{E}_{ \pi_{\mathrm{ref}} } r  \right| \leq  \sqrt{2\sigma^2_{\mathrm{ref}} \mathsf{KL} (\pi || \pi_{\mathrm{ref}}) } + c  \mathsf{KL} (\pi || \pi_{\mathrm{ref}})  \]
 \end{enumerate} 
 \label{theo:subgaussTransp}
 \end{proposition}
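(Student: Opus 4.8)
The plan is to obtain both inequalities as instances of the ``transportation lemma'' (Lemma~4.14 in \citep{concentrationbook}), i.e.\ from the Donsker--Varadhan variational representation of the $\mathsf{KL}$ divergence: for probability measures $Q \ll P$ on $\mathcal{Y}$ and any measurable $g$ with $\mathbb{E}_{P} e^{g} < \infty$,
\[
  \mathbb{E}_{Q} g \;\le\; \mathsf{KL}(Q\|P) + \log \mathbb{E}_{P} e^{g}.
\]
Specializing to $P = \pi_{\mathrm{ref}}$, $Q = \pi$ and the affine test function $g = \lambda\bigl(r - \mathbb{E}_{\pi_{\mathrm{ref}}} r\bigr)$, the logarithmic term is exactly $\psi_{r_{\sharp}\pi_{\mathrm{ref}}}(\lambda)$, so that for every admissible $\lambda$,
\[
  \lambda\bigl(\mathbb{E}_{\pi} r - \mathbb{E}_{\pi_{\mathrm{ref}}} r\bigr)
  \;\le\; \mathsf{KL}(\pi\|\pi_{\mathrm{ref}}) + \psi_{r_{\sharp}\pi_{\mathrm{ref}}}(\lambda).
\]
All that remains is to insert the two tail hypotheses and optimize over $\lambda$.

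For part~(1), sub-Gaussianity gives $\psi_{r_{\sharp}\pi_{\mathrm{ref}}}(\lambda) \le \lambda^{2}\sigma^{2}_{\mathrm{ref}}/2$ for all $\lambda \in \mathbb{R}$. Dividing the last display by $\lambda > 0$ yields $\mathbb{E}_{\pi} r - \mathbb{E}_{\pi_{\mathrm{ref}}} r \le \mathsf{KL}(\pi\|\pi_{\mathrm{ref}})/\lambda + \lambda\sigma^{2}_{\mathrm{ref}}/2$, and the choice $\lambda = \sqrt{2\,\mathsf{KL}(\pi\|\pi_{\mathrm{ref}})/\sigma^{2}_{\mathrm{ref}}}$ (with the case $\mathsf{KL}=0$, i.e.\ $\pi=\pi_{\mathrm{ref}}$, trivial) gives $\mathbb{E}_{\pi} r - \mathbb{E}_{\pi_{\mathrm{ref}}} r \le \sqrt{2\sigma^{2}_{\mathrm{ref}}\,\mathsf{KL}(\pi\|\pi_{\mathrm{ref}})}$. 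Applying the identical argument to $-r$, which is again sub-Gaussian with the same proxy, controls $\mathbb{E}_{\pi_{\mathrm{ref}}} r - \mathbb{E}_{\pi} r$ by the same quantity, and the two one-sided bounds combine into the stated bound on $\bigl|\mathbb{E}_{\pi} r - \mathbb{E}_{\pi_{\mathrm{ref}}} r\bigr|$.

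For part~(2), the sub-Gamma bound $\psi_{r_{\sharp}\pi_{\mathrm{ref}}}(\lambda) \le \frac{\lambda^{2}\sigma^{2}_{\mathrm{ref}}}{2(1-c\lambda)}$ is valid on $0<\lambda<1/c$, so dividing by $\lambda$ gives $\mathbb{E}_{\pi} r - \mathbb{E}_{\pi_{\mathrm{ref}}} r \le \frac{\mathsf{KL}(\pi\|\pi_{\mathrm{ref}})}{\lambda} + \frac{\lambda\sigma^{2}_{\mathrm{ref}}}{2(1-c\lambda)}$. I would then substitute the explicit minimizer $\lambda^{\star} = \frac{u}{1+cu}$ with $u := \sqrt{2\,\mathsf{KL}(\pi\|\pi_{\mathrm{ref}})/\sigma^{2}_{\mathrm{ref}}}$, which one checks lies in $(0,1/c)$ and satisfies $\lambda^{\star}/(1-c\lambda^{\star}) = u$; plugging in collapses the right-hand side exactly to $\sqrt{2\sigma^{2}_{\mathrm{ref}}\,\mathsf{KL}(\pi\|\pi_{\mathrm{ref}})} + c\,\mathsf{KL}(\pi\|\pi_{\mathrm{ref}})$. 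Equivalently, one may quote the standard fact that the generalized inverse of the Cramér transform of $\lambda \mapsto \tfrac{\sigma^{2}_{\mathrm{ref}}\lambda^{2}}{2(1-c\lambda)}$ is $t \mapsto \sqrt{2\sigma^{2}_{\mathrm{ref}} t} + ct$. The lower tail follows identically since $\mathsf{SubGamma}(\sigma^{2}_{\mathrm{ref}},c)$ is two-sided by definition here, so $-r$ has the same parameters.

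The only point requiring genuine care is integrability: one must ensure $\mathbb{E}_{\pi} r$ is well defined and finite so the divisions and the variational identity are legitimate. This follows from $\mathsf{KL}(\pi\|\pi_{\mathrm{ref}}) < \infty$ together with the exponential integrability of $r$ under $\pi_{\mathrm{ref}}$ (sub-Gaussian resp.\ sub-Gamma), via Young's inequality $xy \le x\log x - x + e^{y}$ applied with $x = d\pi/d\pi_{\mathrm{ref}}$ and $y = \epsilon\,|r - \mathbb{E}_{\pi_{\mathrm{ref}}} r|$ for a small enough $\epsilon>0$; taking $\mathbb{E}_{\pi_{\mathrm{ref}}}$ bounds $\epsilon\,\mathbb{E}_{\pi}|r-\mathbb{E}_{\pi_{\mathrm{ref}}}r|$ by $\mathsf{KL}(\pi\|\pi_{\mathrm{ref}}) - 1 + \mathbb{E}_{\pi_{\mathrm{ref}}} e^{\epsilon|r-\mathbb{E}_{\pi_{\mathrm{ref}}}r|} < \infty$. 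I would dispatch this at the outset and then treat the $\lambda$-optimization as routine one-variable calculus.
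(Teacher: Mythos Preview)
Your proposal is correct and follows essentially the same route as the paper: both invoke Lemma~4.14 of \citep{concentrationbook}, i.e.\ the Donsker--Varadhan representation applied to the test function $g=\lambda(r-\mathbb{E}_{\pi_{\mathrm{ref}}}r)$, followed by optimization in $\lambda$. If anything, your write-up is more complete than the paper's, which only spells out the sub-Gaussian case and defers the sub-Gamma case to the cited lemma on the inverse of the Cram\'er transform; you also make explicit the two-sided bound via $-r$ and the integrability check, neither of which the paper addresses directly.
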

 In particular we have the following Corollary:
 \begin{corollary}[Expected Reward Improvement] If $r_{\sharp}\pi_{\mathrm{ref}} \in \mathsf{SubGauss}(\sigma^2_{\mathrm{ref}})$ the following holds for the optimal RL policy $\pi_{\lambda_{\Delta,r}}$ and for the best of $n$ policy $  \pi^{(n)}_{r,\mathrm{ref}}$:
 \begin{enumerate}
 \item For the optimal RL policy  $\pi_{\lambda_{\Delta,r}}$ we have: 
  \[ 0 \leq \mathbb{E}_{\pi_{\lambda_{\Delta,r}}} r - \mathbb{E}_{\pi_{\mathrm{ref}}}r  \leq  \sqrt{2 \sigma^2_{\mathrm{ref}} \mathsf{KL} (\pi_{\lambda_{\Delta,r}} || \pi_{\mathrm{ref}}) } \leq \sqrt{2 \sigma^2_{\mathrm{ref}}\Delta}. \]
  \item For the Best of $n$ policy $  \pi^{(n)}_{r,\mathrm{ref}}$, under Assumption \ref{assum:rewardStochasticmap} we have:
  \[  0 \leq  \mathbb{E}_{   \pi^{(n)}_{r,\mathrm{ref}} } r- \mathbb{E}_{ \pi_{\mathrm{ref}} } r \leq  \sqrt{2\sigma^2_{\mathrm{ref}} \mathsf{KL}(   \pi^{(n)}_{r,\mathrm{ref}}  || \pi_{\mathrm{ref}}  )} \leq \sqrt{2\sigma^2_{\mathrm{ref}} \left(\log n - \frac{n-1}{n}\right)}.\]
  \vskip -0.2in
 \end{enumerate}
 \label{cor:bounds}
 \end{corollary}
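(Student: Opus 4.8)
The plan is to chain three ingredients already in hand: part (1) of Proposition \ref{theo:subgaussTransp} (the sub-Gaussian transportation inequality), the structural description of the two policies (optimality for $\pi_{\lambda_{\Delta,r}}$, first-order stochastic dominance for $\pi^{(n)}_{r,\mathrm{ref}}$), and the $\mathsf{KL}$ estimates $\mathsf{KL}(\pi_{\lambda_{\Delta,r}}\|\pi_{\mathrm{ref}})\le\Delta$ and $\mathsf{KL}(\pi^{(n)}_{r,\mathrm{ref}}\|\pi_{\mathrm{ref}})\le\log n-\tfrac{n-1}{n}$ (Theorem \ref{Theo:/boundKL}). The only real work is bridging the conditional (per-prompt) statement of Proposition \ref{theo:subgaussTransp} to the joint statement over $\rho_{\mathcal{X}}$.

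First I would dispatch the lower bounds. For the RL policy, $\pi_{\mathrm{ref}}$ is feasible for the constrained problem \eqref{eq:RL} since $\mathsf{KL}(\pi_{\mathrm{ref}}\|\pi_{\mathrm{ref}})=0\le\Delta$, so the maximizer $\pi_{\lambda_{\Delta,r}}$ attains objective value at least $\int r\,d\pi_{\mathrm{ref}}$, giving $\mathbb{E}_{\pi_{\lambda_{\Delta,r}}}r\ge\mathbb{E}_{\pi_{\mathrm{ref}}}r$ (equivalently, one reads this off the exponential tilting in \eqref{eq:rlpolicyoptimal}). For the best of $n$ policy, the lower bound $\mathbb{E}_{\pi^{(n)}_{r,\mathrm{ref}}}r\ge\mathbb{E}_{\pi_{\mathrm{ref}}}r$ is exactly the content of Proposition \ref{pro:Dominance} (first-order dominance of $R^{(n)}$ over $R$).

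Next, the upper bounds. Proposition \ref{theo:subgaussTransp} is stated for measures on $\mathcal{Y}$, so I apply it conditionally on the prompt: for $\rho_{\mathcal{X}}$-a.e.\ $x$, the pushforward $r(x,\cdot)_\sharp\pi_{\mathrm{ref}}(\cdot|x)$ lies in $\mathsf{SubGauss}(\sigma^2_{\mathrm{ref}})$ by hypothesis, and for $\pi\in\{\pi_{\lambda_{\Delta,r}},\,\pi^{(n)}_{r,\mathrm{ref}}\}$ the conditional law $\pi(\cdot|x)$ is absolutely continuous with respect to $\pi_{\mathrm{ref}}(\cdot|x)$ with finite conditional $\mathsf{KL}$ (from the closed form \eqref{eq:rlpolicyoptimal}, respectively from $\pi^{(n)}_{r,\mathrm{ref}}$ being a reweighting of $\pi_{\mathrm{ref}}$ and Theorem \ref{Theo:/boundKL}). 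Hence
\[
\mathbb{E}_{y\sim\pi(\cdot|x)}r(x,y)-\mathbb{E}_{y\sim\pi_{\mathrm{ref}}(\cdot|x)}r(x,y)\le\sqrt{2\sigma^2_{\mathrm{ref}}\,\mathsf{KL}\big(\pi(\cdot|x)\,\|\,\pi_{\mathrm{ref}}(\cdot|x)\big)}.
\]
Taking $\mathbb{E}_{x\sim\rho_{\mathcal{X}}}$ and using concavity of $t\mapsto\sqrt t$ (Jensen), the right-hand side is at most $\sqrt{2\sigma^2_{\mathrm{ref}}\,\mathbb{E}_x\mathsf{KL}(\pi(\cdot|x)\|\pi_{\mathrm{ref}}(\cdot|x))}$; by the chain rule for $\mathsf{KL}$ and the fact that $\pi$ and $\pi_{\mathrm{ref}}$ share the marginal $\rho_{\mathcal{X}}$, this conditional expectation equals $\mathsf{KL}(\pi\|\pi_{\mathrm{ref}})$. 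This gives $\mathbb{E}_\pi r-\mathbb{E}_{\pi_{\mathrm{ref}}}r\le\sqrt{2\sigma^2_{\mathrm{ref}}\mathsf{KL}(\pi\|\pi_{\mathrm{ref}})}$ in both cases. Finally I substitute: for the RL policy $\mathsf{KL}(\pi_{\lambda_{\Delta,r}}\|\pi_{\mathrm{ref}})\le\Delta$ by feasibility (indeed $=\Delta$ by the choice of $\lambda_\Delta$), yielding the bound $\sqrt{2\sigma^2_{\mathrm{ref}}\Delta}$; for the best of $n$ policy, Theorem \ref{Theo:/boundKL} under Assumption \ref{assum:rewardStochasticmap} gives $\mathsf{KL}(\pi^{(n)}_{r,\mathrm{ref}}\|\pi_{\mathrm{ref}})\le\log n-\tfrac{n-1}{n}$, yielding the stated bound.

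The main obstacle is purely a bookkeeping one: justifying the conditional-to-joint passage, i.e.\ measurability of $x\mapsto\mathsf{KL}(\pi(\cdot|x)\|\pi_{\mathrm{ref}}(\cdot|x))$ and the requirement that conditional sub-Gaussianity hold with a \emph{prompt-independent} variance proxy $\sigma^2_{\mathrm{ref}}$ (which is built into the definition of $\mathsf{SubGauss}(\sigma^2_{\mathrm{ref}})$ used here). Granting this, the corollary is a direct composition of Proposition \ref{theo:subgaussTransp}, Proposition \ref{pro:Dominance}, and Theorem \ref{Theo:/boundKL}, with Jensen's inequality used once to pull the expectation over prompts inside the square root.
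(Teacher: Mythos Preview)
Your proposal is correct and follows essentially the same approach as the paper: optimality of $\pi_{\lambda_{\Delta,r}}$ and Proposition~\ref{pro:Dominance} (via Corollary~\ref{cor:dominance}) for the lower bounds, and Proposition~\ref{theo:subgaussTransp} together with Theorem~\ref{Theo:/boundKL} for the upper bounds. The one minor technical difference is in the conditional-to-joint passage: you apply the transportation inequality per prompt and then invoke Jensen on $\sqrt{\cdot}$, whereas the paper's appendix proof of Proposition~\ref{theo:subgaussTransp} integrates the linear-in-$\lambda$ bound over $x$ \emph{before} optimizing in $\lambda$, which sidesteps Jensen entirely---both routes give the same bound.
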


  A similar statement holds  under sub-gamma tails of the reward of the reference model. We turn now to providing a bound in high probability on the empirical reward improvement of RL:
  
   \begin{remark} Item (1) in Corollary \ref{cor:bounds} shows that the $\sqrt{\sigma^2_{\mathrm{ref}}\mathsf{KL}}$ provides an upper bound on the reward improvement of the alignment under subgaussian tails of the reference reward. Under subgaussian tails of the reference, this information theoretic barrier can not be broken with a better algorithm. On way to improve on the $\sqrt{\mathsf{KL}}$ ceiling is by aiming at having a reference model with a reward that has subgamma tails to improve the upper limit to $\sqrt{ \sigma^2_{\mathrm{ref}}\mathsf{KL}} + c\mathsf{KL}$, or to subexponential tails to be linear in the $\mathsf{KL}$.  Item (2)
 can be seen as a refinement on the classical $\sqrt{2\sigma^2_{\mathrm{ref}} \log(n)}$ upper bound on the expectation of maximum of subgaussians see for e.g Corollary 2.6 in \citep{concentrationbook}. If in addition $r$ is positive and  for $X = r_{\sharp} \pi_{\mathrm{ref}} - \mathbb{E}_{\pi_{\mathrm{ref}}}r$ we have for $t>0$ , $\mathbb{P} (X > t ) \geq  \mathbb{P} (|g|> t)$, where $g\sim\mathcal{N}(0,\sigma^2_{\ell})$ (where $\sigma^2_{\ell}$ is a variance) , then  we have a matching lower bound for $  \pi^{(n)}_{r,\mathrm{ref}}$ that scales with $\sqrt{\sigma^2_{\ell}\log(n)}$ for sufficiently large $n$ (See \citep{kamath2015bounds}).
 \label{rem:remark}
  \end{remark}

The following Theorem gives high probability bounds for the excess reward when estimated from  empirical samples: 
 \begin{theorem} [High Probability Empirical Reward Improvement For RL] Assume  $r_{\sharp}\pi_{\mathrm{ref}} \in \mathsf{SubGauss}(\sigma^2_{\mathrm{ref}})$. Let $\beta>1$ and $t_0>0$. Let $\pi_{\beta,r}$ be the optimal policy of the penalized RL problem given in  Equation \eqref{eq:PenRL}. Let $R_{i,\beta}$ and $R_{i,\mathrm{ref}}, i=1\dots m$ be the rewards evaluated at $m$ samples from $\pi_{\beta,r}$ and $\pi_{\mathrm{ref}}$. Assume that the $\beta$-Rényi divergence $D_{\beta}(\pi_{\beta,r}  || \pi_{\mathrm{ref}} )$  and   $\mathsf{KL} (\pi_{\beta,r} || \pi_{\mathrm{ref}})$ are both finite.   The following inequality holds with probability at least $1-e^{-\frac{mt^2_0}{2\sigma^2_{\mathrm{ref}}}}-e^{-m (\beta-1) t_0}  $:
\begin{align*}
 \frac{1}{m} \sum_{i=1}^m  R_{i,\beta}     - \frac{1}{m} \sum_{i=1}^m R_{i,\mathrm{ref}} 
 &\leq \sqrt{2\sigma^2_{\mathrm{ref}} \mathsf{KL} (\pi_{\beta,r} || \pi_{\mathrm{ref}})} + \frac{D_{\beta}(\pi_{\beta,r}  || \pi_{\mathrm{ref}} )  -  \mathsf{KL} (\pi_{\beta,r} || \pi_{\mathrm{ref}}) }{\beta} +  2t_0 .
 \end{align*}
 \label{theo:highprob}
  \end{theorem}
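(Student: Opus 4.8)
The plan is to decompose the empirical reward gap into a deterministic transportation part and two empirical fluctuation parts, to control the three pieces separately, and to close with a union bound. Throughout, fix a prompt (so the statement is read per prompt, where $\pi_{\beta,r}$ is the exponential tilt $\propto e^{\beta r}\pi_{\mathrm{ref}}$ from \eqref{eq:PenRL}) and abbreviate $\mathsf{KL}:=\mathsf{KL}(\pi_{\beta,r}||\pi_{\mathrm{ref}})$ and $D_\beta:=D_\beta(\pi_{\beta,r}||\pi_{\mathrm{ref}})$. Write
\begin{align*}
\frac{1}{m}\sum_{i=1}^m R_{i,\beta} - \frac{1}{m}\sum_{i=1}^m R_{i,\mathrm{ref}}
&= \underbrace{\Big(\tfrac{1}{m}\sum_{i=1}^m R_{i,\beta} - \mathbb{E}_{\pi_{\beta,r}}r\Big)}_{(A)}\\
&\quad + \underbrace{\big(\mathbb{E}_{\pi_{\beta,r}}r - \mathbb{E}_{\pi_{\mathrm{ref}}}r\big)}_{(B)} + \underbrace{\Big(\mathbb{E}_{\pi_{\mathrm{ref}}}r - \tfrac{1}{m}\sum_{i=1}^m R_{i,\mathrm{ref}}\Big)}_{(C)}.
\end{align*}
Term $(B)$ is deterministic: since $\pi_{\beta,r}\ll\pi_{\mathrm{ref}}$ and $\mathsf{KL}<\infty$, item (1) of Proposition~\ref{theo:subgaussTransp} gives $(B)\le\sqrt{2\sigma^2_{\mathrm{ref}}\,\mathsf{KL}}$. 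Term $(C)$ is the lower deviation of an average of $m$ i.i.d.\ draws from $r_{\sharp}\pi_{\mathrm{ref}}\in\mathsf{SubGauss}(\sigma^2_{\mathrm{ref}})$; that average is sub-Gaussian with variance $\sigma^2_{\mathrm{ref}}/m$, so the Cram\'er--Chernoff bound yields $\mathbb{P}\big((C)>t_0\big)\le e^{-mt_0^2/(2\sigma^2_{\mathrm{ref}})}$, which accounts for one of the two failure probabilities and contributes the summand $t_0$ on the good event.

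The heart of the proof is the upper tail of $(A)$, whose samples come from $\pi_{\beta,r}$, for which no a priori sub-Gaussian estimate is available. The key structural fact is that the likelihood ratio $L:=\tfrac{d\pi_{\beta,r}}{d\pi_{\mathrm{ref}}}=e^{\beta r}/Z_\beta$, with $Z_\beta=\mathbb{E}_{\pi_{\mathrm{ref}}}e^{\beta r}$, is a \emph{monotone function of} $R=r$, so that moment computations under $\pi_{\beta,r}$ reduce to the moment generating function of $R$ under $\pi_{\mathrm{ref}}$: setting $Z_t:=\mathbb{E}_{\pi_{\mathrm{ref}}}e^{tR}$, one has $\mathbb{E}_{\pi_{\beta,r}}e^{\lambda R}=Z_{\beta+\lambda}/Z_\beta$, while the defining identities of the tilt read $\log Z_\beta=\beta\,\mathbb{E}_{\pi_{\beta,r}}r-\mathsf{KL}$ and $\log Z_{\beta^2}=\beta\log Z_\beta+(\beta-1)D_\beta$. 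I would then apply a Chernoff bound to the i.i.d.\ average at the special tilt parameter $\lambda=\beta(\beta-1)>0$ (this is where $\beta>1$ enters), so that $\beta+\lambda=\beta^2$ and the two identities above collapse the exponent: for every $a$,
\[
\mathbb{P}_{\pi_{\beta,r}^{\otimes m}}\!\Big(\tfrac{1}{m}\sum_{i=1}^m R_{i,\beta}>a\Big)\le\exp\!\Big(-m(\beta-1)\big[\,\beta\big(a-\mathbb{E}_{\pi_{\beta,r}}r\big)-\big(D_\beta-\mathsf{KL}\big)\big]\Big).
\]
Taking $a=\mathbb{E}_{\pi_{\beta,r}}r+\tfrac{1}{\beta}\big(D_\beta-\mathsf{KL}\big)+t_0$ turns the bracket into $\beta t_0$, so this probability is at most $e^{-m\beta(\beta-1)t_0}\le e^{-m(\beta-1)t_0}$ (using $\beta\ge1$, and $D_\beta\ge\mathsf{KL}$ since $\beta>1$, so indeed $a>\mathbb{E}_{\pi_{\beta,r}}r$); hence $(A)\le\tfrac{1}{\beta}\big(D_\beta-\mathsf{KL}\big)+t_0$ with probability at least $1-e^{-m(\beta-1)t_0}$. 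Summing the bounds on $(A)$, $(B)$, $(C)$ and a union bound over the two exceptional events then yields the stated inequality.

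The step I expect to be the main obstacle is precisely the choice of Chernoff parameter in $(A)$. A generic change of measure---e.g.\ H\"older's inequality, splitting off $\mathbb{E}_{\pi_{\mathrm{ref}}}[L^\beta]^{1/\beta}=e^{\frac{\beta-1}{\beta}D_\beta}$ from a sub-Gaussian moment generating function of $R$ under $\pi_{\mathrm{ref}}$---only produces the strictly weaker bound $\sqrt{2\sigma^2_{\mathrm{ref}}\,D_\beta}$; it is exactly the exponential-family structure of $\pi_{\beta,r}$, namely that $L$ is increasing in $R$ so that $\mathbb{E}_{\pi_{\beta,r}}e^{\lambda R}$ is again an MGF of $R$ under $\pi_{\mathrm{ref}}$ evaluated at a shifted argument, that allows $\lambda=\beta(\beta-1)$ to align $D_\beta$ and $\mathsf{KL}$ with exactly matching constants. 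A secondary point requiring care is prompt dependence: the identities for $L$ are exact conditionally on each prompt $x$ (where $\pi_{\beta,r}(\cdot\,|x)$ is genuinely the exponential tilt of $\pi_{\mathrm{ref}}(\cdot\,|x)$), so the tail bound for $(A)$ must be derived per prompt and then combined, with $\sigma^2_{\mathrm{ref}}$, $\mathsf{KL}$ and $D_\beta$ understood as the corresponding conditional quantities.
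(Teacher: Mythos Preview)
Your proof is correct and follows the same three--term decomposition $(A)+(B)+(C)$ as the paper, with $(B)$ handled by Proposition~\ref{theo:subgaussTransp} and $(C)$ by sub-Gaussian concentration. The difference lies in how you control $(A)$. The paper invokes the variational representation of the R\'enyi divergence (Lemma~\ref{lem:varBRenyi} with $\alpha=\beta$, $P=\pi_{\beta,r}$, $Q=\pi_{\mathrm{ref}}$, $h=r$) together with the optimality identity $\tfrac{1}{\beta}\log Z_\beta=\mathbb{E}_{\pi_{\beta,r}}r-\mathbb{E}_{\pi_{\mathrm{ref}}}r-\tfrac{1}{\beta}\mathsf{KL}$ to obtain the one-point cumulant bound $\psi_{r_\sharp\pi_{\beta,r}}(\beta-1)\le\tfrac{\beta-1}{\beta}(D_\beta-\mathsf{KL})$, and then runs Chernoff at the single tilt $\lambda=\beta-1$. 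You instead exploit the exponential-family structure directly, using the partition-function identities $\log Z_\beta=\beta\,\mathbb{E}_{\pi_{\beta,r}}r-\mathsf{KL}$ and $\log Z_{\beta^2}=\beta\log Z_\beta+(\beta-1)D_\beta$, and run Chernoff at $\lambda=\beta(\beta-1)$ so that $\mathbb{E}_{\pi_{\beta,r}}e^{\lambda R}=Z_{\beta^2}/Z_\beta$ collapses exactly. Your route is more elementary---it avoids Lemma~\ref{lem:varBRenyi} and Theorem~\ref{theo:Dual} entirely---and in fact yields the sharper exponent $e^{-m\beta(\beta-1)t_0}$ before you weaken it to match the statement; the paper's route, on the other hand, illustrates how the R\'enyi variational machinery developed in Section~\ref{sec:Trans} feeds back into concentration, which is thematically its point. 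Your remark on prompt dependence is apt: both arguments are cleanest when read conditionally on $x$, and the paper's proof implicitly works there as well.
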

   \vskip-0.1in
 Note that in Theorem \ref{theo:highprob}, we did not make any assumptions on the tails of $r_{\sharp}\pi_{\beta,r}$ and we see that this results in a biased concentration inequality with a non-negative bias $ \frac{D_{\beta}(\pi_{\beta,r}  || \pi_{\mathrm{ref}} )  -  \mathsf{KL} (\pi_{\beta,r} || \pi_{\mathrm{ref}}) }{\beta} \geq 0$.  For the best of $n$ policy, if the reward was positive and  has a folded normal distribution (absolute value of gaussians), \citep{boucheron2012concentration} provides concentration bounds, owing to the subgamma tails of the maximum of absolute value of Gaussians.

\subsection{Tail Adaptive Transportation Inequalities with the Rényi Divergence}
  
An important question on the tightness of the bounds  rises from the bounds in Corollary \ref{cor:bounds}. We answer this question by considering additional information on the tails of the reward under the policy $\pi$, and we obtain tail adaptive bounds  that are eventually tighter than the one in Corollary \ref{cor:bounds}. Our new bounds leverage a variational representation of the Rényi divergence that uses the logarithmic moment generating function of both measures at hand.\\

\paragraph{Preliminaries for the Rényi Divergence}
The Donsker-Varadahn representation of $\mathsf{KL}$ was crucial in deriving transportation inequalities.  In \cite{RenyiVform} the following variational form is given for the Rényi divergence in terms of the $\mathsf{KL}$ divergence, for all $\alpha \in \mathbb{R}$ 
\begin{equation}
(1-\alpha) D_{\alpha}(P || Q) = \inf_{R} \alpha \mathsf{KL}(R || P ) +(1-\alpha) \mathsf{KL}(R || Q)
\label{eq:primal}
\end{equation}
A similar variational form was rediscovered in \citep{anantharam2018variational}. Finally a  Donsker-Varadahn-Rényi representation of $D_{\alpha}$ was given in \citep{birrell2021variational}. For all $\alpha \in \mathbb{R}^{+}, \alpha \neq 0,1$ we have  : 
\begin{equation} 
\frac{1}{\alpha} D_{\alpha}(P || Q) = \sup_{h \in \mathcal{H}} \frac{1}{\alpha-1} \log \left( \mathbb{E}_{P}e^{(\alpha-1) h } \right)  - \frac{1}{\alpha} \log \left( \mathbb{E}_{Q} e^{\alpha h } \right), 
\label{eq:dual}
\end{equation}
where $ \mathcal{H} =\Big \{ h  \Big |  \int e^{(\alpha-1) h } dP < \infty,   \int e^{\alpha h } dQ <\infty \Big\}.$
\cite{birrell2021variational} presents a direct proof of this formulation without exploring its link to the representation given in \eqref{eq:primal}, we show in what follows an elementary proof via convex conjugacy, the duality relationship between equations \eqref{eq:primal}
 and \eqref{eq:dual}.
 
 \begin{theorem} For $0<\alpha<1$  Equations \eqref{eq:primal}
 and \eqref{eq:dual} are dual of one another. For $\alpha >1$ they are Toland Dual.
 \label{theo:Dual}
 \end{theorem}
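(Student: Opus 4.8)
The plan is to recognise both \eqref{eq:primal} and \eqref{eq:dual} as the two halves of a Fenchel-type duality applied to positive multiples of the relative entropy, the regimes $0<\alpha<1$ and $\alpha>1$ differing only through the sign of the coefficient $1-\alpha$. The single analytic ingredient I will use is the Gibbs/Donsker--Varadhan conjugacy identity: for a reference measure $\mu$ and $\gamma>0$, the map $R\mapsto\gamma\,\mathsf{KL}(R\|\mu)$ on probability measures (extended by $+\infty$ to finite signed measures, paired with a suitable class of measurable functions) is proper, convex and lower semicontinuous, and its convex conjugate is $h\mapsto\gamma\log\mathbb{E}_{\mu}e^{h/\gamma}$ with effective domain $\{h:\mathbb{E}_\mu e^{h/\gamma}<\infty\}$. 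This is exactly the biconjugacy underlying the Donsker--Varadhan formula, and I take it as known. Throughout I use that \eqref{eq:primal}, namely $(1-\alpha)D_\alpha(P\|Q)=\inf_R\{\alpha\,\mathsf{KL}(R\|P)+(1-\alpha)\,\mathsf{KL}(R\|Q)\}$, holds for all $\alpha$.

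\textbf{Case $0<\alpha<1$.} Here $\alpha>0$ and $1-\alpha>0$, so with $f(R)=\alpha\,\mathsf{KL}(R\|P)$ and $g(R)=(1-\alpha)\,\mathsf{KL}(R\|Q)$ the quantity $\inf_R\{f(R)+g(R)\}$ is a genuinely convex minimisation and \eqref{eq:primal} reads $(1-\alpha)D_\alpha(P\|Q)=\inf_R\{f(R)+g(R)\}$. Applying the Fenchel--Rockafellar theorem, under the qualification that $\operatorname{dom}f$ and $\operatorname{dom}g$ share a point of continuity (met as soon as $P\ll Q\ll P$ and $D_\alpha(P\|Q)<\infty$, in which case $R^{\star}\propto p^{\alpha}q^{1-\alpha}$ lies in both),
\[
\inf_R\{f(R)+g(R)\}=\sup_{h}\bigl\{-f^{*}(h)-g^{*}(-h)\bigr\}=\sup_h\Bigl\{-\alpha\log\mathbb{E}_P e^{h/\alpha}-(1-\alpha)\log\mathbb{E}_Q e^{-h/(1-\alpha)}\Bigr\}.
\]
The linear change of variable $h=\alpha(\alpha-1)\tilde h$ turns the right-hand side into $\sup_{\tilde h}\{-\alpha\log\mathbb{E}_P e^{(\alpha-1)\tilde h}-(1-\alpha)\log\mathbb{E}_Q e^{\alpha\tilde h}\}$, whose admissible set is precisely $\mathcal H$; dividing the whole identity by $\alpha(1-\alpha)>0$ and using $-\tfrac{1}{1-\alpha}=\tfrac{1}{\alpha-1}$ reproduces \eqref{eq:dual} verbatim. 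Hence for $0<\alpha<1$ the two displays are ordinary (Fenchel) duals of one another.

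\textbf{Case $\alpha>1$.} Now $1-\alpha<0$, so $\inf_R\{\alpha\,\mathsf{KL}(R\|P)+(1-\alpha)\,\mathsf{KL}(R\|Q)\}$ is no longer convex; writing $g(R)=\alpha\,\mathsf{KL}(R\|P)$ and $f(R)=(\alpha-1)\,\mathsf{KL}(R\|Q)$ (both proper convex lower semicontinuous) it equals $\inf_R\{g(R)-f(R)\}$, a difference-of-convex program. Toland's duality principle gives $\inf_R\{g(R)-f(R)\}=\inf_h\{f^{*}(h)-g^{*}(h)\}=\inf_h\{(\alpha-1)\log\mathbb{E}_Q e^{h/(\alpha-1)}-\alpha\log\mathbb{E}_P e^{h/\alpha}\}$. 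Substituting $h=\alpha(\alpha-1)\tilde h$, negating, and using $\inf_R\{g(R)-f(R)\}=(1-\alpha)D_\alpha(P\|Q)=-(\alpha-1)D_\alpha(P\|Q)$ yields $(\alpha-1)D_\alpha(P\|Q)=\sup_{\tilde h}\{\alpha\log\mathbb{E}_P e^{(\alpha-1)\tilde h}-(\alpha-1)\log\mathbb{E}_Q e^{\alpha\tilde h}\}$, and dividing by $\alpha(\alpha-1)>0$ is again exactly \eqref{eq:dual}. So in this regime the two displays are Toland duals, the sign flip of $1-\alpha$ being precisely what forces the passage from Fenchel to DC (Toland) duality.

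\textbf{Main obstacle.} The conceptual step is light; the work is functional-analytic bookkeeping: (i) fixing a dual pair of locally convex spaces for which $\mathsf{KL}(\cdot\|\mu)$ has the stated conjugate \emph{and} for which the admissible test functions include the exponential-integrability class $\mathcal H$ of \eqref{eq:dual} (an Orlicz-type pairing rather than $C_b$ versus finite measures); (ii) verifying the Fenchel--Rockafellar qualification and attainment when $\alpha\in(0,1)$, for which mutual absolute continuity of $P,Q$ together with $D_\alpha(P\|Q)<\infty$ is the natural sufficient condition; and (iii) checking the properness hypotheses of Toland's theorem when $\alpha>1$. The boundary cases need no separate treatment: if $D_\alpha(P\|Q)=\infty$ both sides of the relevant identity are infinite with the appropriate sign and the statement is vacuous there.
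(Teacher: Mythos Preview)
Your proposal is correct and follows essentially the same route as the paper: write the primal as a sum (for $0<\alpha<1$) or difference (for $\alpha>1$) of scaled $\mathsf{KL}$ functionals, invoke the Donsker--Varadhan conjugate $h\mapsto\gamma\log\mathbb{E}_\mu e^{h/\gamma}$, apply Fenchel--Rockafellar or Toland duality respectively, and then rescale $h\to\alpha(\alpha-1)\tilde h$ before dividing through. Your additional discussion of the dual pairing, the Fenchel qualification condition, and properness for Toland is more careful than the paper's own proof, which simply invokes these dualities formally without stating hypotheses.
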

 We collect in what follows elementary lemmas that will be instrumental to derive transportation inequalities in terms of the Rényi divergence. Proofs are given in the Appendix. 
 \begin{lemma} Let $\alpha \in (0,1)\cup (1,\infty)$, and define $\mathcal{H} =\{  h | e^{(\alpha-1)(h -\int hdP)} \in L^1(P) ,  e^{(\alpha)(h -\int h dQ)} \in L^1(Q) \}$.
We have for all $h \in \mathcal{H}$ and for $\alpha \in (0,1)\cup (1,\infty)$
\begin{equation*}
\int h dP - \int h dQ \leq \frac{1}{\alpha} D_{\alpha}(P || Q) -\frac{1}{\alpha-1} \log \left( \int e^{(\alpha-1) ( h -\int h dP) } dP \right) + \frac{1}{\alpha} \log \left( \int e^{\alpha ( h-   \int hdQ) } dQ \right)
\end{equation*}
\label{lem:varBRenyi}
\end{lemma}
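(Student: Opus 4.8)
The plan is to read the inequality off the Donsker--Varadhan--R\'enyi representation \eqref{eq:dual} of \citep{birrell2021variational} (equivalently, Theorem~\ref{theo:Dual}). Only the ``easy'' direction of that identity is used: for every $g$ with $e^{(\alpha-1)g}\in L^1(P)$ and $e^{\alpha g}\in L^1(Q)$,
\[
\frac{1}{\alpha-1}\log\left(\int e^{(\alpha-1)g}\,dP\right)-\frac{1}{\alpha}\log\left(\int e^{\alpha g}\,dQ\right)\ \leq\ \frac{1}{\alpha}\,D_{\alpha}(P\|Q).
\]
This is one application of H\"older's inequality to a suitable factorization of $p\,e^{(\alpha-1)g}$ (resp.\ of $p^{\alpha}q^{1-\alpha}$), with conjugate exponents $\bigl(\alpha,\tfrac{\alpha}{\alpha-1}\bigr)$ when $\alpha>1$ and $\bigl(\tfrac{1}{\alpha},\tfrac{1}{1-\alpha}\bigr)$ when $\alpha\in(0,1)$; since \eqref{eq:dual} is a supremum over $g$ it holds for every admissible $g$, so I may invoke \eqref{eq:dual} directly without revisiting the case split on the sign of $\alpha-1$.

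Next I would verify that any $h$ in the set $\mathcal{H}$ of the statement is admissible in \eqref{eq:dual}. Taking $\int h\,dP$ and $\int h\,dQ$ finite (implicit in the definition of $\mathcal{H}$), we have $e^{(\alpha-1)(h-\int h\,dP)}=e^{-(\alpha-1)\int h\,dP}\,e^{(\alpha-1)h}$, so $e^{(\alpha-1)(h-\int h\,dP)}\in L^1(P)$ is equivalent to $e^{(\alpha-1)h}\in L^1(P)$, and similarly $e^{\alpha(h-\int h\,dQ)}\in L^1(Q)$ is equivalent to $e^{\alpha h}\in L^1(Q)$. Hence $\mathcal{H}$ coincides with the domain appearing in \eqref{eq:dual} and such an $h$ is a legitimate test function.

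The key step is then to apply the displayed bound with $g=h$ and pull out the centering constants. With $a:=\int h\,dP$ and $b:=\int h\,dQ$ one has $\int e^{(\alpha-1)h}\,dP=e^{(\alpha-1)a}\int e^{(\alpha-1)(h-a)}\,dP$ and $\int e^{\alpha h}\,dQ=e^{\alpha b}\int e^{\alpha(h-b)}\,dQ$, hence
\[
\frac{1}{\alpha-1}\log\left(\int e^{(\alpha-1)h}\,dP\right)=a+\frac{1}{\alpha-1}\log\left(\int e^{(\alpha-1)(h-a)}\,dP\right),
\]
\[
\frac{1}{\alpha}\log\left(\int e^{\alpha h}\,dQ\right)=b+\frac{1}{\alpha}\log\left(\int e^{\alpha(h-b)}\,dQ\right).
\]
Substituting these two identities into the easy half of \eqref{eq:dual} and isolating $a-b=\int h\,dP-\int h\,dQ$ on the left-hand side yields exactly the asserted inequality; note that nothing in this manipulation depends on whether $\alpha<1$ or $\alpha>1$.

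The algebra is routine, so I expect the only real obstacle to be the integrability bookkeeping: one must ensure $\int h\,dP$ and $\int h\,dQ$ are finite for $h\in\mathcal{H}$ (Jensen's inequality applied to the convex maps $x\mapsto e^{(\alpha-1)x}$ and $x\mapsto e^{\alpha x}$ supplies the required one-sided bounds, the other side being part of the standing hypotheses), and that all the log-integrals above are finite, so that the rearrangement is a genuine inequality rather than a manipulation of $\pm\infty$.
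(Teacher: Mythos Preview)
Your proposal is correct and follows essentially the same route as the paper's proof: use the easy (per-$h$) direction of the variational identity \eqref{eq:dual} and then add/subtract $\int h\,dP$ and $\int h\,dQ$ inside the exponentials to pull out the means, which isolates $\int h\,dP-\int h\,dQ$ on the left. You are somewhat more explicit about the integrability bookkeeping than the paper, but the argument is the same.
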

 
\begin{lemma} The following limit holds for the Rényi divergence   $\lim_{\alpha \to 0} \frac{1}{\alpha} D_{\alpha}( P || Q) = \mathsf{KL} ( Q || P).$ 
\label{lem:limit0}
\end{lemma}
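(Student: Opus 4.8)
The plan is to reduce the Rényi divergence to the logarithmic moment generating function of the log-likelihood ratio under $Q$ and then apply L'Hôpital's rule at $\alpha=0$. Write $L=\log\frac{dP}{dQ}$ and
$$\Lambda(\alpha)=\log\int p^{\alpha}q^{1-\alpha}=\log\mathbb{E}_{Q}\!\left[e^{\alpha L}\right],$$
so that by definition $\frac{1}{\alpha}D_{\alpha}(P\|Q)=\frac{\Lambda(\alpha)}{\alpha(\alpha-1)}$. Since $\mathbb{E}_{Q}[e^{0\cdot L}]=1$ we have $\Lambda(0)=0$, and the denominator also vanishes at $\alpha=0$, so the ratio is a $0/0$ indeterminate form as $\alpha\to 0$.

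The first step is to show that $\Lambda$ is differentiable at $0$ with $\Lambda'(0)=\mathbb{E}_{Q}[L]=-\mathsf{KL}(Q\|P)$. The cleanest route is to observe that $\Lambda$ is the cumulant generating function of $L$ under $Q$: it is finite on $[0,1]$ (the endpoints give $\mathbb{E}_{Q}[1]=1$ and $\mathbb{E}_{Q}[\frac{dP}{dQ}]=1$), convex, hence smooth on $(0,1)$, and differentiation under the integral sign is legitimate because on a neighbourhood of $0$ the integrand $\big(\tfrac{dP}{dQ}\big)^{\alpha}$ is dominated by the $Q$-integrable function $1+\tfrac{dP}{dQ}$ (using $x^{\alpha}\le 1+x$ for $x\ge 0$, $\alpha\in[0,1]$), with a similar domination for the derivative. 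This gives $\Lambda'(\alpha)=\mathbb{E}_{Q}[L e^{\alpha L}]/\mathbb{E}_{Q}[e^{\alpha L}]$ and in particular $\Lambda'(0)=\mathbb{E}_{Q}[L]=-\mathsf{KL}(Q\|P)$.

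Then one applies L'Hôpital's rule to $\Lambda(\alpha)/(\alpha(\alpha-1))$: the numerator's derivative tends to $\Lambda'(0)=-\mathsf{KL}(Q\|P)$ and the denominator's derivative $2\alpha-1$ tends to $-1$, whence
$$\lim_{\alpha\to 0}\frac{1}{\alpha}D_{\alpha}(P\|Q)=\frac{-\mathsf{KL}(Q\|P)}{-1}=\mathsf{KL}(Q\|P),$$
which is the claim. Equivalently, the first-order expansion $\mathbb{E}_{Q}[e^{\alpha L}]=1-\alpha\,\mathsf{KL}(Q\|P)+o(\alpha)$ yields $\Lambda(\alpha)=-\alpha\,\mathsf{KL}(Q\|P)+o(\alpha)$, and dividing by $\alpha(\alpha-1)=-\alpha(1-\alpha)$ gives the same conclusion.

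I expect the only genuine obstacle to be the justification of the limit–integral interchange in computing $\Lambda'(0)$, i.e. handling the case where $\mathsf{KL}(Q\|P)$ is infinite or the likelihood ratio is poorly behaved; here the domination $x^{\alpha}\le 1+x$ secures continuity of $\Lambda$, and invoking convexity of $\Lambda$ together with monotonicity of difference quotients (so that $\Lambda'(0^{+})=\inf_{\alpha>0}\Lambda(\alpha)/\alpha$) recovers $\Lambda'(0^{+})=\mathbb{E}_{Q}[L]$ even in the extended-real setting. As an alternative entirely within the variational framework, one can start from \eqref{eq:primal} written as $\tfrac{1-\alpha}{\alpha}D_{\alpha}(P\|Q)=\inf_{R}\big[\mathsf{KL}(R\|P)+\tfrac{1-\alpha}{\alpha}\mathsf{KL}(R\|Q)\big]$, note the right side is nondecreasing in $t:=\tfrac{1-\alpha}{\alpha}$, is $\le\mathsf{KL}(Q\|P)$ (take $R=Q$), and that any near-minimiser satisfies $\mathsf{KL}(R\|Q)\to 0$, hence $R\to Q$ in total variation, so lower semicontinuity of $\mathsf{KL}(\cdot\|P)$ forces the limit to be exactly $\mathsf{KL}(Q\|P)$; multiplying by $\tfrac{1}{1-\alpha}\to 1$ concludes.
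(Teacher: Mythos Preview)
Your argument is correct but takes a very different route from the paper. The paper's proof is a two-line reduction: it invokes the skew-symmetry identity $\tfrac{1}{\alpha}D_{\alpha}(P\|Q)=\tfrac{1}{1-\alpha}D_{1-\alpha}(Q\|P)$ for $\alpha\in(0,1)$ (Proposition~2 in \citep{renyiPaper}), so that as $\alpha\to 0$ the right-hand side tends to $D_{1}(Q\|P)=\mathsf{KL}(Q\|P)$. This is cleaner and offloads all analytic work onto two known facts from the R\'enyi literature, but it is also a black box. Your approach is more self-contained: you identify $\tfrac{1}{\alpha}D_{\alpha}(P\|Q)=\Lambda(\alpha)/(\alpha(\alpha-1))$ with $\Lambda$ the cumulant generating function of $L=\log\frac{dP}{dQ}$ under $Q$, and then extract the limit by computing $\Lambda'(0)=\mathbb{E}_{Q}[L]=-\mathsf{KL}(Q\|P)$. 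This buys you an explicit picture of \emph{why} the limit is $\mathsf{KL}(Q\|P)$ (it is literally the slope of $\Lambda$ at the origin), and your convexity/monotone-difference-quotient argument cleanly handles the case $\mathsf{KL}(Q\|P)=+\infty$ without needing dominated convergence for the derivative. Your alternative via the variational formula \eqref{eq:primal} is also sound and in fact closer in spirit to the paper's overall framework, though still not the route the paper takes for this particular lemma.
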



\paragraph{Transportation Inequalities with Rényi Divergence.} The following theorem shows that when considering the tails of $\pi$ we can obtain tighter upper bounds using the Rényi divergence that is more tail adaptive:
\begin{theorem} [Tail Adaptive Transportation Inequalities] Let $\alpha \in (0,1)$. Assume $r_{\sharp}\pi \in  \mathsf{SubGauss}(\sigma^2_{\pi}) $ and $ r_{\sharp}\pi_{\mathrm{ref}}\in \mathsf{SubGauss}(\sigma^2_{\mathrm{ref}}) $ then we have  for all $\alpha \in (0,1)$:
\begin{align}
\mathbb{E}_{ \pi} r- \mathbb{E}_{ \pi_{\mathrm{ref}} } r   \leq \sqrt{2((1-\alpha)\sigma^2_{\pi} + \alpha \sigma^2_{\mathrm{ref}})  \frac{D_{\alpha} (\pi  ||  \pi_{\mathrm{ref}})}{\alpha}}.
\label{eq:tailadapt}
  \end{align}  
\label{theo:tailadapt}
\end{theorem}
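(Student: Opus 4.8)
The plan is to reuse the Donsker--Varadhan transportation argument behind Proposition~\ref{theo:subgaussTransp}, but with the Rényi variational inequality of Lemma~\ref{lem:varBRenyi} in place of the $\mathsf{KL}$ one. Concretely, I would apply Lemma~\ref{lem:varBRenyi} with $P=\pi$, $Q=\pi_{\mathrm{ref}}$, and the test function $h=\lambda r$ for a free parameter $\lambda>0$; then control the two centered log-moment-generating-function terms that appear on the right-hand side using the sub-Gaussian hypotheses on $r_\sharp\pi$ and $r_\sharp\pi_{\mathrm{ref}}$; and finally optimize over $\lambda$.

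For the details: since $r_\sharp\pi\in\mathsf{SubGauss}(\sigma^2_\pi)$ and $r_\sharp\pi_{\mathrm{ref}}\in\mathsf{SubGauss}(\sigma^2_{\mathrm{ref}})$, the expectations $\mathbb{E}_\pi r$ and $\mathbb{E}_{\pi_{\mathrm{ref}}}r$ are finite and $e^{t(r-\mathbb{E}_\pi r)}\in L^1(\pi)$, $e^{t(r-\mathbb{E}_{\pi_{\mathrm{ref}}}r)}\in L^1(\pi_{\mathrm{ref}})$ for every $t\in\mathbb{R}$, so $h=\lambda r$ is admissible in Lemma~\ref{lem:varBRenyi}. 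Writing $\psi_\mu(t)=\log\mathbb{E}_\mu e^{t(r-\mathbb{E}_\mu r)}$ for $\mu\in\{\pi,\pi_{\mathrm{ref}}\}$, Lemma~\ref{lem:varBRenyi} gives
\begin{align*}
\lambda\big(\mathbb{E}_\pi r-\mathbb{E}_{\pi_{\mathrm{ref}}}r\big)\leq \frac{D_\alpha(\pi||\pi_{\mathrm{ref}})}{\alpha}-\frac{1}{\alpha-1}\psi_\pi\big((\alpha-1)\lambda\big)+\frac{1}{\alpha}\psi_{\pi_{\mathrm{ref}}}(\alpha\lambda).
\end{align*}
Since $0<\alpha<1$ we have $\alpha-1<0$, hence $-\tfrac{1}{\alpha-1}>0$; using the sub-Gaussian bound $\psi_\pi\big((\alpha-1)\lambda\big)\leq \tfrac{(\alpha-1)^2\lambda^2\sigma^2_\pi}{2}$ at the negative argument $(\alpha-1)\lambda$ and multiplying through by the positive constant $-\tfrac{1}{\alpha-1}$ yields $-\tfrac{1}{\alpha-1}\psi_\pi\big((\alpha-1)\lambda\big)\leq \tfrac{(1-\alpha)\lambda^2\sigma^2_\pi}{2}$, and likewise $\tfrac{1}{\alpha}\psi_{\pi_{\mathrm{ref}}}(\alpha\lambda)\leq \tfrac{\alpha\lambda^2\sigma^2_{\mathrm{ref}}}{2}$. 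Writing $S:=(1-\alpha)\sigma^2_\pi+\alpha\sigma^2_{\mathrm{ref}}$ and dividing by $\lambda>0$ we obtain
\begin{align*}
\mathbb{E}_\pi r-\mathbb{E}_{\pi_{\mathrm{ref}}}r\leq \frac{D_\alpha(\pi||\pi_{\mathrm{ref}})}{\alpha\lambda}+\frac{\lambda S}{2}\qquad\text{for every }\lambda>0.
\end{align*}

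Finally, minimizing the right-hand side over $\lambda>0$, at $\lambda^\star=\sqrt{2D_\alpha(\pi||\pi_{\mathrm{ref}})/(\alpha S)}$, produces the value $\sqrt{2S\,D_\alpha(\pi||\pi_{\mathrm{ref}})/\alpha}$, which is exactly the bound \eqref{eq:tailadapt}. I expect the only delicate point to be the sign bookkeeping around the factor $\alpha-1<0$: one must invoke sub-Gaussianity at a negative argument and remember that multiplying the resulting inequality by $-1/(\alpha-1)>0$ preserves its direction. The degenerate cases are harmless — if $D_\alpha(\pi||\pi_{\mathrm{ref}})=\infty$ the claim is trivial, and if $\pi=\pi_{\mathrm{ref}}$ both sides vanish (taking $\lambda\to 0^+$) — so there is no genuine obstacle beyond carrying out the above steps carefully.
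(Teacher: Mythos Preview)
Your proposal is correct and follows essentially the same route as the paper's proof: apply Lemma~\ref{lem:varBRenyi} with $h=\lambda r$, bound each centered log-MGF term via the sub-Gaussian hypotheses (handling the sign of $\alpha-1<0$ exactly as you describe), and optimize over $\lambda>0$. Your version is in fact slightly cleaner than the paper's, which detours through a bounded-reward assumption and an auxiliary cutoff $M$ to justify admissibility of $h_\lambda$, whereas you correctly observe that sub-Gaussianity already guarantees $e^{tr}\in L^1$ for all $t\in\mathbb{R}$.
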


In particular if there exits $\alpha \in (0,1)$ such that $D_{\alpha}(\pi||\pi_{\mathrm{ref}}) \leq \frac{\alpha \sigma^2_{\mathrm{ref}}}{(1-\alpha)\sigma^2_{\pi} + \alpha \sigma^2_{\mathrm{ref}}} \mathsf{KL}(\pi || \pi_{\mathrm{ref}})$, then the tail adaptive upper bound given in Equation \eqref{eq:tailadapt}  is tighter than the one provided by the tails of $\pi_{\mathrm{ref}}$ only i.e  $\sqrt{\sigma^2_{\mathrm{ref}}   \mathsf{KL}(\pi || \pi_{\mathrm{ref}}) } $.  Note that this is possible because $D_{\alpha}$ is increasing in $\alpha \in (0,1)$ \citep{renyiPaper}, i.e  $D_{\alpha}(\pi||\pi_{\mathrm{ref}})\leq \mathsf{KL}(\pi || \pi_{\mathrm{ref}}) $, and $\frac{\alpha \sigma^2_{\mathrm{ref}}}{(1-\alpha)\sigma^2_{\pi} + \alpha \sigma^2_{\mathrm{ref}}} \leq 1$. Note that taking limits $\alpha \to 0 $ (applying Lemma \ref{lem:limit0}) and $\alpha \to 1$, and taking the minimum of the upper bounds we obtain:
\begin{align*}
\mathbb{E}_{ \pi} r- \mathbb{E}_{ \pi_{\mathrm{ref}} } r  \leq \sqrt{2 \min(\sigma^2_{\pi_{\mathrm{ref}}} \mathsf{KL}(\pi || \pi_{\mathrm{ref}}), \sigma^2_{\pi} \mathsf{KL}( \pi_{\mathrm{ref}} || \pi ) ) },
  \end{align*}    
this inequality can be also obtained by applying  Proposition \ref{theo:subgaussTransp} twice: on the tails of $\pi$ and $\pi_{\mathrm{ref}}$ respectively.

 Another important implication of Theorem \ref{theo:tailadapt}, other than tighter than $\mathsf{KL}$ upper bound, is that if we were to change  the RL alignment problem \eqref{eq:prob} to be constrained by $D_{\alpha}, \alpha \in (0,1)$ instead of   $\mathsf{KL}$, we may end up with a  smaller upper limit on the reward improvement.  This $D_{\alpha}$ constrained alignment may lead to a policy that under-performs when compared to a policy obtained with the $\mathsf{KL}$ constraint. This was indeed observed experimentally in \citep{wang2024beyond} that used constraints with $\alpha$- divergences for $\alpha \in (0,1) $ (that are related to Rényi divergences) and noticed a degradation in the reward improvement w.r.t the policy obtained using $\mathsf{KL}$ constraints. 
 
 \section{Transportation Inequality Transfer From Proxy to Golden Reward}\label{sec:gold}
 As we saw in the previous sections, the tightness of  $\sqrt{\mathsf{KL}(\pi || \pi_{\mathrm{ref}})}$ upper bound in alignment can be due to the tails of the reward of the aligned  policy $\pi$ (Theorem \ref{theo:tailadapt}) and to the concentration around the mean in finite sample size (Theorem \ref{theo:highprob}). Another important consideration is the mismatch between the golden reward $r^*$ that one desires to maximize that is expensive and difficult to obtain (for example human evaluation) and a proxy reward $r$ that approximates $r^*$.  The proxy reward $r$ is used instead of $r^*$ in RL and in best of $n$ policy. While we may know the tails of the reward $r$ of the reference and aligned model, we don't have access to this information on the golden reward $r^*$. We show in this section how to transfer transportation inequalities from $r$ to $r^*$ for RL and Best of $n$ policy.

\begin{proposition} [$r^*$ Transportation Inequality for RL Policy ] The following inequality holds:
\[  \mathbb{E}_{\pi_{\beta,r}} r^*  - \mathbb{E}_{\pi_{\mathrm{ref}}} r^*  \leq \mathbb{E}_{\pi_{\beta,r}} r   -  \mathbb{E}_{\pi_{\mathrm{ref}}} r  -  \frac{1}{\beta} \log \left(  \int e^{\beta(r-r^* - \left( \int r d\pi_{\mathrm{ref}} - \int r^* d\pi_{\mathrm{ref}} \right) } d\pi_{\beta,r^*} \right),\]
 Assume $r_{\sharp}\pi_{\mathrm{ref}} \in  \mathsf{SubGauss}(\sigma^2_{\mathrm{ref}})$, and there exists $\delta > 0$ such that:  $$\frac{1}{\beta} \log \left(  \int e^{\beta(r-r^* - \left( \int r d\pi_{\mathrm{ref}} - \int r^* d\pi_{\mathrm{ref}} \right) } d\pi_{\beta,r^*} \right) \geq \delta \mathsf{KL}(\pi_{\beta,r^*} || \pi_{\mathrm{ref}}), $$  then we have:
 \[ \mathbb{E}_{\pi_{\beta,r}} r^*  - \mathbb{E}_{\pi_{\mathrm{ref}}} r^*  \leq  \sqrt{2\sigma^2_{\mathrm{ref}} \mathsf{KL} (\pi_{\beta,r} || \pi_{\mathrm{ref}})}  - \delta  \mathsf{KL}(\pi_{\beta,r^*} || \pi_{\mathrm{ref}}). \]
 \label{pro:TransferRL}
\end{proposition}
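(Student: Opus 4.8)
The plan is to prove the first displayed inequality as an exact Gibbs–variational identity up to a nonnegative Kullback–Leibler term that is then discarded, and to feed the result into the $\mathsf{KL}$ transportation inequality of Proposition~\ref{theo:subgaussTransp}.

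First I would split the golden-reward improvement by writing $r^{*} = r - (r - r^{*})$, which gives
\[
\mathbb{E}_{\pi_{\beta,r}} r^{*} - \mathbb{E}_{\pi_{\mathrm{ref}}} r^{*} = \big(\mathbb{E}_{\pi_{\beta,r}} r - \mathbb{E}_{\pi_{\mathrm{ref}}} r\big) - \big(\mathbb{E}_{\pi_{\beta,r}}(r - r^{*}) - \mathbb{E}_{\pi_{\mathrm{ref}}}(r - r^{*})\big),
\]
so it suffices to evaluate the last bracket. Writing $c := \int r\, d\pi_{\mathrm{ref}} - \int r^{*}\, d\pi_{\mathrm{ref}}$, that bracket equals $\mathbb{E}_{\pi_{\beta,r}}(r-r^{*}) - c$.

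The key structural fact I would use is that tilting $\pi_{\beta,r^{*}}$ by $e^{\beta(r - r^{*})}$ recovers $\pi_{\beta,r}$: by the closed form \eqref{eq:PenRL} one has $e^{\beta(r-r^{*})} e^{\beta r^{*}} \pi_{\mathrm{ref}} = e^{\beta r}\pi_{\mathrm{ref}}$, hence $e^{\beta(r-r^{*})}\, d\pi_{\beta,r^{*}} = (Z_{r}/Z_{r^{*}})\, d\pi_{\beta,r}$ where $Z_{r} = \int e^{\beta r}\, d\pi_{\mathrm{ref}}$ and $Z_{r^{*}} = \int e^{\beta r^{*}}\, d\pi_{\mathrm{ref}}$ are both finite. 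Therefore, in the Donsker--Varadhan / Gibbs variational identity (see Appendix~\ref{app:TranspKL})
\[
\log\Big(\int e^{\beta(r - r^{*} - c)}\, d\pi_{\beta,r^{*}}\Big) = \sup_{\pi}\Big\{\beta\,\mathbb{E}_{\pi}(r-r^{*}-c) - \mathsf{KL}(\pi\,\|\,\pi_{\beta,r^{*}})\Big\},
\]
the supremum is attained at $\pi = \pi_{\beta,r}$, which yields
\[
\frac{1}{\beta}\log\Big(\int e^{\beta(r - r^{*} - c)}\, d\pi_{\beta,r^{*}}\Big) = \mathbb{E}_{\pi_{\beta,r}}(r - r^{*}) - c - \frac{1}{\beta}\,\mathsf{KL}(\pi_{\beta,r}\,\|\,\pi_{\beta,r^{*}}).
\]
Substituting this into the split above and discarding the nonnegative term $\tfrac{1}{\beta}\mathsf{KL}(\pi_{\beta,r}\|\pi_{\beta,r^{*}})$ gives the first displayed inequality (in fact with explicit slack $\tfrac{1}{\beta}\mathsf{KL}(\pi_{\beta,r}\|\pi_{\beta,r^{*}})$).

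For the second display, I would apply item~(1) of Proposition~\ref{theo:subgaussTransp} to $\pi = \pi_{\beta,r}$ under $r_{\sharp}\pi_{\mathrm{ref}} \in \mathsf{SubGauss}(\sigma^{2}_{\mathrm{ref}})$ to obtain $\mathbb{E}_{\pi_{\beta,r}} r - \mathbb{E}_{\pi_{\mathrm{ref}}} r \le \sqrt{2\sigma^{2}_{\mathrm{ref}}\,\mathsf{KL}(\pi_{\beta,r}\|\pi_{\mathrm{ref}})}$, and then substitute the assumed lower bound $\tfrac{1}{\beta}\log(\cdots) \ge \delta\,\mathsf{KL}(\pi_{\beta,r^{*}}\|\pi_{\mathrm{ref}})$ into the first inequality. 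The main obstacle is purely integrability bookkeeping: one must check $Z_{r}, Z_{r^{*}} < \infty$ so that the policies and the log-moment generating function are well defined, and that $r - r^{*}$ is $\pi_{\beta,r}$-integrable (equivalently $\mathsf{KL}(\pi_{\beta,r}\|\pi_{\beta,r^{*}}) < \infty$) so that the rearrangement producing the identity above is legitimate; granting these, the remainder is an algebraic identity together with a single invocation of the previously established transportation bound.
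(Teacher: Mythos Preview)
Your proof is correct. Both arguments rest on the Donsker--Varadhan (Gibbs) variational formula together with the exponential-tilt relation linking $\pi_{\beta,r}$ and $\pi_{\beta,r^*}$, but you organise them differently. The paper applies the variational bound with base measure $\pi_{\mathrm{ref}}$ and test function $\beta r^*$ (plugging in $\nu=\pi_{\beta,r}$), then uses the optimality identity $\mathsf{KL}(\pi_{\beta,r}\|\pi_{\mathrm{ref}})=\beta\int r\,d\pi_{\beta,r}-\log Z_r$ and finally rewrites the resulting $\tfrac{1}{\beta}\log(Z_{r^*}/Z_r)$ as $-\tfrac{1}{\beta}\log\int e^{\beta(r-r^*-c)}\,d\pi_{\beta,r^*}$; you instead split $r^*=r-(r-r^*)$ up front and apply the variational identity with base $\pi_{\beta,r^*}$ and test function $\beta(r-r^*)$, exploiting that its maximiser is exactly $\pi_{\beta,r}$. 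Your route is slightly more direct and, as you note, exposes the exact slack $\tfrac{1}{\beta}\mathsf{KL}(\pi_{\beta,r}\|\pi_{\beta,r^*})$ in the first inequality, which the paper's derivation leaves implicit (it is the gap in the step $\tfrac{1}{\beta}\log\int e^{\beta r^*}d\pi_{\mathrm{ref}}\ge \int r^*d\pi_{\beta,r}-\tfrac{1}{\beta}\mathsf{KL}(\pi_{\beta,r}\|\pi_{\mathrm{ref}})$). The second part is identical in both.
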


Note that $\frac{1}{\beta} \log \left(  \int e^{\beta(r-r^* - \left( \int r d\pi_{\mathrm{ref}} - \int r^* d\pi_{\mathrm{ref}} \right) } d\pi_{\beta,r^*} \right)$ is interpreted here as an interpolation between the mean and the maximum of its argument on the support of $\pi_{\beta,r^*}$ (Proposition 9 in \citep{feydy2018interpolating}). Indeed as $\beta \to 0$, this boils down to the mean on $\int (r-r^*) d\pi_{\beta,r^*} - \left( \int r d\pi_{\mathrm{ref}} - \int r^* d\pi_{\mathrm{ref}} \right) $ and $\beta \to \infty$ this boils down to $\max_{\mathsf{supp}{\pi_{\beta,r^*}}} \{ r-r^* - \left( \int r d\pi_{\mathrm{ref}} - \int r^* d\pi_{\mathrm{ref}}\right) \}$. Our assumption means that  $r$ overestimates $r^*$ and the overestimation is accentuated as we drift from $\pi_{\mathrm{ref}}$ on which $r$ was learned. This assumption echoes findings in  \citep{gao2023scaling} that show that the transportation inequalities suffer from overestimation of proxy reward models of the golden reward (See Figure 8 in \citep{gao2023scaling}). 

Note that in Proposition \ref{pro:TransferRL}, we are evaluating the golden reward $r^*$ improvement when using the proxy reward optimal policy $\pi_{\beta,r}$. We see that the golden reward of the RL policy inherits the transportation inequality from the proxy one but the improvement of the reward is hindered by possible overestimation of the golden reward by the proxy model. This explains the dip in performance as measured by the golden reward depicted  in Figure \ref{fig:reward_vs_kl} and reported in  \citep{gao2023scaling}. 
%

\begin{proposition}[$r^*$ Transportation Inequality for Best of $n$ Policy] Let $\varepsilon >0$. Let $r$ be a surrogate reward such that $\nor{r-r^*}_{\infty} \leq \varepsilon$ and assume $r_{\sharp}\pi_{\mathrm{ref}} \in  \mathsf{SubGauss}(\sigma^2_{\mathrm{ref}})$  then the best of $n$ policy $\pi^{(n)}_{r, \mathrm{ref}}$ satisfies: 
\[ \mathbb{E}_{\pi^{(n)}_{r,\mathrm{ref}}} (r^*) - \mathbb{E}_{\pi_{\mathrm{ref}}} (r^*)  \leq \sqrt{2\sigma^2_{\mathrm{ref}} \left( \log(n) - \frac{n-1}{n}\right)} +2 \varepsilon \left( \left(\frac{1}{n}\right)^{\frac{1}{n-1}}-  \left(\frac{1}{n}\right)^{\frac{n}{n-1}}\right). \]
\label{pro:goohardtBestofn}
\end{proposition}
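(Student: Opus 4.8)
The plan is to split the golden reward improvement into a proxy part and an approximation part via $r^* = r + (r^*-r)$, so that
\[
\mathbb{E}_{\pi^{(n)}_{r,\mathrm{ref}}}(r^*) - \mathbb{E}_{\pi_{\mathrm{ref}}}(r^*)
= \Big(\mathbb{E}_{\pi^{(n)}_{r,\mathrm{ref}}}(r) - \mathbb{E}_{\pi_{\mathrm{ref}}}(r)\Big)
+ \Big(\mathbb{E}_{\pi^{(n)}_{r,\mathrm{ref}}}(r^*-r) - \mathbb{E}_{\pi_{\mathrm{ref}}}(r^*-r)\Big),
\]
and to bound each bracket separately. The first bracket is exactly the proxy reward improvement of the best of $n$ policy, so I would invoke item (2) of Corollary \ref{cor:bounds} — valid here since $r_{\sharp}\pi_{\mathrm{ref}} \in \mathsf{SubGauss}(\sigma^2_{\mathrm{ref}})$ and the best of $n$ $\mathsf{KL}$ bound of Theorem \ref{Theo:/boundKL} applies — to get the upper bound $\sqrt{2\sigma^2_{\mathrm{ref}}(\log n - \frac{n-1}{n})}$.

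For the second bracket, I would use that $g := r^* - r$ has $\nor{g}_{\infty} \le \varepsilon$, together with the elementary inequality $\big|\mathbb{E}_P g - \mathbb{E}_Q g\big| \le \nor{g}_{\infty}\,\nor{P-Q}_{1} = 2\nor{g}_{\infty}\,\mathsf{TV}(P,Q)$ applied with $P = \pi^{(n)}_{r,\mathrm{ref}}$ and $Q = \pi_{\mathrm{ref}}$, giving the bound $2\varepsilon\,\mathsf{TV}(\pi^{(n)}_{r,\mathrm{ref}}, \pi_{\mathrm{ref}})$. It then remains to bound the total variation. Since $\mathsf{TV}$ is the $f$-divergence with $f(x) = \tfrac12|x-1|$ and obeys the data-processing inequality (for deterministic and for stochastic maps alike), the reduction to exponentials used for $\mathsf{KL}$ in Section \ref{sec:align} applies verbatim: conditionally on $X$ one has $\mathsf{TV}(\pi^{(n)}_{r,\mathrm{ref}}(\cdot|X), \pi_{\mathrm{ref}}(\cdot|X)) \le \mathsf{TV}(T_X(E^{(n)}), T_X(E)) \le \mathsf{TV}(E^{(n)}, E)$, and averaging over $X$ yields $\mathsf{TV}(\pi^{(n)}_{r,\mathrm{ref}}, \pi_{\mathrm{ref}}) \le \mathsf{TV}(E^{(n)}, E)$ — equivalently, the total-variation row of Table \ref{tab:f_divergence}.

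The value $\mathsf{TV}(E^{(n)}, E)$ is obtained by the substitution $u = 1 - e^{-x}$, which turns $\tfrac12\int_0^\infty e^{-x}\big|n(1-e^{-x})^{n-1} - 1\big|\,dx$ into $\tfrac12\int_0^1 |nu^{n-1} - 1|\,du$; splitting the integral at $u_0 = (1/n)^{1/(n-1)}$, where $nu^{n-1} = 1$, and integrating the two pieces (each equal to $u_0 - u_0^n$) gives $\mathsf{TV}(E^{(n)}, E) = (1/n)^{1/(n-1)} - (1/n)^{n/(n-1)}$. Adding the two bracket bounds then yields the claimed inequality.

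There is no deep step here: the proof is a bookkeeping combination of the best of $n$ $\mathsf{KL}$ reduction (Theorem \ref{Theo:/boundKL}), the sub-Gaussian transportation inequality (Proposition \ref{theo:subgaussTransp} / Corollary \ref{cor:bounds}), and an elementary one-dimensional integral. The only points that need care are getting the constant in front of $\varepsilon$ right — it is $2$, since $\nor{P-Q}_1 = 2\,\mathsf{TV}(P,Q)$, not $\mathsf{TV}(P,Q)$ — and checking that the TV reduction to exponentials remains valid in the surjective-reward, general-$\mathcal{Y}$ setting, which it does for exactly the reason the $\mathsf{KL}$ argument does: the data-processing inequality holds for every $f$-divergence and for the stochastic maps $H_X$ of Assumption \ref{assum:rewardStochasticmap}. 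If one prefers to avoid even re-deriving the one-dimensional integral, the total-variation bound can simply be quoted from Table \ref{tab:f_divergence}.
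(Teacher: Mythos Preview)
Your proposal is correct and follows essentially the same route as the paper: decompose $r^*=r+(r^*-r)$, bound the proxy part via Corollary~\ref{cor:bounds}(2) and the approximation part by $2\varepsilon\,\mathsf{TV}(\pi^{(n)}_{r,\mathrm{ref}},\pi_{\mathrm{ref}})$, then use the DPI/exponential reduction (Table~\ref{tab:f_divergence}) to evaluate the total variation. Your write-up is in fact slightly more detailed than the paper's on the TV computation, but there is no substantive difference in the argument.
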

\vskip -0.2in
Transportation inequalities transfers for the  best of $n$ policy from $r$ to $r^*$ and pays only an additional error term $\nor{r-r^*}_{\infty} \mathsf{TV}( \pi^{(n)}_{r,\mathrm{ref}}  | \pi_{\mathrm{ref}}) $ , an upper bound of this total variation as a function of $n$ is given in Table \ref{tab:f_divergence}. As mentioned in remark \ref{rem:remark}, if we have lower bounds on the tail of the reference reward, then we also have a lower bound on the reward improvement that scales like $C \sqrt{ \sigma^2_{\ell} \log(n)} -2 \varepsilon \left( \left(\frac{1}{n}\right)^{\frac{1}{n-1}}-  \left(\frac{1}{n}\right)^{\frac{n}{n-1}}\right).$  This is in line with empirical findings in \citep{Goodhart} \citep{gao2023scaling} that showed that best of $n$ policy is resilient as the reward model $r$ gets closer to $r^*$.

\section{Conclusion}
We presented in this paper a comprehensive information  theoretical analysis of policy alignment using reward optimization with RL and best of $n$ sampling. We showed for best of $n$ a bound on $\mathsf{KL}$ under realistic assumptions on the reward. Our analysis showed that the alignment reward improvement, is intrinsically constrained by the tails of the reward under the reference policy and controlling the $\mathsf{KL}$ divergence results in an upper bound of the policy improvement. We showed that the $\mathsf{KL}$ bound may not be tight if the tails of the optimized policy satisfy a condition expressed via Rényi divergence. We also explained the deterioration of the golden reward via overestimation of the proxy reward.    

\bibliographystyle{abbrvnat}

\bibliography{iclr2024_conference.bib}
\newpage

\appendix

\section{Broader Impact and Limitations}
We believe this work explaining scaling laws for reward models and alignment will give practitioners insights regarding the limits of what is attainable via alignment. All assumptions under which our statements hold are given. We don't see any negative societal impact of our work. 
\section{Proofs For Best of $n$ Policy}

\subsection{Best of $n$ Policy $\mathsf{KL}$ Guarantees}

\begin{proof}[Proof of Lemma \ref{lemma:KLexp}]
\begin{align*}
\mathsf{KL}(E^{(n)} || E) &= \int_{0}^{+ \infty} f_{E^{(n)}}(x) \log\left( \frac{f_{E^{(n)}}(x) }{ f_{E}(x)}\right)dx
\end{align*}
We have $f_{E}(x)= e^{-x}\mathrm{1}_{x\geq 0}$. Note that the CDF of maximum of exponential $F_{E^{(n)}}(x)=(1-e^{-x}) \mathrm{1}_{x\geq 0},$ and hence $f_{E^{(n)}}(x)= n(1-e^{-x})^{n-1}e^{-x}\mathrm{1}_{x\geq 0} $. Hence we have:
\begin{align*}
\mathsf{KL}(E^{(n)} || E) &= \int_{0}^{+ \infty} n(1-e^{-x})^{n-1}e^{-x} \log\left( \frac{n(1-e^{-x})^{n-1}e^{-x}}{ e^{-x}}\right)dx\\
&= \int_{0}^{+ \infty} n(1-e^{-x})^{n-1}e^{-x} \log\left( n(1-e^{-x})^{n-1}\right)dx
\end{align*}
Let $u=1-e^{-x}$, we have $du =e^{-x}dx$. It follows that :
\begin{align*}
\mathsf{KL}(E^{(n)} || E) &= \int_{0}^{1} nu^{n-1} \log\left( nu^{n-1}\right)du\\
&= \int_{0}^{1} nu^{n-1} \left( \log (n)  + (n-1) \log(u)\right) du\\
& = \log(n) \int_{0}^{1}du^n + (n-1)\int_0^1 nu^{n-1}\log(u) du\\
&=  \log(n) + (n-1) \int_0^1 d (u^n \log u - \frac{ u^n}{n})\\
&= \log(n) -\frac{n-1}{n}.
\end{align*}
\end{proof}

\subsection{Best of n Policy f divergence and Rényi Divergence}

\paragraph{Best of $n$ Policy  $f$ divergence and Renyi  divergence Guarantees}

Given that our proof technique relies on DPI and Rényi representation, we show that similar results hold for any $f$-divergence and for the Rényi divergence:
\begin{equation}
D_{f}\left(P||Q\right) = \int q(x)f\left(\frac{p(x)}{q(x)}\right) dx,
\end{equation}
where $f$ is  convex  and $f(1)=0$. 
Hence we have by DPI for $f$-divergences:
\begin{assbox}
\begin{theorem} Under Assumption \ref{assum:rewardStochasticmap} the best of n policy satisfies for any f-divergence:
\begin{equation}
 \mathrm{D}_f(   \pi^{(n)}_{r,\mathrm{ref}} || \pi_{\mathrm{ref}} ) \leq \int_{0}^{1}  f\left( nu^{n-1}\right)du
 \end{equation}

 \label{theo:dfiv}
\end{theorem}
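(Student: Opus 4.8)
The plan is to repeat the reduction-to-exponentials argument used for Theorem \ref{Theo:/boundKL}, replacing each invocation of the data processing inequality for $\mathsf{KL}$ by the analogous statement for $f$-divergences. The only structural fact needed is that $f$-divergences $D_f(P\|Q)=\int q\,f(p/q)$, with $f$ convex and $f(1)=0$, are monotone under Markov kernels, i.e.\ $D_f(KP\|KQ)\le D_f(P\|Q)$ for every stochastic map $K$; this follows from the joint convexity of $(p,q)\mapsto q\,f(p/q)$ together with Jensen's inequality and requires no smoothness of $f$.

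First I would fix the prompt $X$ and use, exactly as in the chain \eqref{eq:chain}, the identities $\pi^{(n)}_{r,\mathrm{ref}}(\cdot|X)\stackrel{d}{=}H_X(R^{(n)}(Y))|X$ and $\pi_{\mathrm{ref}}(\cdot|X)\stackrel{d}{=}H_X(R(Y))|X$ coming from Assumption \ref{assum:rewardStochasticmap}, together with the optimal-transport and Rényi representations $R(Y)|X\stackrel{d}{=}T_X(E)$ and $R^{(n)}(Y)|X\stackrel{d}{=}T_X(E^{(n)})$ established in \eqref{eq:OTandRenyi}. Applying the $f$-divergence DPI first through the stochastic kernel $H_X$ and then through the (deterministic, hence Markov) map $T_X$ yields
\[ D_f\bigl(\pi^{(n)}_{r,\mathrm{ref}}\,\|\,\pi_{\mathrm{ref}}\mid X\bigr)\;\le\;D_f\bigl(R^{(n)}(Y)\,\|\,R(Y)\mid X\bigr)\;=\;D_f\bigl(T_X(E^{(n)})\,\|\,T_X(E)\bigr)\;\le\;D_f\bigl(E^{(n)}\,\|\,E\bigr). \]

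Next I would evaluate $D_f(E^{(n)}\|E)$ in closed form: since $f_E(x)=e^{-x}\1_{x\ge0}$ and $f_{E^{(n)}}(x)=n(1-e^{-x})^{n-1}e^{-x}\1_{x\ge0}$, the density ratio equals $n(1-e^{-x})^{n-1}$, so $D_f(E^{(n)}\|E)=\int_0^{\infty} e^{-x}\,f\bigl(n(1-e^{-x})^{n-1}\bigr)\,dx$, and the substitution $u=1-e^{-x}$ (exactly as in the proof of Lemma \ref{lemma:KLexp}) converts this to $\int_0^1 f(nu^{n-1})\,du$. Finally, because $\pi^{(n)}_{r,\mathrm{ref}}$ and $\pi_{\mathrm{ref}}$ share the same prompt marginal $\rho_{\mathcal X}$, the joint $f$-divergence decomposes exactly as $D_f(\pi^{(n)}_{r,\mathrm{ref}}\|\pi_{\mathrm{ref}})=\mathbb{E}_X\,D_f(\pi^{(n)}_{r,\mathrm{ref}}\|\pi_{\mathrm{ref}}\mid X)$ (the factor $\rho_{\mathcal X}(x)$ cancels inside $f$), and since the bound $\int_0^1 f(nu^{n-1})\,du$ does not depend on $X$, taking the expectation over $X$ gives the claim.

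I do not expect a genuine obstacle: the Rényi/OT representation does all the work and the $f$-divergence DPI is entirely standard. The only points requiring a little care are (i) justifying DPI through the \emph{stochastic} kernel $H_X$ under Assumption \ref{assum:rewardStochasticmap} — precisely the subtlety already resolved in the $\mathsf{KL}$ case, since $f$-divergence monotonicity holds for stochastic maps just as for deterministic ones — and (ii) checking $E^{(n)}\ll E$ so that $D_f(E^{(n)}\|E)$ is well defined, which holds because $f_E>0$ on $[0,\infty)$.
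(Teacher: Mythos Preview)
Your proposal is correct and follows essentially the same route as the paper: apply the $f$-divergence DPI through $H_X$ and then through $T_X$ to reduce to $D_f(E^{(n)}\|E)$, compute the latter via the substitution $u=1-e^{-x}$, and average over $X$. If anything, your write-up is slightly more careful than the paper's, which asserts equality in the second DPI step (``since $T_X$ is a monotonic bijection'') even though under Assumption~\ref{assum:rewardStochasticmap} one only needs, and only gets, the inequality you state.
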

\end{assbox}
\begin{proof}[Proof of Theorem \ref{theo:dfiv}] 
 \begin{align}
 \mathrm{D}_{f}(  \pi^{(n)}_{r,\mathrm{ref}} || \pi_{\text{ref}} |X ) &= \mathrm{D}_f(  Y^{(n)} || Y | X) \nonumber\\
 &= \mathrm{D}_f ( H_{X}( R^{n}(Y)) || H_{X}( R(Y))  | X ) \nonumber \\
 & \leq  \mathrm{D}_f(R^{n}(Y) || R(Y) |X ) \text{ By the data processing inequality~}\\
 &= \mathrm{D}_f(T_{X}(E^{(n)})||T_{X}( E)) \text{ Renyi and Optimal Transport Representations \eqref{eq:OTandRenyi}}\nonumber\\
 &=  \mathrm{D}_f(E^{(n)}|| E) \text{ since $T_{X}$ is a monotonic bijection DPI is an equality}\\
 &= \int_{0}^{+ \infty} f_{E}(x) f\left( \frac{f_{E^{(n)}}(x) }{ f_{E}(x)}\right)dx\\
 &= \int_{0}^{\infty} (e^{-x}) f\left(n(1-e^{-x})^{n-1} \right)du\\
 &=\int_0^1 f(nu^{n-1})du.
 \label{eq:chainFDIV}
 \end{align}
 
  In particular we have the following bounds for common $f$ divergences:
\begin{itemize}
\item For $f(x) =x\log(x)$ we obtain the KL divergence and we have the result:
\[\int_0^1 n u^{n-1} \log(n u^{n-1}) du = \mathsf{KL}(E^{(n)} || E) = \log(n) -\frac{n-1}{n}. \]
\item For $f(x) = (x-1)^2$ we obtain the chi-squared divergence and we have:
$\int_0^1   \left( nu^{n-1}-1\right)^2du = \int_0^1 (n^2 u^{2(n-1)} -2n u^{n-1} +1) du =\frac{ n^2}{2n-1} u^{2n-1} -2 u^{n} +u |^1_0= \frac{n^2}{2n-1} -2+1 = \frac{ n^2-2n+1}{2n-1} = \frac{(n-1)^2}{2n-1} $.
\item For $f(x)= \frac{1}{2} |x-1|$, we obtain the total variation distance $\mathrm(TV)$ and we have:
$\frac{1}{2}\int_0^1   \left |  nu^{n-1}-1\right | du = \frac{1}{2} (\int_0^{u^*}   \left( 1- nu^{n-1}\right ) du  +(\int_{u^*}^{1}   \left(  nu^{n-1} -1\right ) du   ) =  (u^*- (u^*)^n), $where $n(u^*)^{(n-1)} =1$, i.e $u^*= (\frac{1}{n})^{\frac{1}{n-1}}$ . Hence the TV is $ (\frac{1}{n})^{\frac{1}{n-1}}-  (\frac{1}{n})^{\frac{n}{n-1}}$.
\item For $f(x) =(1-\sqrt{x})^2 $ we have  the hellinger distance:
$  \int_0^1   \left( \sqrt{n} u^{\frac{n-1}{2}}-1\right)^2du  = \int_0^1 (n u^{n-1} -2 \sqrt{n} u^{\frac{n-1}{2}} +1 ) du = u^n -2\sqrt{n} \frac{u^{ \frac{n+1}{2}}}{\frac{n+1}{2}} +u \Big|^1_0= 2(1- \frac{2\sqrt{n} }{n+1}) = 2\frac{(1-\sqrt{n})^2}{n+1} $
\item For $f(x)=-\log(x)$, we obtain the forward KL and we have : $\int_0^1 f(nu^{n-1}) du = n-1 -\log(n)$.
\end{itemize}
 \end{proof}

\paragraph{Guarantees with Rényi Divergence}

\noindent Turning now to the Rényi divergence for $\alpha \in (0,1) \cup (1,\infty)$:
 \[D_{\alpha}(P||Q) = \frac{1}{(\alpha-1)} \log \left( \int p^{\alpha}(x) q^{1-\alpha}(x) dx \right)\]
 the limit as $\alpha \to 1$ $D_{1}(P||Q)) = \mathsf{KL} (P||Q)$ . 

 \begin{assbox}
\begin{theorem} Under Assumption \ref{assum:rewardStochasticmap} the best of n policy satisfies:
\begin{equation}
 D_{\alpha}(   \pi^{(n)}_{r,\mathrm{ref}} || \pi_{\mathrm{ref}} ) \leq \frac{1}{(\alpha-1)} \log \left( \frac{n^{\alpha}}{\alpha(n-1) +1}\right)
 \end{equation}
 \label{theo:renyibestofn}
\end{theorem}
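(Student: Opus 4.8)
The plan is to follow verbatim the structure of the proof of Theorem~\ref{theo:dfiv}, substituting the data processing inequality for $f$-divergences by the one for the Rényi divergence $D_\alpha$, which holds for arbitrary stochastic kernels whenever $\alpha\in(0,1)\cup(1,\infty)$. First I would peel off the stochastic map: by Assumption~\ref{assum:rewardStochasticmap} we have $Y^{(n)}|X \overset{d}{=} H_X(R^{(n)}(Y))|X$ and $Y|X \overset{d}{=} H_X(R(Y))|X$, with $R|X \to Y|X$ a Markov chain, so DPI for $D_\alpha$ gives $D_\alpha(\pi^{(n)}_{r,\mathrm{ref}} || \pi_{\mathrm{ref}} | X) \le D_\alpha(R^{(n)}(Y) || R(Y) | X)$. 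Next I would invoke the optimal transport and Rényi representations in \eqref{eq:OTandRenyi} to rewrite the right-hand side as $D_\alpha(T_X(E^{(n)}) || T_X(E))$ with $T_X = F^{-1}_{R(Y)|X}\circ F_E$ monotone; since $T_X$ is a deterministic (measurable) channel, DPI again yields $D_\alpha(T_X(E^{(n)}) || T_X(E)) \le D_\alpha(E^{(n)} || E)$, with equality when $F_{R(Y)|X}$ is continuous and strictly increasing so that $T_X$ is a bijection.

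It remains to evaluate $D_\alpha(E^{(n)} || E)$ in closed form. Using $f_E(x)=e^{-x}\mathbbm{1}_{x\ge0}$ and $f_{E^{(n)}}(x)=n(1-e^{-x})^{n-1}e^{-x}\mathbbm{1}_{x\ge0}$ from the proof of Lemma~\ref{lemma:KLexp}, the integrand collapses to
\[
f_{E^{(n)}}^{\alpha}(x)\,f_E^{1-\alpha}(x) = n^{\alpha}(1-e^{-x})^{\alpha(n-1)}e^{-x}\,\mathbbm{1}_{x\ge0}.
\]
The same substitution $u=1-e^{-x}$ used for the $\mathsf{KL}$ computation turns the normalizing integral into $n^{\alpha}\int_0^1 u^{\alpha(n-1)}\,du = \frac{n^{\alpha}}{\alpha(n-1)+1}$, which is finite because $\alpha(n-1)+1>0$ for $\alpha>0$ and $n\ge1$. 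Hence $D_\alpha(E^{(n)} || E) = \frac{1}{\alpha-1}\log\left(\frac{n^{\alpha}}{\alpha(n-1)+1}\right)$, and taking the expectation over $X$ and chaining the inequalities above delivers the claimed bound. As a sanity check, letting $\alpha\to1$ (L'Hôpital in $\alpha$) recovers $\log n - \frac{n-1}{n}$, consistent with Lemma~\ref{lemma:KLexp} and Theorem~\ref{Theo:/boundKL}.

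The only genuinely non-routine point --- and hence the part I would spell out most carefully --- is the validity of the data processing inequality for $D_\alpha$ through the possibly truly stochastic map $H_X$, exactly as in the $\mathsf{KL}$ case treated in the main text; this is standard once the Markov structure of Assumption~\ref{assum:rewardStochasticmap} is in place (for instance it follows from the joint convexity properties of $D_\alpha$ or from its variational representation). Everything else is bookkeeping: the monotone reparametrization $T_X$ is a deterministic channel so DPI applies, and one should note that the resulting bound is automatically nonnegative, consistent with $D_\alpha\ge0$, since Bernoulli's inequality gives $n^{\alpha}\ge\alpha(n-1)+1$ for $\alpha\ge1$ and $n^{\alpha}\le\alpha(n-1)+1$ for $\alpha\in(0,1)$, with the factor $\frac{1}{\alpha-1}$ flipping sign accordingly.
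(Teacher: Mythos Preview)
Your proposal is correct and follows essentially the same approach as the paper: apply the data processing inequality for $D_\alpha$ twice (once through the stochastic map $H_X$ of Assumption~\ref{assum:rewardStochasticmap}, once through the monotone map $T_X$), then evaluate $D_\alpha(E^{(n)}\|E)$ via the same substitution $u=1-e^{-x}$. The only cosmetic difference is that you spell out the passage from the conditional bound to the unconditional one and add the L'H\^opital and Bernoulli sanity checks, which the paper omits.
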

\end{assbox}
\vskip -0.8in
 \begin{proof}[Proof of Theorem \ref{theo:renyibestofn}]
 
 \noindent Applying DPI that holds also for the Rényi divergence twice from $Y,Y^{(n)}$ to $R,R^{(n)}$ and from $R,R^{(n)}$ to $E,E^{(n)}$ we obtain :
 \[  D_{\alpha}(  \pi^{(n)}_{r,\mathrm{ref}} || \pi_{\text{ref}} |X ) \leq D_{\alpha}(E^{(n)}|| E)\]
 
 \begin{align*}
 D_{\alpha}(E^{(n)}|| E) &=\frac{1}{(\alpha-1)} \log  \left(\int_0^{\infty} n^{\alpha} (1-e^{-x})^{\alpha(n-1)}e^{-\alpha x} e^{-x(1-\alpha)} dx\right)\\
 &= \frac{1}{(\alpha-1)} \log\left( \int_{0}^{+\infty} n^{\alpha} (1-e^{-x})^{\alpha(n-1)} e^{-x} dx\right)
 \end{align*}
 Let $u=1-e^{-x}$ we have $du=e^{-x} dx$
 \begin{align*}
  D_{\alpha}(E^{(n)}|| E)  &= \frac{1}{(\alpha-1)} \log\left(\int_0^1 n^{\alpha} u^{\alpha(n-1)}du \right)\\
  &= \frac{1}{(\alpha-1)} \left( \log n^{\alpha}  + \log \int_0^1 u^{\alpha(n-1)} du \right)\\
  &= \frac{1}{(\alpha-1)} \left( \log n^{\alpha}  + \log \frac{u^{\alpha(n-1) +1}}{\alpha(n-1)+1} \Big|^1_0\right)\\
  &=\frac{1}{(\alpha-1)} \log \left( \frac{n^{\alpha}}{\alpha(n-1) +1}\right)
 \end{align*}
\end{proof}

\paragraph{From Renyi to KL guarantees}
Let $ s_1(\alpha) =(\alpha-1)$ , and $s_2(\alpha) =\log \left( \frac{n^{\alpha}}{\alpha(n-1) +1}\right) $, we have  $D_{\alpha}(E^{(n)}|| E)= \frac{s_2(\alpha)}{s_1(\alpha)}$  , we have 
$ \mathsf{KL}(E^{(n)}|| E) =\lim_{\alpha \to 1}  D_{\alpha}(E^{(n)}|| E)= \lim_{\alpha \to 1} \frac{s_2(\alpha)}{s_{\alpha}} =\frac{0}{0}  $, hence applying L'Hôpital rule we have: $ \lim_{\alpha \to 1} \frac{s_2(\alpha)}{s_1(\alpha)}=  \lim_{\alpha \to 1} \frac{s'_2(\alpha)}{s'_1(\alpha)} = \lim_{\alpha \to 1} \frac{\log(n) - \frac{n-1}{\alpha(n-1) +1}}{1} = \log(n) -\frac{n-1}{n}$. Hence we recover the result for the $\mathsf{KL}$ divergence. 

\subsection{Best of $n$ Dominance}
\begin{proof}[Proof of Proposition \ref{pro:Dominance} ]
$F_{E^{(n)}}(x) = (F_{E}(x))^n \leq F_{E}(x), \forall x\geq 0$, which means also that $ F^{-1}_{E^{(n)}}(t) \geq F^{-1}_{E}(t) , \forall t \in [0,1]$, which means that $E^{(n)}$ dominates $E$ in the first stochastic order :  $E^{(n)}\underset{\text{FSD}}{\succcurlyeq} E $ , which means there exists a coupling between $E^{(n)}$ and $E$, $\pi\in \Pi(E^{(n)}, E)$, such that $E \geq e$, for all $(E,e) \sim \pi$.  On the other hand By Rényi and Monge map representations we have: $R^{(n)} =F^{-1}_{R}\circ F_{E} (E^{(n)})$ and $R =F^{-1}_{R}\circ F_{E}(E)$, given that $T= F^{-1}_{R}\circ F_{E} $ is non decreasing the same coupling $\pi$ guarantees that $T(E ) \geq  T(e)$, for all $(E,e) \sim \pi$ and Hence  $R^{(n)}\underset{\text{FSD}}{\succcurlyeq} R $.

\end{proof}

\begin{corollary} Best of n-polciy has higher expectation : 
\[ \mathbb{E} R^{(n)} \geq \mathbb{E} R, \]
and is a safer policy, let the Tail Value at Risk be:
\[\mathrm{TVAR}_{p} (X) = \frac{1}{p}\int_0^p Q_{R}(t) dt\]
We have $$ \mathrm{TVAR}_{p}(R^{n}) \geq \mathrm{TVAR}_{p}(R) , \forall p \in [0,1]  $$
\label{cor:dominance}
\end{corollary}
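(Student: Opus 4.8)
The plan is to derive both claims by simply integrating the pointwise quantile inequality supplied by Proposition \ref{pro:Dominance}. First I would recall the standard quantile (Monge-map) representation of the mean: for any integrable real random variable $X$ with quantile function $Q_X = F_X^{(-1)}$ and $U\sim\mathrm{Unif}[0,1]$ one has $Q_X(U)\stackrel{d}{=}X$ — this is precisely the optimal transport representation already invoked in \eqref{eq:OTrep} — and hence $\mathbb{E} X = \int_0^1 Q_X(t)\,dt$.

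Then, since Proposition \ref{pro:Dominance} gives the pointwise bound $Q_{R^{(n)}}(t)\ge Q_R(t)$ for every $t\in[0,1]$, integrating over $[0,1]$ yields $\mathbb{E} R^{(n)} = \int_0^1 Q_{R^{(n)}}(t)\,dt \ge \int_0^1 Q_R(t)\,dt = \mathbb{E} R$, which is the first assertion. For the Tail Value at Risk I would fix $p\in(0,1]$, integrate the same pointwise inequality over $[0,p]$ and divide by $p$, obtaining $\mathrm{TVAR}_p(R^{(n)}) = \frac{1}{p}\int_0^p Q_{R^{(n)}}(t)\,dt \ge \frac{1}{p}\int_0^p Q_R(t)\,dt = \mathrm{TVAR}_p(R)$; the case $p=1$ recovers the first display and $p=0$ is understood as the limiting/vacuous case.

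There is essentially no obstacle here: the whole content sits in Proposition \ref{pro:Dominance}, and the corollary is a one-line monotonicity-of-the-integral argument. The only point needing a word of care is integrability — if $R$ (equivalently $r_\sharp\pi_{\mathrm{ref}}$) fails to be integrable, the identity $\mathbb{E} X=\int_0^1 Q_X$ should be read in $[-\infty,+\infty]$ and the inequalities still hold; otherwise one just assumes $\mathbb{E}|R|<\infty$, which is anyway implied by the sub-Gaussian tail hypothesis used elsewhere in the paper. I would also note that the coupling argument behind Proposition \ref{pro:Dominance} gives the conclusions directly: with $(E,e)\sim\pi$ a monotone coupling of $E^{(n)}$ and $E$ (so $E\ge e$ a.s.) and $T = F_R^{-1}\circ F_E$ non-decreasing, one has $T(E)\ge T(e)$ a.s., whence $\mathbb{E} T(E)\ge\mathbb{E} T(e)$, and restricting attention to the lower-$p$ quantile event yields the $\mathrm{TVAR}$ inequality as well.
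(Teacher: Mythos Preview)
Your proposal is correct and follows essentially the same route as the paper: the paper's proof reads, in full, ``First order dominance implies second order dominance (i.e.\ by integrating quantiles). Expectation is obtained for $p=1$,'' which is exactly your argument of integrating the pointwise inequality $Q_{R^{(n)}}(t)\ge Q_R(t)$ from Proposition~\ref{pro:Dominance} over $[0,p]$ and then specializing $p=1$. Your version is simply a more explicit unpacking of that one-line proof, with the added (and welcome) remark on integrability.
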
\begin{proof} [Proof of Corollary \ref{cor:dominance}]First order dominance implies second order dominance (i.e by integrating quantiles). Expectation is obtained for $p=1$.
\end{proof}

\section{Best of $n$ and RL Policy }
\begin{proof} [Proof of Proposition \ref{pro:BNRL}] We fix here $\beta = \frac{1}{\lambda_{\Delta}}$
\begin{align*}
 \mathsf{KL}( \pi^{(n)}_{r,\mathrm{ref}}|| \pi_{\beta,r} ) &= \int \pi^{(n)}_{r,\mathrm{ref}}(y|x)  \log \left( \frac{\pi^{(n)}_{r,\mathrm{ref}} (y|x)}{\pi_{\beta,r}(y|x)}\right)
  =  \int \pi^{(n)}_{r,\mathrm{ref}}(y|x)  \log \left( \frac{\pi^{(n)}_{r,\mathrm{ref}} (y|x) }{ \pi_{\mathrm{ref}}(y|x) \frac{ e^{\beta r(x,y)} }{ Z_{\beta} (x)  } } \right)\\
 &= \mathsf{KL} (  \pi^{(n)}_{r,\mathrm{ref}} || \pi_{\mathrm{ref}}) + \log\left( \mathbb{E}_{ \pi_{\mathrm{ref}}} e^{\beta r}   \right)  - \beta \int r d\pi^{(n)}_{r,\mathrm{ref}}
\end{align*}
On the other hand by optimality of $\pi_{\beta,r}$ we have:
\[ \mathsf{KL}\left(\pi_{\beta,r} || \pi_{\mathrm{ref}}\right)  = \beta \int r d\pi_{\beta,r} - \log\left( \int e^{\beta r} d\pi_{\mathrm{ref}}\right)\\
 \]
and hence we have:
\begin{align*}
 \mathsf{KL}( \pi^{(n)}_{r,\mathrm{ref}}|| \pi_{\beta,r} )  &= \mathsf{KL} (  \pi^{(n)}_{r,\mathrm{ref}} || \pi_{\mathrm{ref}})  - \mathsf{KL}\left(\pi_{\beta,r} || \pi_{\mathrm{ref}}\right) + \beta\left(   \int r d\pi_{\beta,r} - \int r d\pi^{(n)}_{r,\mathrm{ref}} \right)
\end{align*}
We choose $n$ such that :
\[ \mathsf{KL} (  \pi^{(n)}_{r,\mathrm{ref}} || \pi_{\mathrm{ref}}) \leq \log(n) -\frac{n-1}{n} \leq \mathsf{KL}\left(\pi_{\beta,r} || \pi_{\mathrm{ref}}\right) = \Delta \]
and we conclude choosing $n=e^{\Delta}$ therefore for that choice of $n$ that:
\[ \mathsf{KL}( \pi^{(n)}_{r,\mathrm{ref}}|| \pi_{\beta,r} )  \leq \beta\left(   \int r d\pi_{\beta,r} - \int r d\pi^{(n)}_{r,\mathrm{ref}} \right) \]
On the other hand we have:
\begin{align*}
 \left|  \int r d\pi_{\beta,r} - \int r d\pi^{(n)}_{r,\mathrm{ref}} \right| & =  \left | \int r \exp(\beta r)  \frac{1}{Z_{\beta}}d\pi_{\mathrm{ref}} -   \int \max_i r(x_i)  d\pi_{\mathrm{ref}} (x_1)\dots d\pi_{\mathrm{ref}} (x_n)  \right| \\
  &=  \left| \int   \left(\frac{1}{n} \sum_{i=1}^n  \frac{r(x_i) \exp(\beta r(x_i))}{  Z_{\beta} }   - \max_i r(x_i)   \right) d\pi_{\mathrm{ref}} (x_1)\dots d\pi_{\mathrm{ref}} (x_n)  \right|\\
  & =  \left|  \int  \left(\frac{1}{n} \sum_{i=1}^n  \frac{r(x_i) \exp(\beta r(x_i))}{ \sum_{i=1}^n \exp(\beta r(x_i)) } \frac{\sum_{i=1}^n \exp(\beta r(x_i))}{Z_{\beta}}   - \max_i r(x_i)   \right) d\pi_{\mathrm{ref}} (x_1)\dots d\pi_{\mathrm{ref}} (x_n) \right|\\
  &\leq \int \left|  \max r(x_i) \left(  \frac{\frac{1}{n}\sum_{i=1}^n \exp(\beta r(x_i))}{Z_{\beta}} -1  \right) \right| d\pi_{\mathrm{ref}} (x_1)\dots d\pi_{\mathrm{ref}} (x_n) \\
  & \leq  \frac{M}{Z_{\beta}}  \mathbb{E} \left| \sum_{i=1}^n \exp(\beta r(x_i)) -Z_{\beta}\right |
\end{align*}
where we used the  following fact, followed by Jensen inequality : 
\[ \sum_{i=1}^n  \frac{r(x_i) \exp(\beta r(x_i))}{ \sum_{i=1}^n \exp(\beta r(x_i)) } \leq \max_i r(x_i).\]
Assume that the reward is bounded  hence we have by Hoeffding inequality :
\[ \mathbb{P} \left( \left | \frac{1}{n}\sum_{i=1}^n \exp(\beta r(x_i)) -   Z_{\beta} \right|  \geq  t \right) \leq  2 e^{-\frac{nt^2}{2 (\exp(\beta M)  - \exp(-\beta M) )^2}}  \]
Hence we have:
\[\mathbb{E} \left| \sum_{i=1}^n \exp(\beta r(x_i)) -Z_{\beta}\right | \leq 2 \sqrt{\frac{\pi}{2}} \frac{\exp(\beta M)  - \exp(-\beta M) }{\sqrt{n}}\]

\[ \mathsf{KL}( \pi^{(\exp(\Delta))}_{r,\mathrm{ref}}|| \pi_{\lambda_{\Delta},r} ) \leq \frac{M}{\lambda_{\Delta}Z_{1/\lambda_{\Delta}}}\sqrt{2\pi} (\exp(\beta M)  - \exp(-\beta M)) \sqrt{ \exp(-\Delta)}. \]
\end{proof}

\section{Transportation Inequalities and KL Divergence}
\subsection{Transportation Inequalities with KL} \label{app:TranspKL}

The following Lemma (Lemma 4.14 in \citep{concentrationbook}) uses the Donsker-Varadhan representation of the KL divergence to obtain bounds on the change of measure , and using the tails of $\pi_{\mathrm{ref}}$.

\begin{lemma} [Lemma 4.14 in \citep{concentrationbook}]  Let $\psi$  be a  convex and continuously differentiable function $\psi$ on a possibly unbounded interval $[0,b)$, and assume $\psi(0)=\psi'(0)=0$. Define for every $x\geq 0$, the convex conjugate $\psi^*(x) = \sup_{\lambda \in[0,b)} \lambda x -\psi(\lambda)$ , and let $\psi^{*-1}(t) =\inf \{x\geq 0: \psi^*(x)>t\}$. Then the following statements are equivalent:\\
(i) For $\lambda \in [0,b)$ \[\log \left( \int e^{\lambda(r-\int r dQ)} dQ \right) \leq \psi(\lambda),\] 
(ii) For any probability measure $P$ that is absolutely continuous with respect to $Q$ and such that $\mathsf{KL}(P || Q) < \infty$:
\[  \int r dP - \int r dQ \leq\psi^{*-1} (\mathsf{KL}(P||Q) ).\]
\end{lemma}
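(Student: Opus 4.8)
The plan is to derive the equivalence from two standard ingredients: the Donsker--Varadhan variational formula for the $\mathsf{KL}$ divergence, and the elementary Legendre-duality identity
\[
\psi^{*-1}(t) \;=\; \inf_{\lambda \in (0,b)} \frac{t + \psi(\lambda)}{\lambda}, \qquad t \ge 0 .
\]
First I would establish this identity. Since $\psi$ is convex with $\psi(0)=\psi'(0)=0$, it is nonnegative on $[0,b)$, so $\psi^*$ is nonnegative, nondecreasing, and $\psi^*(0)=0$; then for $x\ge 0$ one has $\psi^*(x)\le t$ iff $\lambda x-\psi(\lambda)\le t$ for all $\lambda\in(0,b)$ (the constraint at $\lambda=0$ being the trivial $0\le t$), i.e. iff $x\le (t+\psi(\lambda))/\lambda$ for all such $\lambda$. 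Hence $\{x\ge 0:\psi^*(x)\le t\}$ is the interval $[0,\,\inf_{\lambda\in(0,b)}(t+\psi(\lambda))/\lambda]$, and reading off its right endpoint gives the claimed formula for $\psi^{*-1}(t)=\inf\{x\ge 0:\psi^*(x)>t\}$.

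For $\mathrm{(i)}\Rightarrow\mathrm{(ii)}$: center $r$ so that $\int r\,dQ=0$. For any $P\ll Q$ with $\mathsf{KL}(P\|Q)<\infty$ and any $\lambda\in(0,b)$, the Gibbs variational inequality (the always-valid half of Donsker--Varadhan, i.e. $\mathsf{KL}(P\|Q_{\lambda r})\ge 0$ where $dQ_{\lambda r}\propto e^{\lambda r}\,dQ$) gives $\lambda\int r\,dP \le \mathsf{KL}(P\|Q)+\log\int e^{\lambda r}\,dQ \le \mathsf{KL}(P\|Q)+\psi(\lambda)$, the last step by $\mathrm{(i)}$. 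Dividing by $\lambda$ and taking the infimum over $\lambda\in(0,b)$, the identity above yields $\int r\,dP-\int r\,dQ\le \psi^{*-1}(\mathsf{KL}(P\|Q))$, which is $\mathrm{(ii)}$.

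For the converse $\mathrm{(ii)}\Rightarrow\mathrm{(i)}$: again center $r$, and fix $\lambda\in[0,b)$ (the case $\lambda=0$ being trivial). For every $P\ll Q$ with $\mathsf{KL}(P\|Q)<\infty$, bounding the infimum in the identity by its value at $\mu=\lambda$ and using $\mathrm{(ii)}$ gives $\int r\,dP-\int r\,dQ\le (\mathsf{KL}(P\|Q)+\psi(\lambda))/\lambda$, i.e. $\lambda\bigl(\int r\,dP-\int r\,dQ\bigr)-\mathsf{KL}(P\|Q)\le \psi(\lambda)$. Taking the supremum over all such $P$ and invoking the Donsker--Varadhan identity $\log\int e^{\lambda(r-\int r\,dQ)}\,dQ = \sup_{P}\bigl\{\lambda(\int r\,dP-\int r\,dQ)-\mathsf{KL}(P\|Q)\bigr\}$, where the supremum runs over exactly the class of $P\ll Q$ with finite $\mathsf{KL}$ so that $\mathrm{(ii)}$ applies verbatim, we obtain $\log\int e^{\lambda r}\,dQ\le\psi(\lambda)$, which is $\mathrm{(i)}$; this argument also shows en passant that $\mathrm{(ii)}$ forces the MGF to be finite, so no separate integrability discussion is needed. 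An alternative for this direction is the tilted-measure route: with $\Lambda(\lambda)=\log\int e^{\lambda r}\,dQ$ and $dP_\lambda\propto e^{\lambda r}\,dQ$ one has $\mathsf{KL}(P_\lambda\|Q)=\lambda\Lambda'(\lambda)-\Lambda(\lambda)$, and plugging $P_\lambda$ into $\mathrm{(ii)}$ together with the $\mu=\lambda$ bound gives $\Lambda'(\lambda)\le(\lambda\Lambda'(\lambda)-\Lambda(\lambda)+\psi(\lambda))/\lambda$, hence $\Lambda(\lambda)\le\psi(\lambda)$ directly. I expect the only delicate point to be the proof of the $\psi^{*-1}$ identity (edge cases where the infimum is $+\infty$, and behaviour as $\lambda\to 0$) and making sure the Donsker--Varadhan supremum is taken over precisely the measures for which $\mathrm{(ii)}$ is assumed; everything else is routine.
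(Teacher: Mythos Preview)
Your proof is correct. Note, however, that the paper does not give its own proof of this lemma in full: it cites it as Lemma~4.14 of \cite{concentrationbook} and then only supplies a ``direct proof for the subgaussian case,'' i.e.\ the implication $\mathrm{(i)}\Rightarrow\mathrm{(ii)}$ with $\psi(\lambda)=\lambda^2\sigma^2/2$. That argument is exactly your $\mathrm{(i)}\Rightarrow\mathrm{(ii)}$ step specialized: apply Donsker--Varadhan to $h_\lambda=\lambda(r-\int r\,dQ)$, use the sub-gaussian MGF bound, obtain $\int r\,dP-\int r\,dQ\le \tfrac{1}{\lambda}\mathsf{KL}(P\|Q)+\tfrac{\lambda\sigma^2}{2}$, and minimize in $\lambda$. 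Your version is the general form of the same idea, with the explicit minimization replaced by the Legendre identity $\psi^{*-1}(t)=\inf_{\lambda\in(0,b)}(t+\psi(\lambda))/\lambda$, so on the direction the paper does treat, the two approaches coincide.

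What you add beyond the paper is a proof of the converse $\mathrm{(ii)}\Rightarrow\mathrm{(i)}$ (and the Legendre identity itself), neither of which appears in the paper. Both routes you sketch for that direction---taking the Donsker--Varadhan supremum over $P$ with finite $\mathsf{KL}$, or plugging in the tilted measure $P_\lambda\propto e^{\lambda r}\,dQ$---are standard and sound; the first has the advantage you note of handling the finiteness of the MGF automatically. So there is nothing to correct: your proposal is a complete proof of the lemma, whereas the paper only instantiates one direction in the sub-gaussian case and defers the rest to the cited reference.
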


\begin{lemma} [ Inverse of the conjugate \citep{concentrationbook}]
\begin{enumerate}
\item If $Q \in \mathsf{SubGauss}(\sigma^2)$, we have for $t\geq 0$ $\psi^{*-1}(t)= \sqrt{2\sigma^2 t}.$
\item If $Q \in \mathsf{Subgamma}(\sigma^2,c)$, we have for $t\geq 0$ $\psi^{*-1}(t)= \sqrt{2\sigma^2 t} +c t$.
\end{enumerate}
\end{lemma}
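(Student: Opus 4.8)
The plan is to treat the two tail regimes separately, in each case computing the convex conjugate $\psi^*$ in closed form and then inverting it. The first observation, common to both parts, is that the generalized inverse $\psi^{*-1}(t)=\inf\{x\ge 0:\psi^*(x)>t\}$ coincides with the ordinary inverse of $\psi^*$ on $[0,\infty)$. Indeed, since $\psi(0)=\psi'(0)=0$ and $\psi$ is convex we have $\psi\ge 0$, so $\psi^*(0)=-\inf_\lambda\psi(\lambda)=0$; moreover $\psi^*$, being a supremum of the affine maps $x\mapsto \lambda x-\psi(\lambda)$, is convex, nondecreasing, continuous, and strictly increasing on the region where it is finite (which is all of $[0,\infty)$ in both cases, because $\psi(\lambda)\to\infty$ at the right endpoint of its domain forces the supremum to be attained in the interior for each $x\ge 0$). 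Hence $\{x:\psi^*(x)>t\}$ is an open half-line $\bigl((\psi^*)^{-1}(t),\infty\bigr)$ whose infimum is exactly $(\psi^*)^{-1}(t)$. It therefore suffices to solve $\psi^*(x)=t$ for $x\ge 0$ in each case.

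For the sub-Gaussian case I take $\psi(\lambda)=\tfrac{\sigma^2\lambda^2}{2}$ on $[0,\infty)$. For fixed $x\ge 0$ the map $\lambda\mapsto \lambda x-\tfrac{\sigma^2\lambda^2}{2}$ is maximized at $\lambda=x/\sigma^2\ge 0$, which lies in the admissible range, giving $\psi^*(x)=\tfrac{x^2}{2\sigma^2}$. Solving $\tfrac{x^2}{2\sigma^2}=t$ with $x\ge 0$ yields $x=\sqrt{2\sigma^2 t}$, which is claim (1).

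For the sub-gamma case I take $\psi(\lambda)=\tfrac{\sigma^2\lambda^2}{2(1-c\lambda)}$ on $[0,1/c)$. I would first compute $\psi'(\lambda)=\tfrac{\sigma^2\lambda(2-c\lambda)}{2(1-c\lambda)^2}$ and set it equal to $x$; this is a quadratic in $\lambda$ whose admissible root stays in $[0,1/c)$, and substituting back produces the closed form $\psi^*(x)=\tfrac{\sigma^2}{c^2}\bigl(1+\tfrac{cx}{\sigma^2}-\sqrt{1+\tfrac{2cx}{\sigma^2}}\bigr)$ (one checks that letting $c\to 0$ recovers $\tfrac{x^2}{2\sigma^2}$, consistent with part (1)). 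To invert, I set $\psi^*(x)=t$ and substitute $w=\sqrt{1+2cx/\sigma^2}\ge 1$; the equation then collapses to a perfect square $\tfrac{(w-1)^2}{2}=\tfrac{c^2 t}{\sigma^2}$, so that $w=1+c\sqrt{2t/\sigma^2}$, and resolving for $x$ gives $x=\sqrt{2\sigma^2 t}+ct$, which is claim (2).

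The routine parts are the two optimizations and the final algebra; the only points needing care are the bookkeeping in the sub-gamma case, namely verifying that the stationary point of $\lambda\mapsto \lambda x-\psi(\lambda)$ indeed lies in $[0,1/c)$ for every $x\ge 0$ (so that the closed form for $\psi^*$ is valid on all of $[0,\infty)$ and the supremum is interior rather than a boundary value), and selecting the branch $w\ge 1$ when extracting the square root. I expect the sub-gamma Legendre transform together with its inversion to be the main obstacle, whereas the sub-Gaussian case and the reduction of the generalized inverse to the ordinary inverse are immediate.
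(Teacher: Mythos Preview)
Your proof is correct and is the natural route: compute the Legendre transform $\psi^*$ explicitly in each regime and invert it. The sub-Gaussian part is immediate, and your sub-gamma computation via the substitution $w=\sqrt{1+2cx/\sigma^2}$, which collapses the inversion to $(w-1)^2/2=c^2 t/\sigma^2$, is exactly the clean way to do it (and matches what one finds in Boucheron--Lugosi--Massart).

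The paper, however, does not actually prove this lemma: it is stated with a citation, and the block headed ``We give here a direct proof for the subgaussian case'' that follows it is \emph{not} a proof of the $\psi^{*-1}$ formulas. It is instead a self-contained proof of the sub-Gaussian transportation inequality itself (the content of the preceding proposition), bypassing the Legendre-transform machinery altogether: one plugs $h_\lambda=\lambda\bigl(r-\mathbb{E}_{\pi_{\mathrm{ref}}}r\bigr)$ into the Donsker--Varadhan variational formula for $\mathsf{KL}$, invokes the sub-Gaussian MGF bound $\log\mathbb{E}_{\pi_{\mathrm{ref}}}e^{h_\lambda}\le \lambda^2\sigma^2/2$, and optimizes the resulting bound $\tfrac{1}{\lambda}\mathsf{KL}+\tfrac{\lambda\sigma^2}{2}$ over $\lambda>0$. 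So you have proved the lemma as stated, whereas the paper reproves only the downstream consequence for which the lemma is invoked; in particular the sub-gamma formula is never argued in the paper beyond the citation, and your derivation fills that in.
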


We give here a direct proof for the subgaussian case:
\begin{proof}

By the Donsker Varadhan representation of the $\mathsf{KL}$ we have:
\[ \mathsf{KL}( P || Q) = \sup_{h} \int h dP - \log \left( \int e^ h dQ \right)  \]
\noindent Fix $x$  and $M>0$  and define  for $0< \lambda < M$ 
 $$h_{\lambda}(y)= \lambda\left( r(x,y ) - \mathbb{E}_{\pi_{\mathrm{ref}}(y|x)  }r(x,y)\right)$$
We omit in what follows $x$ and $y$, but the reader can assume from here on that $\pi$ and $\pi_{\mathrm{ref}}$ are conditioned on $x$. 
Note that $R_{\mathrm{ref}}|x = (r(x,.))_{\sharp} \pi_{\mathrm{ref}} (.|x)$ and we assume $R_{\mathrm{ref}}|x$ subgaussian.
   Note that 
$$\mathbb{E}_{\mathrm{\pi}_{\mathrm{ref}}} e^{h_{\lambda}} =  \mathbb{E}_{\pi_{\mathrm{ref}}|x} e^{\lambda ( r - \mathbb{E}_{\pi_{\mathrm{ref}} |x}r )} = M_{R_{\mathrm{ref}} |x} (\lambda),  $$
where $M_{R_{\mathrm{ref}} |x} $ the moment generating function of the reward under the reference policy.  $R_{\mathrm{ref}} |x$ is subgaussian we have for all $\lambda \in \mathbb{R}$: 
\[ \mathbb{E}_{\mathrm{\pi}_{\mathrm{ref}} |x} e^{h_{\lambda}} \leq e^{\frac{\lambda^2 \sigma^2}{2 }} \leq   e^{\frac{M^2 \sigma^2}{2 }}<\infty\]

Hence $h_{\lambda} \in \mathcal{H}$ and we have for all $\pi <\!\!<\pi_{\mathrm{ref}}$ and for all $0<M<\infty$ and $0<\lambda<M$:

\[ \lambda  \mathbb{E}_{\pi|x } ( r - \mathbb{E}_{\pi_{\mathrm{ref}}|x }r)   \leq   \mathsf{KL} (\pi || \pi_{\mathrm{ref}} |x) +  \log \left( \mathbb{E}_{\pi_{\mathrm{ref}} |x} e^{\lambda ( r - \mathbb{E}_{\pi_{\mathrm{ref}} |x}r )}\right)  \]
or equivalently:
\[  \mathbb{E}_{\pi |x}  r  - \mathbb{E}_{\pi_{\mathrm{ref}} |x}r  \leq \frac{1}{\lambda} \mathsf{KL} (\pi || \pi_{\mathrm{ref}} |x) + \frac{1}{\lambda} \log \left( \mathbb{E}_{\pi_{\mathrm{ref}}|x} e^{\lambda ( r - \mathbb{E}_{\pi_{\mathrm{ref}} |x}r )}\right) \]
Finally we have for $\pi<\!\!<\pi_{\mathrm{ref}}$ for all $0<\lambda<M$: 
\begin{equation}
 \mathbb{E}_{\pi |x}  r  - \mathbb{E}_{\pi_{\mathrm{ref}} |x}r  \leq \frac{1}{\lambda} \mathsf{KL} (\pi || \pi_{\mathrm{ref}} |x) + \frac{1}{\lambda} \log \left( M_{R_{\mathrm{ref}} |x} (\lambda) \right)  
 \label{eq:masterEq}
 \end{equation}
 Being a subgaussian, the MGF of $R_{\mathrm{ref}}|x$  is bounded as follows: 
\[ \log \left( M_{R_{\mathrm{ref}} |x} (\lambda) \right) \leq \frac{\lambda^2 \sigma^2}{2}.\] 
Hence we have for  :
\[  \mathbb{E}_{\pi |x}  r  - \mathbb{E}_{\pi_{\mathrm{ref}} |x}r  \leq \frac{1}{\lambda} \mathsf{KL} (\pi || \pi_{\mathrm{ref}} |x) + \frac{\lambda \sigma^2 }{2}  \]
Integrating over $x$ we obtain for all $\pi<\!\!<\pi_{\mathrm{ref}} $ and all $0< \lambda <M$:
\[  \mathbb{E}_{\pi }  r  - \mathbb{E}_{\pi_{\mathrm{ref}}}r  \leq \frac{1}{\lambda} \mathsf{KL} (\pi || \pi_{\mathrm{ref}}) + \frac{\lambda \sigma^2 }{2}  \]
Define : 
\[\delta(\lambda) = \frac{1}{\lambda} \mathsf{KL} (\pi ||\pi_{\mathrm{ref}}) + \frac{\lambda \sigma^2 }{2} \]
minimizing  the upper bound $\delta(\lambda)$ for $\lambda \in (0,M]$, taking derivative $\delta'(\lambda)=-\frac{\mathsf{KL} (\pi || \pi_{\mathrm{ref}}) }{\lambda^2} +\frac{\sigma^2}{2}=0$ gives $\lambda^* = \sqrt{\frac{2 \mathsf{KL}(\pi || \pi_{\mathrm{ref}})}{\sigma^2 }}$. Taking $M=2\lambda^*,$ $\lambda^*$ is the minimizer.  Putting this in the bound we have finally for all rewards $r$ for all $\pi$: 
\begin{equation}
 \mathbb{E}_{\pi }  r  - \mathbb{E}_{\pi_{\mathrm{ref}}}r  \leq \sqrt{2 \sigma^2  \mathsf{KL} (\pi || \pi_{\mathrm{ref}})}.  
 \end{equation}

\end{proof}

\begin{proof}[Proof of Corollary \ref{cor:bounds}]
(i) This follows from optimality of $\pi_{\lambda_{\Delta}}$ and applying the transportation inequality for gaussian tail.

(ii) This follows from applying  Corollary \ref{cor:dominance} (best of $n$ policy has larger mean ) and   \ref {Theo:/boundKL} for bounding the $\mathsf{KL}$.

%
 
\end{proof}

\begin{proof} [Proof of Theorem \ref{theo:highprob}]
For  the penalized RL we have by optimality: 
\begin{align*}
  \int r d\pi_{\beta, r} -\frac{1}{\beta} \mathsf{KL} (\pi_{\beta,r} || \pi_{\mathrm{ref}} )& = \frac{1}{\beta} \log\left( \int e^{\beta r } d\pi_{\mathrm{ref}}\right) \\
 &=  \frac{1}{\beta} \log\left( \int e^{\beta ( r - \int r d\pi_{\mathrm{ref}}) } d\pi_{\mathrm{ref}}\right) + \int r d\pi_{\mathrm{ref}} 
\end{align*}
It follows that : 
\begin{align} 
 \frac{1}{\beta} \log\left( \int e^{\beta ( r - \int r d\pi_{\mathrm{ref}}) } d\pi_{\mathrm{ref}}\right) & = \int r d\pi_{\beta,r} - \int r d\pi_{\mathrm{ref}} -\frac{1}{\beta} \mathsf{KL} (\pi_{\beta,r} || \pi_{\mathrm{ref}} )
 \label{eq:opt}
 \end{align}
On the other hand by the variational representation of the Rényi divergence we have:
\begin{align}
\int r d\pi_{\beta,r}  - \int r d \pi_{\mathrm{ref}}  &\leq \frac{D_{\beta}(\pi_{\beta,r}  || \pi_{\mathrm{ref}} ) }{\beta} -\frac{1}{\beta-1} \log \left( \int e^{(\beta-1) ( r -\int r d\pi_{\beta,r} ) } d\pi_{\beta,r} \right) \nonumber\\
&+ \frac{1}{\beta} \log\left( \int e^{\beta ( r - \int r d\pi_{\mathrm{ref}}) } d\pi_{\mathrm{ref}}\right)
\label{eq:renyivarbeta}
\end{align}
Summing Equations \eqref{eq:opt} and \eqref{eq:renyivarbeta} we obtain a bound on the moment generating function at $\beta$ of $ r_{\sharp} \pi_{\beta,r}$ (this is not a uniform bound , it holds only for $\beta$):
\begin{align}
\frac{1}{\beta-1} \log \left( \int e^{(\beta-1) ( r -\int r d\pi_{\beta,r} ) } d\pi_{\beta,r} \right) & \leq \frac{D_{\beta}(\pi_{\beta,r}  || \pi_{\mathrm{ref}} )  -  \mathsf{KL} (\pi_{\beta,r} || \pi_{\mathrm{ref}}) }{\beta}.
\end{align}

Let us assume $\beta >1$ we have therefore the following bound on the logarithmic moment generation function at $\beta-1$
\[ \psi_{r_{\sharp} \pi_{\beta,r}} (\beta-1) \leq \frac{\beta-1}{\beta} \left( D_{\beta}(\pi_{\beta,r}  || \pi_{\mathrm{ref}} )  -  \mathsf{KL} (\pi_{\beta,r} || \pi_{\mathrm{ref}})  \right)  \]
Let $R_{i,\beta} =r_{\sharp} \pi_{\beta,r},i=1\dots m  $ , the reward evaluation of $m$ independent samples of  $\pi_{\beta,r}$ we have: 
\begin{align}
\mathbb{P} \Big \{ \sum_{i=1}^m  (R_{i,\beta} - \int r d\pi_{\beta,r})  > m t \Big \} & =  \mathbb{P} (  e^{\sum_{i=1}^m (\beta-1)  (R_{i,\beta} - \int r d\pi_{\beta,r})}  > e^{m(\beta-1) t} )  \nonumber\\
&\leq e^{- (\beta-1) mt} e^{m \psi_{R_{\beta}}(\beta-1)  } \nonumber\\
& \leq  e^{- (\beta-1) mt}  e^{m  \frac{\beta-1}{\beta} \left( D_{\beta}(\pi_{\beta,r}  || \pi_{\mathrm{ref}} )  -  \mathsf{KL} (\pi_{\beta,r} || \pi_{\mathrm{ref}})  \right) }\nonumber\\
& \leq e^{-m (\beta-1) \left( t  -  \frac{D_{\beta}(\pi_{\beta,r}  || \pi_{\mathrm{ref}} )  -  \mathsf{KL} (\pi_{\beta,r} || \pi_{\mathrm{ref}}) }{\beta} \right )}
\end{align}
Let $t_0>0$, hence we have for $\beta >1$: 

\[ \mathbb{P} \Big \{ \frac{1}{m} \sum_{i=1}^m  R_{i,\beta} >  \int r d\pi_{\beta,r} + t_0 +  \frac{D_{\beta}(\pi_{\beta,r}  || \pi_{\mathrm{ref}} )  -  \mathsf{KL} (\pi_{\beta,r} || \pi_{\mathrm{ref}}) }{\beta} \Big     \}  \leq e^{-m (\beta-1) t_0}\]
Now turning to $R_{\mathrm{ref}}= r_{\sharp} \pi_{\mathrm{ref}} $, since $R_{\mathrm{ref}} \in \mathsf{SubGauss}(\sigma^2_{\mathrm{ref}}) $ we have for every $t_0>0$ :
\[ \mathbb{P} \Big \{ - \frac{1}{m} \sum_{i=1}^m R_{i,\mathrm{ref}}   > - \int r d\pi_{\mathrm{ref}} +  t_0 \Big \} \leq e^{-\frac{mt^2_0}{2\sigma^2_{\mathrm{ref}}}}\]
Hence we have with probability at least $1-e^{-\frac{mt^2_0}{2\sigma^2_{\mathrm{ref}}}}-e^{-m (\beta-1) t_0}  $:
\begin{align*}
 \frac{1}{m} \sum_{i=1}^m  R_{i,\beta}     - \frac{1}{m} \sum_{i=1}^m R_{i,\mathrm{ref}} & \leq  \int r d\pi_{\beta,r} -  \int r d\pi_{\mathrm{ref}} + 2t_0 + \frac{D_{\beta}(\pi_{\beta,r}  || \pi_{\mathrm{ref}} )  -  \mathsf{KL} (\pi_{\beta,r} || \pi_{\mathrm{ref}}) }{\beta} \\
 &\leq \sqrt{2\sigma^2_{\mathrm{ref}} \mathsf{KL} (\pi || \pi_{\mathrm{ref}})} +  2t_0 + \frac{D_{\beta}(\pi_{\beta,r}  || \pi_{\mathrm{ref}} )  -  \mathsf{KL} (\pi_{\beta,r} || \pi_{\mathrm{ref}}) }{\beta}.
 \end{align*}
 \end{proof}

\section{Proofs for Transportation Inequalities and Rényi Divergence}

\begin{proposition}[Fenchel Conjugate Propreties] Let $F$ and $G$ be convex functions on a space $E$ and $F^*$, $G^*$ be their convex conjugates defined on $E^*$. We have:
\begin{enumerate}

\item  Let $F_{\gamma}(x) = \gamma F\left(  x \right) $ we have:
\begin{equation}
F^*_{\gamma}(p) = \gamma F^*\left(\frac{p}{\gamma}\right)
\end{equation}
\item Duality:
\begin{equation}
\min_{x\in E} F(x) + G(x) = \max_{p \in E^*} -F^*(-p) - G^*(p)
\end{equation}
\item Toland Duality:
\begin{equation}
\min_{x\in E} F(x) - G(x)  = \min_{p} G^*(p) -F^*(p)
\end{equation}
\end{enumerate}
\label{pro:fenchelConj}
\end{proposition}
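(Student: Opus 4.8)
The plan is to derive all three identities from the definition of the Fenchel conjugate, $F^*(p) = \sup_{x \in E}\left( \scal{p}{x} - F(x)\right)$, together with the Fenchel--Moreau biconjugation identity $G = G^{**}$, valid for proper, lower semi-continuous convex functions. Throughout item (1) I would take the scaling parameter $\gamma > 0$, which is the only regime used in the application (the weights $\alpha$, $1-\alpha$ appearing in \eqref{eq:primal} for $\alpha\in(0,1)$).

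For item (1) I would simply compute, for $\gamma > 0$,
\[
F_\gamma^*(p) = \sup_{x\in E}\left( \scal{p}{x} - \gamma F(x)\right) = \gamma \sup_{x \in E}\left( \scal{p/\gamma}{x} - F(x)\right) = \gamma F^*\!\left(\tfrac{p}{\gamma}\right),
\]
where the middle step just factors $\gamma>0$ out of the supremum; nothing beyond well-definedness of the conjugates is needed here.

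For item (2) (Fenchel--Rockafellar duality) the plan is to substitute $G = G^{**}$, i.e.\ $G(x) = \sup_{p\in E^*}\left(\scal{x}{p} - G^*(p)\right)$, so that
\[
\min_{x\in E} F(x) + G(x) = \min_{x\in E}\ \sup_{p\in E^*}\left( F(x) + \scal{x}{p} - G^*(p)\right),
\]
and then to exchange $\min_x$ with $\sup_p$. After the exchange the inner problem is $\min_{x}\left(F(x) + \scal{x}{p}\right) = -\sup_x\left(\scal{x}{-p} - F(x)\right) = -F^*(-p)$, which yields the right-hand side $\sup_{p}\left(-F^*(-p) - G^*(p)\right)$. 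The one genuinely nontrivial point, and the main obstacle, is justifying the $\min$--$\sup$ interchange: the inequality $\geq$ (weak duality) is automatic, while the reverse (strong duality) requires a qualification hypothesis such as $F,G$ proper lsc convex with $\operatorname{dom} F \cap \operatorname{int}\operatorname{dom} G \neq \varnothing$ (or continuity of one function at a point of the domain of the other). I would invoke the standard Fenchel--Rockafellar theorem here and note that in the paper's application the relevant functionals (positive-weight combinations of $\mathsf{KL}$ terms for $\alpha\in(0,1)$) are finite and continuous on the relevant domain, so the qualification holds.

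For item (3) (Toland duality) the plan is the same substitution $G = G^{**}$, which now turns the difference into a double infimum:
\[
\min_{x\in E} F(x) - G(x) = \min_{x\in E}\ \inf_{p\in E^*}\left( F(x) - \scal{x}{p} + G^*(p)\right) = \inf_{p\in E^*}\ \inf_{x\in E}\left( F(x) - \scal{x}{p} + G^*(p)\right),
\]
where exchanging the two infima is unconditional. The inner infimum equals $\inf_x\left(F(x) - \scal{x}{p}\right) = -F^*(p)$, so the right-hand side is $\inf_p\left(G^*(p) - F^*(p)\right)$, as claimed. Here the only structural requirement is $G = G^{**}$ (proper lsc convex $G$), and no constraint qualification is needed precisely because no $\inf$--$\sup$ swap occurs; this is what makes Toland duality cheaper than Fenchel--Rockafellar duality and why it can absorb the case $\alpha>1$, where the weight $1-\alpha$ is negative.
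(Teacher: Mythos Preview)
The paper does not actually prove this proposition; it is stated as a collection of standard facts and then invoked in the proof of Theorem~\ref{theo:Dual}. Your derivations are correct and are the standard arguments: item~(1) is an immediate rescaling of the supremum (for $\gamma>0$, which is the only case used), item~(2) is the usual Fenchel--Rockafellar scheme via $G=G^{**}$ and a $\min$--$\sup$ swap under a constraint qualification, and item~(3) is Toland's argument where the substitution $-G(x)=\inf_p\bigl(G^*(p)-\scal{x}{p}\bigr)$ reduces everything to a double infimum that can be exchanged freely. You also correctly flag that item~(2) needs a qualification hypothesis while item~(3) does not, which matches how the paper deploys them (convex sum for $\alpha\in(0,1)$, difference of convex for $\alpha>1$).
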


\begin{proof}[Proof of Theorem \ref{theo:Dual}]
 Let  $\gamma>0$ , let $F_{P,\gamma}(R)  = \gamma\mathsf{KL}(  R || P )$,  the Fenchel conjugate of $F_{P,1}(.)$ is defined  for $h$ bounded and measurable function as follows   $F^*_{P,1}(h ) = \log \mathbb{E}_{P} e^{h}.$ It follows by 1) in Proposition \ref{pro:fenchelConj}  that : $F^*_{P,\gamma}(h ) = \gamma F^*_{P,1}(\frac{h}{\gamma}) =  \gamma \log \mathbb{E}_{P} e^{\frac{h}{\gamma}} $.\\
\noindent  \underline{For $0<\alpha<1$:}
  The objective function in \eqref{eq:primal} is the sum of convex functions: $F_{P,\alpha}(R) + F_{Q,1-\alpha}(R) $, by (2) in Proposition \ref{pro:fenchelConj}, we have by duality:
  \begin{align*} 
  (1-\alpha) D_{\alpha}(P || Q) & =  \inf _{R } F_{P,\alpha}(R) + F_{Q,1-\alpha}(R)\\
  &= \sup_{h \in \mathcal{H}}   - F^*_{P,\alpha}(-h )  - F^*_{Q,1-\alpha}(h) \\
  &= \sup_{h \in \mathcal{H}}   - \alpha \log \mathbb{E}_{P} e^{-\frac{h}{\alpha}} - (1-\alpha) \log \mathbb{E}_{Q} e^{\frac{h}{1-\alpha}}
   \end{align*}
 Replacing $h$ by $(1-\alpha)(\alpha) h$ does not change the value of the sup  and hence we obtain:
 \begin{align*}
  (1-\alpha) D_{\alpha}(P || Q) &=  \sup_{h \in \mathcal{H}}   - \alpha \log \mathbb{E}_{P} e^{-\frac{(1-\alpha)(\alpha) h}{\alpha}} - (1-\alpha) \log \mathbb{E}_{Q} e^{\frac{(1-\alpha)(\alpha) h}{1-\alpha}}\\
  &=  \sup_{h \in \mathcal{H}}   - \alpha \log \mathbb{E}_{P} e^{- (1-\alpha)h}  -  (1-\alpha) \log \mathbb{E}_{Q} e^{\alpha h}.
 \end{align*}
 dividing by $\frac{1}{\alpha(1-\alpha)}$ both sides we obtain for $0<\alpha<1$:
 
 \[ \frac{1}{\alpha} D_{\alpha} ( P || Q) =  \sup_{h \in \mathcal{H}}    - \frac{1}{1-\alpha} \log \mathbb{E}_{P} e^{- (1-\alpha)h}  -  \frac{1}{\alpha} \log \mathbb{E}_{Q} e^{\alpha h}   \]
 
\noindent \underline{For $\alpha >1$: }  The objective function in \eqref{eq:primal} is the difference of convex functions: $F_{P,\alpha}(R) - F_{Q,\alpha-1}(R) $, by Toland Duality (3) in Proposition \ref{pro:fenchelConj} we have:
 \begin{align*} 
  (1-\alpha) D_{\alpha}(P || Q) & =  \inf _{R } F_{P,\alpha}(R)  - F_{Q,\alpha -1}(R)\\
  & = \inf_{h \in \mathcal{H}} F^*_{Q,\alpha -1}(h)  - F^*_{P,\alpha}(h)\\ 
  & = \inf_{h \in \mathcal{H}} (\alpha-1) \log\mathbb{E}_Qe^{\frac{h}{(\alpha-1)}} - \alpha \log \mathbb{E}_{P} e^{\frac{h}{\alpha}}
  \end{align*}
  The inf does not change when we replace $h$ by $\alpha(\alpha-1)h$, hence we have:
  \begin{align*} 
  (\alpha-1) D_{\alpha}(P || Q)  &= - \inf_{h \in \mathcal{H}} (\alpha-1) \log\mathbb{E}_Qe^{\frac{\alpha(\alpha-1)h }{(\alpha-1)}} - \alpha \log \mathbb{E}_{P} e^{\frac{\alpha(\alpha-1)h}{\alpha}}\\
  & = \sup_{h \in \mathcal{H}}  \alpha \log \mathbb{E}_{P} e^{(\alpha-1)h} - (\alpha-1) \log\mathbb{E}_Qe^{\alpha h }
  \end{align*}
dividing both sides  by $\frac{1}{\alpha(\alpha-1)}$ we obtain for $\alpha >1$:
\[ \frac{1}{\alpha} D_{\alpha} ( P || Q) =   \sup_{h \in \mathcal{H}}  \frac{1}{\alpha-1} \log \mathbb{E}_{P} e^{(\alpha-1)h} - \frac{1}{\alpha} \log\mathbb{E}_Qe^{\alpha h }.  \]
 \end{proof}

\begin{proof} [Proof of Lemma \ref{lem:varBRenyi} ]Adding and subtracting in the exponential $\int h dP $ and  $ \int hdQ$ resp we obtain the result:
$ \frac{1}{\alpha-1} \log \left( \int e^{(\alpha-1) h } dP \right)  - \frac{1}{\alpha} \log \left( \int e^{\alpha h } dQ \right) =  \frac{1}{\alpha-1} \log \left( \int e^{(\alpha-1) (h- \int h dP+\int h dP) } dP \right)  - \frac{1}{\alpha} \log \left( \int e^{\alpha (h-\int h dQ + \int h dQ) } dQ \right) = \int h dP - \int h dQ + \frac{1}{\alpha-1} \log \left( \int e^{(\alpha-1) ( h -\int h dP) } dP \right)  - \frac{1}{\alpha} \log \left( \int e^{\alpha ( h-   \int hdQ) } dQ \right) $

\end{proof}

\begin{proof}[ Proof of Lemma \ref{lem:limit0}]
Note that we have for $0<\alpha<1$, $\frac{1}{\alpha} D_{\alpha}( P || Q) = \frac{1}{1-\alpha}D_{1-\alpha}(Q || P)$ (See Proposition 2  in \cite{renyiPaper}). Taking limits we obtain $\lim_{\alpha \to 0 }\frac{1}{\alpha} D_{\alpha}( P || Q) =  D_1(Q || P )=  \mathsf{KL} ( Q || P).$
\end{proof}

\begin{proof} [Proof of Theorem \ref{theo:tailadapt} ] For $0 <\alpha<1$, we have for all $h \in \mathcal{H}$  : 

\begin{align}
\int h dP - \int h dQ &\leq \frac{1}{\alpha} D_{\alpha}(P || Q) +\frac{1}{1-\alpha} \log \left( \int e^{(\alpha-1) ( h -\int h dP) } dP \right) + \frac{1}{\alpha} \log \left( \int e^{\alpha ( h-   \int hdQ) } dQ \right)
\label{eq:renyi01}
\end{align}
Assuming $r$ is bounded $0<r<b$ then we have  $(r)_{\sharp} P - \mathbb{E}_{P} r$ and $(r)_{\sharp} Q - \mathbb{E}_{Q} r$ are sub-Gaussian with parameter $\sigma^2= \frac{b^2}{4}$. Hence we have for $\lambda \in \mathbb{R}$:
\[ \mathbb{E}_{P} e^{\lambda (r- \int r dP)} \leq \exp\left( \frac{\lambda^2\sigma^2_{P}}{2} \right) \text{ and }  \mathbb{E}_{Q} e^{\lambda (r- \int r dQ)} \leq \exp \left( \frac{\lambda^2\sigma^2_{Q}}{2}\right) ,  \]
Fix a finite $M>0$. For $0< \lambda< M$ and $P= \pi|x$ and $Q =\pi_{\mathrm{ref}}|x$, consider $h_{\lambda} = \lambda r$, thanks to subgaussianity and boundedness of $\lambda$, $h_{\lambda} \in \mathcal{H}$ for all $\lambda \in (0,M)$. Hence we have by Equation \eqref{eq:renyi01} for all $\lambda \in(0,M)$:

\[  \lambda \left(\int r dP- \int r dQ \right) \leq \frac{1}{\alpha} D_{\alpha}(P || Q) +\frac{1}{1-\alpha} \log \left( \int e^{\lambda(\alpha-1) ( r -\int r dP) } dP \right) + \frac{1}{\alpha} \log \left( \int e^{\lambda \alpha ( r-   \int rdQ) } dQ \right)
\]
we have by sub-Gaussianity: 
\begin{align*}
\frac{1}{1-\alpha} \log \left( \int e^{\lambda(\alpha-1) ( r -\int r dP) } dP \right) &  \leq \frac{1}{1-\alpha}  \frac{ \lambda^2 (1-\alpha)^2\sigma^2_{P}}{2} = \frac{\lambda^2(1-\alpha)\sigma^2_P }{2}\\
\frac{1}{\alpha} \log \left( \int e^{\lambda \alpha ( r-   \int rdQ) } dQ \right) & \leq \frac{1}{\alpha} \frac{\lambda^2\alpha^2 \sigma^2_Q}{2} =  \frac{\lambda^2\alpha \sigma^2_{Q}}{2} 
\end{align*}
It follows that for all $\lambda \in (0,M)$ 
\begin{align*}
  \lambda \left(\int r d\pi|x - \int r d\pi_{\mathrm{ref}}|x \right)& \leq \frac{1}{\alpha} D_{\alpha}(\pi|x ||  \pi_{\mathrm{ref}}|x ) + \frac{\lambda^2(1-\alpha)\sigma^2_P }{2} +  \frac{\lambda^2\alpha \sigma^2_Q}{2} \\
  &=  \frac{1}{\alpha} D_{\alpha}(\pi|x ||  \pi_{\mathrm{ref}}|x ) + \frac{\lambda^2 ((1-\alpha)\sigma^2_P + \alpha \sigma^2_Q ) }{2}
\end{align*}
Integrating over $x$ we obtain:
\begin{align*}
  \lambda \left(\int r d\pi - \int r d\pi_{\mathrm{ref}} \right) \leq \frac{1}{\alpha} D_{\alpha}(\pi  ||  \pi_{\mathrm{ref}} ) + \frac{\lambda^2 ((1-\alpha)\sigma^2_P + \alpha \sigma^2_Q )}{2}
  \end{align*}
Finally we have:
\begin{align*}
\int r d\pi - \int r d\pi_{\mathrm{ref}}  \leq \frac{1}{\lambda \alpha} D_{\alpha}(\pi  ||  \pi_{\mathrm{ref}} ) + \frac{\lambda ((1-\alpha)\sigma^2_P + \alpha \sigma^2_Q )}{2}
  \end{align*}
  minimizing over $\lambda \in(0,M) $:  we obtain $\lambda^*=  \sqrt{\frac{2 D_{\alpha} (\pi  ||  \pi_{\mathrm{ref}} ) }{ ((1-\alpha)\sigma^2_P + \alpha \sigma^2_Q ) \alpha}}$, $M$ is free of choice, choosing $M=2\lambda^*$, gives that $\lambda^*$ is the minimizer and hence we have for all $\alpha \in (0,1)$:
\begin{align*}
\int r d\pi - \int r d\pi_{\mathrm{ref}}  \leq \sqrt{\frac{2((1-\alpha)\sigma^2_P + \alpha \sigma^2_Q )  D_{\alpha} (\pi  ||  \pi_{\mathrm{ref}} )}{\alpha}}.
  \end{align*}  
  \end{proof}
  
  \section{Goodhart Laws}
  
  \begin{proof}[Proof of Proposition \ref{pro:TransferRL}]
We have by duality:
\[ \frac{1}{\beta}\log\left(\int e^{\beta r^*} d\pi_{\mathrm{ref}}\right) = \sup_{\nu} \int r^* d\nu - \frac{1}{\beta} \mathsf{KL} (\nu || \pi_{\mathrm{ref}})\]
hence for $\nu= \pi_{\beta,r}$ we have:
\[\frac{1}{\beta}\log\left(\int e^{\beta r^*} d\pi_{\mathrm{ref}}\right) \geq \int r^* d\pi_{\beta,r} - \frac{1}{\beta} \mathsf{KL} (\pi_{\beta,r} || \pi_{\mathrm{ref}} )
\]
Hence:
\begin{align*}
\int r^* d\pi_{\beta,r} \leq \frac{1}{\beta}\log\left(\int e^{\beta r^*} d\pi_{\mathrm{ref}}\right) + \frac{1}{\beta} \mathsf{KL} (\pi_{\beta,r} || \pi_{\mathrm{ref}} )
\end{align*}
On the other hand by optimality of $\pi_{\beta,r}$ we have:
\[ \mathsf{KL}\left(\pi_{\beta,r} || \pi_{\mathrm{ref}}\right)  = \beta \int r d\pi_{\beta,r} - \log\left( \int e^{\beta r} d\pi_{\mathrm{ref}}\right)\\
 \]
 Hence we have:
 \begin{align*}
 \int r^* d\pi_{\beta,r} &\leq \frac{1}{\beta}\log\left(\int e^{\beta r^*} d\pi_{\mathrm{ref}}\right) + \int r d\pi_{\beta,r} - \frac{1}{\beta} \log\left( \int e^{\beta r} d\pi_{\mathrm{ref}}\right) \leq \int r d\pi_{\beta,r} + \frac{1}{\beta} \log \left( \frac{\int e^{\beta r^*} d\pi_{\mathrm{ref}}}{\int e^{\beta r} d\pi_{\mathrm{ref}}}\right)
 \end{align*}
It follows that:
 \begin{align*}
 \int r^* d\pi_{\beta,r} - \int r^* d\pi_{\mathrm{ref}} & \leq \int r  d\pi_{\beta,r} -  \int r d\pi_{\mathrm{ref}}+  \frac{1}{\beta} \log \left( \frac{\int e^{\beta ( r^* -\int r^* d\pi_{\mathrm{ref}})} d\pi_{\mathrm{ref}}}{\int e^{\beta (r - \int r d\pi_{\mathrm{ref}})} d\pi_{\mathrm{ref}}}\right)
 \end{align*}
 \begin{align*}
\frac{\int e^{\beta ( r^* -\int r^* d\pi_{\mathrm{ref}})} d\pi_{\mathrm{ref}}}{\int e^{\beta (r - \int r d\pi_{\mathrm{ref}})} d\pi_{\mathrm{ref}}} &= \int e^{\beta(r^*-r - \left( \int r^* d\pi_{\mathrm{ref}} - \int r d\pi_{\mathrm{ref}} \right) }\frac{e^{\beta r} d\pi_{\mathrm{ref}}}{\int e^{\beta r } d\pi_{\mathrm{ref}} }\\
& = \int e^{\beta(r^*-r - \left( \int r^* d\pi_{\mathrm{ref}} - \int r d\pi_{\mathrm{ref}} \right) } d\pi_{\beta,r}
 \end{align*}
 Hence we have finally:
 \[  \int r^* d\pi_{\beta,r} - \int r^* d\pi_{\mathrm{ref}}  \leq \int r  d\pi_{\beta,r} -  \int r d\pi_{\mathrm{ref}}+  \frac{1}{\beta} \log \left(  \int e^{\beta(r^*-r - \left( \int r^* d\pi_{\mathrm{ref}} - \int r d\pi_{\mathrm{ref}} \right) } d\pi_{\beta,r} \right) \]
\[\int r^* d\pi_{\beta,r} - \int r^* d\pi_{\mathrm{ref}}  \leq \int r  d\pi_{\beta,r} -  \int r d\pi_{\mathrm{ref}}-  \frac{1}{\beta} \log \left(  \int e^{\beta(r-r^* - \left( \int r d\pi_{\mathrm{ref}} - \int r^* d\pi_{\mathrm{ref}} \right) } d\pi_{\beta,r^*} \right)  \]

The proof follows from using the subgaussianity of $r_{\sharp} \pi_{\mathrm{ref}}$ and the assumption on the soft max. 
 \end{proof}

  \begin{proof} [Proof of Proposition \ref{pro:goohardtBestofn}]
  \begin{align*}
\mathbb{E}_{\pi} (r^* -r) - \mathbb{E}_{\pi_{\mathrm{ref}}} (r^*-r) \leq 2 ||r-r^*||_{\infty} \mathsf{TV}(\pi,\pi_{\mathrm{ref}})
\end{align*}

For $\pi^{(n)}_{r,\mathrm{ref}}$, we have: 
\[ \mathbb{E}_{\pi^{(n)}_{r,\mathrm{ref}}} (r^*) - \mathbb{E}_{\pi_{\mathrm{ref}}} (r^*) \leq \mathbb{E}_{\pi^{(n)}_{r,\mathrm{ref}}} (r) - \mathbb{E}_{\pi_{\mathrm{ref}}} (r) +2 ||r-r^*||_{\infty} \mathsf{TV}(\pi^{(n)}_{r,\mathrm{ref}} ,\pi_{\mathrm{ref}}) \]
and 
\[ \mathbb{E}_{\pi^{(n)}_{r,\mathrm{ref}}} (r^*) - \mathbb{E}_{\pi_{\mathrm{ref}}} (r^*) \geq \mathbb{E}_{\pi^{(n)}_{r,\mathrm{ref}}} (r) - \mathbb{E}_{\pi_{\mathrm{ref}}} (r) -2 ||r-r^*||_{\infty} \mathsf{TV}(\pi^{(n)}_{r,\mathrm{ref}} ,\pi_{\mathrm{ref}}) \]
By the data processing inequality we have:
$  \mathsf{TV}(\pi^{(n)}_{r,\mathrm{ref}} ,\pi_{\mathrm{ref}}) \leq  \mathsf{TV}(R^{(n)}_{r,\mathrm{ref}}, R) =( \frac{1}{n})^{\frac{1}{n-1}}-  (\frac{1}{n})^{\frac{n}{n-1}}$
If $r$ has subguassian tails under $\pi_{\mathrm{ref}}$ than we have:
\[ \mathbb{E}_{\pi^{(n)}_{r,\mathrm{ref}}} (r^*) - \mathbb{E}_{\pi_{\mathrm{ref}}} (r^*) \leq \sqrt{2\sigma^2 \left( \log(n) - \frac{n-1}{n}\right)} +2 ||r-r^*||_{\infty}  \left( (\frac{1}{n})^{\frac{1}{n-1}}-  (\frac{1}{n})^{\frac{n}{n-1}} \right)\]
\[ \mathbb{E}_{\pi^{(n)}_{r,\mathrm{ref}}} (r^*) - \mathbb{E}_{\pi_{\mathrm{ref}}} (r^*) \leq \sqrt{2\sigma^2 \left( \log(n) - \frac{n-1}{n}\right)} +2 \inf_{r\in \mathcal{H}} ||r-r^*||_{\infty}  \left( (\frac{1}{n})^{\frac{1}{n-1}}-  (\frac{1}{n})^{\frac{n}{n-1}} \right).\]
\end{proof}
\end{document}